\definecolor{airforceblue}{rgb}{0.36, 0.54, 0.66}
\newcommand{\transpose}{{\mkern-1.5mu\mathsf{T}}}
\newcommand{\EE}{\mathbb{E}}
\newcommand{\EEE}[1]{\mathbb{E}_{#1}}
\newcommand{\RR}{\mathbb{R}}
\newcommand{\calO}{\mathcal{O}}
\newcommand{\calD}{\mathcal{D}}
\newcommand{\calS}{\mathcal{S}}
\newcommand{\calF}{\mathcal{F}}
\newcommand{\1}{\mathbf{1}}
\newcommand{\indic}{\mathds{1}}
\newtheorem{theorem}{Theorem}
\newtheorem{assumption}{Assumption}
\newtheorem{proposition}{Proposition}
\newtheorem{lemma}{Lemma}
\newtheorem{corollary}{Corollary}
\newtheorem{example}{Example}
\newtheorem{property}{Property}
\newcommand{\overbar}[1]{\mkern 1.5mu\overline{\mkern-1.5mu#1\mkern-1.5mu}\mkern 1.5mu}
\newcommand{\pushright}[1]{\ifmeasuring@#1\else\omit\hfill$\displaystyle#1$\fi\ignorespaces}
\newcommand{\pushleft}[1]{\ifmeasuring@#1\else\omit$\displaystyle#1$\hfill\fi\ignorespaces}
\definecolor{linkcolor}{RGB}{83,83,182}
\begin{document}

% If your paper is accepted and the title of your paper is very long,
% the style will print as headings an error message. Use the following
% command to supply a shorter title of your paper so that it can be
% used as headings.
%
%\runningtitle{I use this title instead because the last one was very long}

% If your paper is accepted and the number of authors is large, the
% style will print as headings an error message. Use the following
% command to supply a shorter version of the authors names so that
% they can be used as headings (for example, use only the surnames)
%
%\runningauthor{Surname 1, Surname 2, Surname 3, ...., Surname n}

\twocolumn[

\aistatstitle{Refined Convergence and Topology Learning for \\ Decentralized SGD with Heterogeneous Data}

\aistatsauthor{ Batiste Le Bars \And Aurélien Bellet \And  Marc Tommasi }
\aistatsaddress{ Univ. Lille, Inria, CNRS, Centrale Lille,
UMR 9189, CRIStAL, F-59000 Lille}
\aistatsauthor{ Erick Lavoie \And Anne-Marie Kermarrec}
\aistatsaddress{ Université de Bâle, Bâle, Switzerland \And EPFL, Lausanne, Switzerland}]

\begin{abstract}
  \looseness=-1 One of the key challenges in decentralized and federated learning is to design algorithms that efficiently deal with highly heterogeneous data distributions across agents. In this paper, we revisit the analysis of the popular Decentralized Stochastic Gradient Descent algorithm (D-SGD) under data heterogeneity. We exhibit the key role played by a new quantity, called \emph{neighborhood heterogeneity}, on the convergence rate of D-SGD. By coupling the communication topology and the heterogeneity, our analysis sheds light on the poorly understood interplay between these two concepts. We then argue that neighborhood heterogeneity provides a natural criterion to learn data-dependent topologies that reduce (and can even eliminate) the otherwise detrimental effect of data heterogeneity on the convergence time of D-SGD. For the important case of classification with label skew, we formulate the problem of learning such a good topology as a tractable optimization problem that we solve with a Frank-Wolfe algorithm. As illustrated over a set of simulated and real-world experiments, our approach provides a principled way to design a sparse topology that balances the convergence speed and the per-iteration communication costs of D-SGD under data heterogeneity.
\end{abstract}

\section{Introduction}

Decentralized and federated learning methods allow training from data stored
locally by
several
 agents (nodes) without exchanging raw data, in line with the increasing
 demand for more privacy-preserving algorithms \citep{kairouz2021advances}. One
of the key challenges in decentralized learning is to deal with data
 heterogeneity: as each agent collects its own data,
 local datasets typically exhibit different distributions. In this work, we
 study this challenge in the context of
fully decentralized learning algorithms, %, 
which provide a scalable and robust alternative to server-based
approaches
\citep{colin2016gossip,lian2017can,koloskova2019decentralized,koloskova20}. %Full decentralization however comes with its own challenges, such as understanding the
% 
%\aurelien{the next paragraphs should be compressed to highlight the key points (the choice of topology has a large impact on the performance, and unlike what was believed until recently, it is not only a matter of connectivity --mixing properties--: recent empirical studies suggest an interplay between the topology and the data distribution across nodes), so we arrive faster at our own contributions. some details could be moved to a related work section.}\bat{Done, j'ai surtout réduit la partie qui parle de D-clique et Neglia}
Fully decentralized optimization algorithms, such as
% gossip-based algorithms \citep{boyd2006randomized,loizou2021revisiting,lee2015asynchronous} 
the celebrated Decentralized SGD (D-SGD) 
\citep{lian2017can,Lian2018,koloskova20},
operate on a graph representing the communication topology, i.e. which pairs
of nodes exchange information with each other. The connectivity of the
topology then rules a trade-off between the convergence rate and the per-iteration
communication complexity of fully decentralized algorithms
\citep{wang2019matcha}.
% On the one hand, using a well-connected topology can approach the convergence rate of the faster centralized algorithms. On the other hand, the more connected the topology, the more interactions between nodes, and thus the higher the per-iteration communication costs.
Choosing a good topology for fully decentralized machine learning is therefore an important question, and remains a largely open problem in the presence of data heterogeneity.

%\citep{neglia2020decentralized}.

Until recently, the impact of the communication topology on the convergence
 was believed to be mainly characterized by its spectral gap: a large spectral gap indicating good connectivity and thus faster convergence. % the best topology is then the one that maximizes the spectral gap under some communication constraints \citep{boyd2006randomized,wang2019matcha,marfoq2020throughput,ying2021exponential}. %
Focusing solely on the connectivity of the topology
%
% independently of the data,
has however shown to be insufficient, even when we have identically distributed data \citep{neglia2020decentralized,vogels2022beyond}. %
In the heterogeneous setting, \cite{bellet2021d} 
notably observe that the choice of topology has a large influence, beyond its spectral gap, on the convergence speed of D-SGD. %and
 %provides a heuristic approach to design a relatively sparse but data-dependent topology for which D-SGD converges as fast as in the centralized setting. %They notably illustrate that the impact of data heterogeneity can be compensated by a sparse graph structure chosen in a data-dependent manner. %In other words, the overall connectivity matters but the arrangement of the nodes is also very important, and two graphs with the same spectral gap can have completely different performance.
% These results suggest an important link between the heterogeneity of the nodes distributions and the choice of graph topology.
% While providing very interesting conclusions and intuitions,
However, these empirical observations are not supported by any theory.
% work remains entirely empirical and needs to be theoretically understood.

\looseness=-1 In this work, we fill the theoretical gap that currently exists on
 these questions. We focus on D-SGD \citep{lian2017can,Lian2018,koloskova20}, which is arguably the most popular decentralized optimization algorithm in the context of machine learning due to its good properties inherited from centralized SGD.
 % but remain confident that our findings could be extended to other decentralized algorithms.
 In particular, D-SGD has been praised for its computational scalability \citep{lin2021quasi}, its applicability to training deep neural networks at scale \citep{ying2021exponential,kong2021consensus}, and the good generalization guarantees that it provides \citep{sun2021stability, zhu2022topology}. 
 
\looseness=-1 Our first contribution is a refined convergence
 analysis of D-SGD which introduces a new quantity, called \emph{neighborhood
 heterogeneity}, that couples the topology and the local data
 distributions.
Neighborhood heterogeneity essentially measures the expected distance between
 the \emph{global gradient} and the \emph{aggregated gradients in the neighborhood} of nodes.
 Our results demonstrate that the impact of the topology on the convergence
 rate of D-SGD, for both convex and non-convex objectives, does not only depend on its connectivity (i.e., spectral gap): it also depends on its capacity to compensate the heterogeneity of local data distributions at the neighborhood level.
 This new perspective allows to avoid the restrictive assumption of bounded
heterogeneity used in previous work \citep{lian2017can,Lian2018,tang2018d,assran2019stochastic,koloskova20,ying2021exponential}.

Our second contribution deals with the problem of learning a good 
\emph{data-dependent} topology, going beyond prior work which focused mainly on
 optimizing the spectral gap \citep{boyd2004fastest, boyd2006randomized,wang2019matcha}. % and the communication costs. %Contrary to the
 % few papers dealing with graph learning for decentralized algorithms \citep{boyd2006randomized}, our goal is not only to learn the weights of a graph whose structure is fixed in advance, but to learn the whole topology.
We argue that neighborhood heterogeneity provides a natural objective
and show that it can be effectively optimized in practice in the important
case of classification with label distribution heterogeneity across
nodes (\emph{label skew}) \citep{kairouz2021advances,quagmire,bellet2021d}. We
solve the resulting
 problem using a Frank-Wolfe algorithm \citep{Frank1956AnAF,jaggi2013revisiting}, allowing us to track the quality of the learned topology as new edges are added in a greedy manner.
%  This last property also highlights the trade-off between the convergence rate of D-SGD %under data heterogeneity
% and its per-iteration communication costs, which can be efficiently controlled with the proposed algorithm.
Our results imply that we can approximately minimize neighborhood heterogeneity up to a fixed additive error with a topology whose maximum degree is constant in the number of nodes.
To the best of our knowledge, our work is the first to learn the graph topology for decentralized learning in a way that (i) is data-dependent, (ii) controls communication costs, and (iii) optimizes the convergence rate of D-SGD.
We illustrate the usefulness of our approach in simulated and real data
experiments with linear and deep models.

\section{Related Work}
\label{sec:related}

\textbf{Consensus vs personalized objectives.}
In this work, we study the consensus problem which aims to learn a \emph{single} model that minimizes the average of the local objectives (see Eq.~\ref{eq:obj}).
Another line of research tackles the problem of heterogeneity in decentralized
learning through personalization \citep{koppel2017proximity,vanhaesebrouck2017decentralized,zantedeschi2020fully,Marfoq2021a,perso_even}. In that setting, each agent aims to learn a \emph{personalized} model that minimizes its own (expected) local objective.
% function, but still exchanges information with the neighbors that have a similar data distribution.
It is thus natural and desirable to connect nodes that have similar data distributions. In contrast, our results show that for the consensus problem, the topology
%   we study the consensus problem which aims to learn a \emph{single} model that minimize the average of the local objectives.
% In our work however, we focus on the consensus optimization problem, where a single model parameter is learned for \emph{all} agents. From the perspective of the used communication topologies, personalized approaches require similar nodes to be connected, whereas in our work, we show that consensus-based approaches 
should connect nodes that are different so that local neighborhoods are representative of the global distribution. We emphasize that personalization and consensus are relevant to different use cases and can be considered as orthogonal to each other. 

\looseness=-1 \textbf{Algorithmic improvements to decentralized SGD.}
Significant work has been devoted to extensions of D-SGD. We can mention those based on momentum 
\citep{assran2019stochastic,gao2020periodic,lin2021quasi,yuan2021decentlam}, cross-gradient aggregations \citep{cross_gradient}, gradient tracking \citep{koloskova2021improved} and bias correction (or variance reduction) \citep{tang2018d,yuan2020influence,yuan2021removing,huang2021improving}. Many of these schemes are able to reduce the order of the term that depends on data heterogeneity %constant $\bar{\zeta}$ in the convergence rates.
%However, a close inspection reveals that $\bar{\zeta}$ (or a similar term measured at the optimum or the initialization) remains present in the bounds, making their convergence rates still 
but remain impacted by strong heterogeneous scenarios.
We stress that the above line of research is complementary to ours as it is
based on modifications of the D-SGD algorithm (which often requires
additional computation and/or communication). In contrast, our work does not
modify the algorithm: we provide a refined analysis and a method to learn the
topology. We believe that our results can be combined with the above
algorithmic improvements, but leave such extensions for future work.

\textbf{Good topologies for decentralized learning.}
There is a long line of research on choosing a good topology (e.g., expanders or
exponential graphs) \citep{chow2016expander, nedic2018network,
 ying2021exponential}, or learning it to maximize the spectral gap 
\citep{boyd2004fastest, boyd2006randomized,wang2019matcha}
 or network throughput \citep{marfoq2020throughput}. %in order to maximize
 % the
 % mixing parameter $p$ (or equivalently, the spectral gap). %Prior
% research has studied the properties of classic topologies for decentralized
% optimization (e.g. grid, circle,
% exponential graphs) \cite{chow2016expander, nedic2018network,
% ying2021exponential}. Another line of work \cite{boyd2006randomized,wang2019matcha} learns the mixing matrix
% $W$ of a given topology by maximizing $p$. %, or more precisely the distribution over which $W$ is sampled at each step
%\cite{boyd2006randomized,wang2019matcha}. In these approaches, the topology is
%given and a subgraph is sampled at each step. The maximization of $p$ is done with respect to the sampling parameters, under communication constraints that must be satisfied.
Unlike our approach, these methods simply seek to optimize the connectivity of
 the topology while respecting some communication constraints, but they do not take into account the data distributions across nodes. %\marc[noinline]{The topic is to find a good balance between communication and computation, not only to increase the connectivity}

\looseness=-1 Until recently, \cite{bellet2021d} was the only approach
that leverages the distribution of
data in the design of the topology. Focusing on classification under label
 skew, they propose a heuristic approach that consists of inter-connected
 cliques, where class proportions in each clique should
 be as close as possible to the global proportions. %This corresponds to minimizing the first term in \eqref{eq:prop2} for each clique.
Our approach is more flexible: it can learn more general topologies, and
provides full control over their sparsity. Furthermore, our
topology learning criteria is theoretically justified, while the one in \cite{bellet2021d} is only supported by empirical experiments.
We think however that the ideas of the present paper could pave the way for a
theoretical analysis of their work.

Concurrent to and independently from our work, \cite{dandi2022data} provide a
similar analysis of the convergence rate of D-SGD using a quantity called
``relative heterogeneity''. However, our approaches differ greatly in how they learn the topology. In fact, \cite{dandi2022data} do not learn the topology itself (i.e., which nodes are connected) but only the weights of a predefined topology. In other words, the set of edges is fixed in advance. This severely limits the ability to mitigate the effect of data heterogeneity unless the predefined topology is dense. In contrast, our approach learns a sparse topology (both the edges and their associated mixing weights) in order to balance the convergence rate and the communication complexity of D-SGD.

\section{Preliminaries}
\label{sec:background}

\textbf{Problem setting.}
\looseness = -1 In decentralized federated learning, $n\in \mathbb{N}^\star$ agents (nodes) with their own data distribution seek to collaborate in order to solve a global consensus problem. Formally, the agents aim to learn a single parameter $\theta \in \RR^d$ so as to optimize the global objective \citep{lian2017can}: 
\begin{equation}
    \label{eq:obj}
    \textstyle
    f^* \triangleq \min_{\theta \in \RR^d}\big[f(\theta)\triangleq \frac{1}
    {n}\sum_{i=1}^n f_i(\theta)\big],
\end{equation}
where $f_i(\theta) \triangleq \EEE{Z_i \sim \calD_i}[F_i(\theta;Z_i)]$ is the local objective function associated to node $i$. The random vector $Z_i$ is drawn from the data distribution $\calD_i$ of agent $i$, having support over a space $\Omega_i$, and $F_i:\RR^d\times \Omega_i\rightarrow \RR$ is its \emph{pointwise} loss function (differentiable in its first argument). %
%This generic formulation covers a large spectrum of problems, including
%standard (empirical) risk minimization tasks found in statistical learning. 
Note that the distributions $\calD_i$ can be very different, which is
common in real applications 
\citep{kairouz2021advances}. From an optimization point of view, this means that a local optimum %\marc[noinline]{optimum?}
$\theta_i^\star\in \arg\min_{\theta}f_i(\theta)$ can be far from a global optimum $\theta^\star$ of (\ref{eq:obj}).

\looseness =-1 To collaboratively solve (\ref{eq:obj}) in a fully decentralized manner, the
agents communicate with each other over a directed graph. The graph topology
is represented by a matrix $W\in[0,1]^{n\times n}$, where $W_{ij}>0$ gives the
weight that agent $i$ gives to messages received from agent $j$, while
$W_{ij}=0$ (no edge) means that $i$ does not receive messages from $j$.
The choice of topology $W$ affects the trade-off between the convergence rate of decentralized optimization algorithms and the communication costs. Indeed, more edges imply higher communication costs but often faster convergence.
Communication costs, or \emph{per-iteration complexity}, are often regarded as proportional to the maximum (in or out)-degrees of nodes in the topology, representing the maximum (incoming or outcoming) load of a node \citep{lian2017can}:
\begin{equation}
\begin{aligned}
    d^{\text{in}}_{\text{max}}(W) & =\textstyle \max_i \sum_{j=1}^n \mathbb{I}[W_{ij} > 0], \\
d^{\text{out}}_{\text{max}}(W) & =\textstyle \max_i \sum_{j=1}^n \mathbb{I}[W_{ji} > 0].
\end{aligned}
\label{eq:degrees}
\end{equation}
From this perspective, the complete graph, and the star topology induced by server-based federated learning, yield high communication costs, as the maximum degree is $n-1$.

\textbf{Decentralized SGD.}
Decentralized Stochastic Gradient Descent (D-SGD) \citep{lian2017can,koloskova20} is a popular fully decentralized algorithm for solving problems of the form (\ref{eq:obj}). As mentioned above, such algorithms operate on a graph topology represented by the matrix $W\in [0,1]^{n\times n}$. In particular, D-SGD requires that $W$ is a \emph{mixing} matrix, i.e. doubly stochastic: $W\1 = \1$ and $\1^\transpose W = \1^\transpose$. \looseness = -1

% {\definition{(Mixing matrix) A matrix $W\in [0,1]^{n\times n}$ is a mixing matrix if it is doubly stochastic, i.e. $W\1 = \1$ and $\1^\transpose W = \1^\transpose$.}}

%\begin{wrapfigure}[12]{R}{0.5\textwidth}
%    \vspace{-23pt}
%    \begin{minipage}{0.5\textwidth}
      \begin{algorithm}[t]
        \caption{Decentralized SGD \citep{lian2017can}}\label{alg:d-sgd} 
        \begin{algorithmic}
            \Require Initialize $\forall i$, $\theta_i^{(0)} = \theta^{(0)} \in \RR^d$, iterations $T$, stepsizes $\{\eta_t\}_{t=0}^{T-1}$, mixing $\{W^{(t)}\}_{t=0}^{T-1}$.
            \For{$t=0,\ldots,T-1$}
            \For{each node $i=1,\ldots,n$ (in parallel)}
                \State Sample $Z_i^{(t)}\sim \calD_i$
                \State $\theta_i^{(t + \frac{1}{2})} \gets \theta_i^{(t)} - \eta_t\nabla F_i(\theta^{(t)}_i,Z^{(t)}_i)$% \Comment{Stochastic gradient step}
                \State $\theta_i^{(t+1)} \gets \sum^n_{j=1}W^{(t)}_{ij}\theta_j^{(t + \frac{1}{2})}$% \Comment{Neighborhood averaging step}
            \EndFor
            \EndFor            
        \end{algorithmic}
      \end{algorithm}
%    \end{minipage}
%\end{wrapfigure}
%It notably takes as input a set of mixing matrices used along the $T$ iterations.
In the rest of the paper, we will use the terms topology and mixing matrix interchangeably. 
For sake of generality, we consider a setting where the mixing matrix may change at each iteration \citep{koloskova20}.
% ,
On the other hand, we assume for simplicity that the mixing matrices are deterministic. % extensions to asynchronous versions with random topologies are possible \citep{Lian2018,koloskova20}. 
All our results can however be extended to random
mixing matrices, see Appendix~\ref{sec:random-mixing} for details.
% by adding an expectation with
% respect to $W$ in front all equations. The only sufficient requirement is that $W$ is drawn independently of the iterates of D-SGD.

D-SGD is summarized in Algorithm \ref{alg:d-sgd}. At iteration $t$, each node
 $i$ first updates its local estimate $\theta_i^{(t)}$ % by performing a
 % standard stochastic gradient step. This step requires the computation of
 based on $\nabla F_i(\theta^{(t)}_i,Z^{(t)}_i)$, the stochastic gradient
 of $F_i$ evaluated at $\theta^{(t)}_i$ with $Z^{(t)}_i$ sampled from $\calD_i$. Then, each node aggregates its current parameter value with its neighbors according to the mixing matrix $W^{(t)}$. %(neighborhood averaging step). % that is associated to the aforementioned graph. %In the framework of D-SGD, this weight matrix is also known as \textit{mixing} matrix.

\textbf{General assumptions.}
% \label{sec:general-ass}
We recall some standard assumptions extensively considered in decentralized
learning
\citep{bubeck2014convex, nguyen2019new,lian2017can,tang2018d,assran2019stochastic,li2019communication,kong2021consensus,ying2021exponential}.
% and needed for our analysis.

\begin{assumption} \emph{($L$-smoothness)}
    \label{ass:smoothness}
    There exists a constant $L>0$ such that for any $Z \in \Omega_i$, $\theta,
    \tilde{\theta}\in~\RR^d$ we have $\| \nabla F_i(\theta,Z) - \nabla F_i(\tilde{\theta},Z)\| \leq L \|\theta- \tilde{\theta}\|~$.
    % \begin{equation}
    %     \textstyle
    %     \label{eq:smooth}
    %     \| \nabla F_i(\theta,Z) - \nabla F_i(\tilde{\theta},Z)\| \leq L \|\theta- \tilde{\theta}\|~.
    % \end{equation}
\end{assumption}

\begin{assumption} \emph{(Bounded variance)}
    \label{ass:variance}
    For any node $i \in \llbracket 1,\ldots, n\rrbracket$, there exists a constant $\sigma_i^2>0$ such that for any $\theta \in \RR^d$, we have $\EEE{Z \sim \calD_i}\big[\left\|\nabla F_i(\theta,Z) - \nabla f_i(\theta)\right\|_2^2\big] \leq \sigma_i^2~$.
    % \begin{equation}
    %     \textstyle
    %     \label{eq:variance}
    %     \EEE{Z \sim \calD_i}\big[\left\|\nabla F_i(\theta,Z) - \nabla f_i(\theta)\right\|_2^2\big] \leq \sigma_i^2~.
    % \end{equation}
\end{assumption}

% This assumption ensures that the local stochastic gradients of each agent $i$ have uniformly bounded variance. 
% Note that if the stochastic gradients are built using mini-batches of samples from $\calD_i$, the variance $\sigma_i$ of the stochastic gradients can be made arbitrarily small by using large batch sizes. In particular, when $\calD_i$ is a distribution over a (finite) training set, using full batches imply $\sigma^2_i=0$.

\begin{assumption}\emph{(Mixing parameter)}
    \label{ass:rho}
    There exists a mixing parameter $p\in [0,1]$ such that for any matrix $M \in \RR^{d \times n}$, we have $\|MW^\transpose-\overbar{M}\|_F^2\leq (1-p)\|M-\overbar{M}\|_F^2~$, 
    % \begin{equation}
    %     \textstyle
    %     \label{eq:rho}
    %     \|MW^\transpose-\bar{M}\|_F^2\leq (1-p)\|M-\bar{M}\|_F^2~,
    % \end{equation} 
where $\|\cdot\|_F$ denotes the Frobenius norm and $\overbar{M} = M(\frac{1}{n}\1\1^\transpose)$. %is the column-wise\marc[noinline]{C'est couillon mais, c'est quoi le $\cdot$ ? Peut  être utiliser $M(\frac{1}{n}\1\1^\transpose)$ et pourquoi c'est column-wise dans cette expression? On calcule les moyennes des valeurs sur chaque ligne...} average of $M$ repeated $n$ times.
\end{assumption}

Assumption~\ref{ass:rho} measures how well an averaging step using a mixing matrix $W$ brings an arbitrary matrix $M$ closer to $\overbar{M}$. %This assumption covers a large variety of settings \citep{koloskova20,kong2021consensus} and 
It is always verified for $p=1-\lambda_2(W^\transpose W)$ with $\lambda_2(W^\transpose W)$ the second largest eigenvalue of $W^\transpose W$ \citep{boyd2006randomized} \looseness =-1 . %Recall that the spectral gap $\rho$ of $W$ corresponds to the absolute difference of the two largest eigenvalues (in magnitude) of $W$. Then, if $W$ is also symmetric, $p$ directly relates to the classical spectral gap $\rho$ of $W$ by the relation $p=1-(1-\rho)^2$ \citep{kong2021consensus}.

\section{Joint Effect of Topology and Data Heterogeneity}
\label{sec:topo-vs-heterogeneity}
%\aurelien{I think we can find a better title. Maybe something along the lines
%of
%``Joint effect of topology and data heterogeneity''}

In this section, we introduce a new quantity, called \emph{neighborhood
heterogeneity}, and derive new convergence rates for D-SGD that depend on this quantity.
% rather than on a simple heterogeneity bound.
These rates have several nice properties: (i) they hold under weaker assumptions than previous work (unbounded local heterogeneity), (ii) they highlight the interplay between the topology and the heterogeneous data distribution across nodes, and (iii) they provide a criterion for choosing topologies not only based on their mixing properties but also based on data. \looseness=-1
\subsection{Neighborhood Heterogeneity}
\label{sec:our_quantity}

Given a mixing matrix $W$, our notion of neighborhood heterogeneity measures
the expected distance between the \emph{aggregated gradients in the neighborhood of a node} (as weighted by $W$) and the \emph{global average of gradients}. In our analysis, we will assume this distance to be bounded.

\begin{assumption}[Bounded neighborhood heterogeneity] 
        \label{ass:new}
        There exists a constant $\bar{\tau}^2>0$ such that  $\forall \theta \in \RR^d$: 
        \begin{equation}
            \label{eq:new}
            \frac{1}{n}\sum_{i=1}^n\EE\,\Big\|\sum_{j=1}^nW_{ij}\nabla
F_j(\theta,Z_j) - \frac{1}{n}\sum_{j=1}^n\nabla F_j
(\theta,Z_j)\Big\|_2^2 \leq \bar{\tau}^2.
        \end{equation}
    \end{assumption}

%In the following, we denote by $H$  the left-hand term of Equation \ref{eq:new}, which corresponds to the neighborhood heterogeneity.
To better understand Assumption \ref{ass:new}, we can upper-bound the left-hand term of the previous equation, denoted $H(\theta)$, using a bias-variance decomposition. This leads to the following bound:
\begin{equation}
\begin{aligned}
    H(\theta) %& = \frac{1}{n}\sum_{i=1}^n\|\sum_{j=1}^nW_{ij}^{(t)}\nabla f_j(\theta) - \nabla f(\theta)\|_2^2 + \frac{1}{n}\sum_{i,j}(W^{(t)}_{ij}-\frac{1}{n})^2\EE\|\nabla F_j(\theta,Z_j)-\nabla f_j(\theta)\|_2^2 \nonumber \\
    \leq \frac{1}{n}\sum_{i=1}^n\Big\|\sum_{j=1}^n & W_{ij}\nabla f_j(\theta) - \nabla f(\theta)\Big\|_2^2 \\
    & + \frac{\sigma_{\max}^2}{n}\|W-\frac{1}{n}\1\1^\transpose\|_F^2~, \label{eq:H_bias_variance}
\end{aligned}
\end{equation}
with $\sigma_{\max}^2=\max_i\sigma_i^2$. This upper bound contains two terms. The first one is a \emph{bias term},
 related to the heterogeneity of the problem. It essentially measures how the
 gradients of local objectives
 % functions $f_1,\dots,f_n$
 differ from the
 gradient of the global objective when they are aggregated at the
 neighborhood level of the topology through $W$. The second one is a \emph{variance term} closely related %\marc{Pourquoi closely related? On fait cette hypothèse et on peut dire explicitement qu'on prend $M=I$ et $\|I-1/n1^\top1\|^2=n-1$. } 
to the mixing parameter $p$ of Assumption \ref{ass:rho}: we can show that it is
upper bounded by $\sigma_{\max}^2(1-p)$ and
% classically taking $p=1-\lambda_2(W^\transpose W)$, it is also
lower bounded by $\sigma_{\max}^2(1-p)/n$, see
Proposition~\ref{prop:relation-p-frob} in Appendix \ref{app:other-proofs}.%
% \marc{C'est con mais je ne vois plus directement pourquoi on a cette lower bound}
% where, when not specified, the expectation is taken with respect to $(Z_1,\ldots,Z_n) \sim \calD_1\otimes\ldots\otimes\calD_n$.
% Formally, it states that the local averages of stochastic gradients are close, in expectation, to the global average i.e. the stochastic gradient of the global objective function. 
% Neighborhood heterogeneity $H$ measures how far the gradient computed in the neighborhood of a node is from the gradient of the global objective.
%
%In the rest of our analysis, we assume neighborhood heterogeneity to be bounded. For simplicity, we may also write it $H=H(\theta)$.

% In what follows, we present the last assumption needed in our analysis. This is a more refined and less restrictive condition asking the local averaging of stochastic gradients to be not too far from the global averaging. Somehow, this assumption also links the weight matrix $W$ with the underlying distributions.

% \aurelien{find good name. Neighborhood/local heterogeneity/diversity?}

\textbf{Comparison to classic bounded heterogeneity assumption.}
In our analysis, we use Assumption~\ref{ass:new} in replacement of the 
\emph{bounded local heterogeneity} condition used in previous literature 
\citep{lian2017can,Lian2018,assran2019stochastic,koloskova20,ying2021exponential}. We recall it below.
% can also be related with the assumptions used in previous literature. For this to be clear, we recall the extensively used \emph{bounded local heterogeneity} condition \citep{koloskova20,lian2017can,tang2018d,wang2019matcha,ying2021exponential}.%\aurelien{we can probably add more refs here}

\begin{assumption}[Bounded local heterogeneity]\label{ass:heterogeneityA}
There exists a constant $\bar{\zeta}^2>0$ such that $\frac{1}{n}\sum_{i=1}^n\|\nabla f_i(\theta) - \nabla f(\theta)\|_2^2\leq \bar{\zeta}^2,$ $\forall \theta \in \RR^d$.
    % \begin{equation}
    %     \textstyle
    %     \label{ass:heterogeneity}
    %     \frac{1}{n}\sum_{i=1}^n\|\nabla f_i(\theta) - \nabla f(\theta)\|_2^2\leq \bar{\zeta}^2~, \quad \forall \theta \in \RR^d.
    % \end{equation}
\end{assumption}

Assumption~\ref{ass:heterogeneityA} has the same form as the bias term of in Equation \eqref{eq:H_bias_variance} but considers $W=I$ (i.e., it does not depend on the
topology). It requires that the local
gradients should not be too far from
the global gradient:
% it is a fundamental property of the statistical learning problem at hand:
the more heterogeneous the nodes' distribution (and objectives), the bigger $\bar{\zeta}^2$.
In contrast, neighborhood heterogeneity takes into account the mixture of
gradients in the neighborhoods defined by $W$.
% (how much the aggregated gradients in the neighborhood of a node differ from the global gradients).
% neighborhood heterogeneity takes into account the mixing matrix $W$ to measure heterogeneity at the neighborhood level.
Crucially, Assumption~\ref{ass:new} is more flexible than
Assumption~\ref{ass:heterogeneityA}. More precisely, our set of assumptions (Assumptions~\ref{ass:variance}-\ref{ass:new}) is less restrictive than those in
previous work (Assumptions~\ref{ass:variance}, \ref{ass:rho}, 
\ref{ass:heterogeneityA}). To see this, we first show that our set of
assumptions is implied by the latter (proof in Appendix~\ref{app:other-proofs}).

%\textbf{Discussion.} 
%At first sight, it can be difficult to see how this condition can be verified. It is in fact always true  when considering the classical hypotheses of the literature presented in section \ref{sec:general-ass}. This statement is highlighted by the following proposition, which shows that classically considered assumptions imply Assumption \ref{ass:new}.

\begin{proposition}
    \label{prop:H_bound}
    Let Assumptions \ref{ass:variance}-\ref{ass:rho} and \ref{ass:heterogeneityA} to be verified. Then Assumption~\ref{ass:new} is satisfied with
    % \begin{equation*}
    %     H^{(t)}=\frac{1}{n}\sum_{i=1}^n\EE\left\|\sum_{j=1}^nW^{(t)}_{ij}\nabla F_j(\theta,Z_j) - \frac{1}{n}\sum_{j=1}^n\nabla F_j(\theta,Z_j)\right\|_2^2 \leq (1-p)\left(\bar{\zeta}^2 + \bar{\sigma}^2\right)~,
    % \end{equation*}
    $\bar{\tau}^2 = (1-p)\left(\bar{\zeta}^2 + \bar{\sigma}^2\right)$,
    where $\bar{\sigma}^2\triangleq\frac{1}{n}\sum_i\sigma_i^2$.
\end{proposition}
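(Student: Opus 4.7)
The plan is to identify $H$ with the squared Frobenius contraction of a gradient matrix under $W$, then use a bias--variance split to separate the heterogeneity term from the stochastic noise term, and apply Assumption~\ref{ass:rho} to each piece.

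Concretely, I would stack the stochastic gradients into a matrix $G \in \RR^{d\times n}$ with $j$-th column $\nabla F_j(\theta,Z_j)$ and rewrite the quantity in~(\ref{eq:new}) as $nH = \EE\,\|GW^\top - \bar G\|_F^2$ (using the convention $\bar M = M\cdot \tfrac{1}{n}\1\1^\top$). Then I decompose $G = g + \tilde g$, where $g$ is the \emph{deterministic} matrix with columns $\nabla f_j(\theta)$ and $\tilde g$ has zero-mean columns $\nabla F_j(\theta,Z_j) - \nabla f_j(\theta)$ with $\EE\|\tilde g_{\cdot j}\|^2 \le \sigma_j^2$ by Assumption~\ref{ass:variance}. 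Because $\EE[\tilde g] = 0$ and $g$ is deterministic, the cross term vanishes and
\begin{equation*}
nH \;=\; \|g W^\top - \bar g\|_F^2 \;+\; \EE\,\|\tilde g W^\top - \bar{\tilde g}\|_F^2.
\end{equation*}

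Now I would invoke Assumption~\ref{ass:rho} on each term. The first term is bounded by $(1-p)\|g - \bar g\|_F^2 = (1-p)\sum_j \|\nabla f_j(\theta) - \nabla f(\theta)\|_2^2 \le (1-p)n\bar\zeta^2$ thanks to Assumption~\ref{ass:heterogeneityA}. For the second, I would apply Assumption~\ref{ass:rho} inside the expectation and use the elementary fact $\|\tilde g - \bar{\tilde g}\|_F^2 \le \|\tilde g\|_F^2$ (since subtracting a column average is an orthogonal projection); taking expectation column by column with Assumption~\ref{ass:variance} yields the bound $(1-p) \sum_j \sigma_j^2 = (1-p)n\bar\sigma^2$. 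Summing and dividing by $n$ gives $H \le (1-p)(\bar\zeta^2 + \bar\sigma^2) = \bar\tau^2$.

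The one subtlety — and the only place I would pause to justify carefully — is that the aggregation $\sum_j W_{ij}\nabla F_j$ corresponds to multiplying by $W^\top$ on the right rather than $W$, so Assumption~\ref{ass:rho} has to be read with this transpose. This is harmless because $W$ is doubly stochastic, so $W^\top$ is again doubly stochastic and has the same Frobenius contraction coefficient as $W$ (the bound $p = 1 - \lambda_2(W^\top W)$ is symmetric in $W$ and $W^\top$ since $W^\top W$ and $WW^\top$ share their non-zero spectrum). Everything else is a clean bias--variance decomposition, essentially the one already previewed by the authors in~(\ref{eq:H_bias_variance}); the only added ingredient is that the variance part is further contracted by $(1-p)$ once one applies the mixing assumption before bounding the noise.
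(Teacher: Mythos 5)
Your proof is correct and follows essentially the same route as the paper's: both arguments rest on the Frobenius contraction of Assumption~\ref{ass:rho} combined with a bias--variance split of the stochastic gradient matrix, then invoke Assumptions~\ref{ass:heterogeneityA} and~\ref{ass:variance} on the bias and noise pieces respectively. The only (immaterial) difference is the order of operations --- the paper contracts first and then decomposes $\EE\|\nabla F(\theta)-\overline{\nabla F(\theta)}\|_F^2$ column by column using independence (which yields the slightly tighter intermediate factor $\tfrac{n-1}{n}\bar{\sigma}^2$ before loosening), whereas you decompose first and contract each term, and your explicit justification of the $W$ versus $W^{\transpose}$ indexing is in fact more careful than the paper's.
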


%A question that rises next is, with or without assuming Equation \ref{ass:heterogeneity}, can a matrix $W$ allows $\bar{\tau}^2$ to be small?  

We now show that our set of assumptions (\ref{ass:variance}-\ref{ass:new}) is strictly more general than Assumptions~\ref{ass:variance}, \ref{ass:rho}, 
\ref{ass:heterogeneityA} by identifying situations where Assumption ~\ref{ass:new} is verified while Assumption~\ref{ass:heterogeneityA} is not. 
% o sets of assumptions are not equivalent and that ours is clearly less restrictive, we need to exhibit at least one situation where Assumption \ref{ass:heterogeneityA} is not verified whereas Assumption \ref{ass:new} is.
%\aurelien{dommage de ne pas discuter ici du lien entre le deuxième terme et le mixing parameter, et donc du fait que notre critère comprend un terme d'hétérogénéité et un terme de mixing lié également à la variance locale des gradients, avec les deux termes qui peuvent etre controles par le graphe. Je me demande s'il ne faudrait pas organiser cette partie un peu différemment: d'abord une interprétation de $H$, à l'aide du bias-variance decomposition. puis seulement ensuite, la comparaison avec l'hypothèse classique.}
%\marc[noinline]{Comme on borne $H$ et pas directement la décomposition biais variance qui est un upper bound de $H$, il y a un tout petit hiatus mais je ne sais pas si c'est important.} As $W$ is doubly stochastic, the second term in Equation 
%\eqref{eq:H_bias_variance} is always bounded.
% We must therefore ensure that the first term can also be bounded,
%This first term can also be bounded even if Assumption 
%\ref{ass:heterogeneityA} is not verified. 
A trivial example is the
complete graph $W=\frac{1}{n}\1\1^\transpose$, for which we have $
\bar{\tau}^2 =0$, regardless of heterogeneity. % However, this graph is obviously not appropriate for decentralized training since one should rather consider sparse graphs. %Demonstrating existence and small value of $\bar{\tau}^2$ for arbitrary structure highly depends on the model settings such as the distributions and the objective functions.
More interestingly, some combinations of \emph{sparse} topologies and
 data distributions can ensure that $\bar{\tau}^2$ remains small while
$\bar{\zeta}^2$ can be arbitrary large. We give a simple example below (detailed derivations in Appendix \ref{app:examples}). \looseness = -1 %(detailed derivations can be found in Appendix \ref{app:examples}).
% In what follows, we describe a simple example where the local heterogeneity $\bar{\zeta}^2$ can be arbitrary large, while keeping the existence of $\bar{\tau}^2$, even with a sparse graph.

%In fact, taking $\bar{\tau}^2 = (1-p)\left(\bar{\zeta}^2 + \bar{\sigma}^2\right)$ is also very restrictive and we can hope $\bar{\tau}^2$ to be smaller, even with sparse graph. In the following example, ge give 

\begin{example}[Two clusters and a ring topology]
\label{ex:ring}
%\aurelien{the example is quite vague. make it more clear and convincing by formally defining the loss functions and the distributions, and the values of $\bar{\zeta}^2$.}
Let $n$ be an even number and assume $Z_i\sim \calD_i \triangleq \mathcal{N}
(m,\tilde{\sigma}^2)$ if $i$ is odd and $Z_i\sim  \calD_i \triangleq 
\mathcal{N}(-m,\tilde{\sigma}^2)$ if $i$ is even. Let $
\tilde{\sigma}^2<+\infty$ (necessary to have Assumption~\ref{ass:variance})
and $m>0$ potentially asymptotically large. We fix $F_i
(\theta,Z_i) = (\theta - Z_i)^2$ (mean estimation). Consider a ring topology
that alternates between one odd node and one even node, with the diagonal and
off-diagonal entries of $W$ equal to $1/2$ and $1/4$ respectively. Then we
have
% puts the first term in equation \eqref{eq:H_bias_variance} equal to $0$. More precisely, such setup allows to replace the different parameters of the assumptions by
$\bar{\tau}^2 = \sigma^2_i = 4\tilde{\sigma}^2 < +\infty$, while $\bar{\zeta}^2=4m^2$ can be arbitrarily large as $m$ grows.
% Consider that all objective functions $F_i$ are the same and assume that half the nodes are drawn from a certain distribution $\calD_1$, the other half from $\calD_2$. %Assume further that Assumption \ref{ass:variance} is satisfied. 
% Then, the ring graph that alternates between one node drawn from $\calD_1$ and one node drawn from $\calD_2$, having diagonal weights equal to $\frac{1}{2}$ and off-diagonal weights equal to $\frac{1}{4}$ puts the first term in equation \eqref{eq:H_bias_variance} equal to $0$. This is true no matter how different the distributions $\calD_1$ and $\calD_2$ are, and it results in $H\leq \sigma_{\max}^2$.
\end{example}

% Detailed model parametrization, calculation and analysis for this example can be found in Appendix \ref{app:examples}. It clearly exhibits the situation we were looking for: our set of assumptions \ref{ass:convexity}-\ref{ass:new} can be satisfied even when Assumption \ref{ass:heterogeneityA} is not.

This illustrates that an appropriate topology, even as sparse as a ring, can
control $\bar{\tau}^2$ and mitigate the underlying heterogeneity of the problem.
% Nevertheless, the possibility of finding such a graph highly depends on the statistical model, the objective functions and their knowledge.
In Section~\ref{sec:graph_learn}, we will show that we can learn a sparse
topology $W$ that (approximately) minimizes the neighborhood heterogeneity bound $\bar{\tau}^2$. Before that, we validate the relevance of our new
Assumption~\ref{ass:new} by deriving a novel convergence result for D-SGD.

\subsection{Convergence Analysis}

We now present the main theoretical result of this section: two new non-asymptotic convergence results for D-SGD under Assumption~\ref{ass:new}. %The first one considers the classical setting where the local objectives $f_i$'s are convex functions and the other one focuses on the more general non-convex scenario. 
The proof of this theorem is given in Appendix~\ref{app:proof_thme}.

\begin{theorem}
    \label{thme:main}
    Consider Algorithm~\ref{alg:d-sgd} with mixing matrices $W^{(0)},\ldots,W^{(T-1)}$ satisfying Assumptions~\ref{ass:rho} and \ref{ass:new}. Assume further that Assumptions \ref{ass:smoothness}-\ref{ass:variance} are respected, and denote $\bar{\theta}^{(t)} \triangleq \frac{1}{n}\sum_{i=1}^n \theta_i^{(t)}$. For any target accuracy $\varepsilon>0$, there exists a constant stepsize $\eta\leq \eta_{\max}=\frac{p}{8L}$ such that:

    \begin{minipage}{0.45\textwidth}    
    \textbf{Convex case:} \\

    \vspace*{-5pt}
    $\frac{1}{T+1}\sum_{t=0}^T\EE (f(\bar{\theta}^{(t)}) - f^\star) \leq \varepsilon$ as soon as 
    \begin{equation}
        \label{eq:Orate}
        %\textstyle
       T \geq \calO\Big(\frac{\bar{\sigma}^2}{n \varepsilon^2} + \frac{\sqrt{L}\bar{\tau}}{p \varepsilon^\frac{3}{2}} + \frac{L}{p\varepsilon}\Big)r_0~,
    \end{equation} 
    \end{minipage}%
     \hfill
     \noindent
     \begin{minipage}{0.45\textwidth}    
    \textbf{Non-convex case:} \\

    \vspace*{-5pt}
    $\frac{1}{T+1}\sum_{t=0}^T\EE \|\nabla f(\bar{\theta}^{(t)}\|_2^2 \leq \varepsilon$ as soon as 
    \begin{equation}
        \label{eq:OrateNC}
       T \geq \calO\Big(\frac{L\bar{\sigma}^2}{n \varepsilon^2} + \frac{L\bar{\tau}}{p \varepsilon^\frac{3}{2}} + \frac{L}{p\varepsilon}\Big)f_0~,
    \end{equation}
\end{minipage}

where $T$ is the number of iterations, $r_0=\|\theta^{(0)}-\theta^\star\|_2^2$, %
$f_0 = f(\theta^{(0)}) - f^\star$ and $\calO(\cdot)$ hides the numerical constants explicitly provided in the proof.
\end{theorem}

\textbf{Analysis and comparison to prior results.}
To put the above theorem into perspective, recall that Centralized (Parallel)
Stochastic Gradient Descent 
(C-PSGD) %
is equivalent to D-SGD with the mixing matrix $W=\frac{1}{n}\1\1^\transpose$ (complete graph). For this specific case, it has been shown that in the convex scenario, an accuracy $\varepsilon$ is achieved after
$T\geq \calO(\frac{\bar{\sigma}^2}{n\varepsilon^2}+\frac{L}
{\varepsilon})$ iterations \citep{dekel2012optimal,bottou2018optimization, stich2020error}.
On the other hand, existing results for D-SGD (under
Assumption~\ref{ass:heterogeneityA} instead of Assumption~\ref{ass:new}) require
$T\geq \calO(\frac{\bar{\sigma}^2}{n\varepsilon^2}+\frac{\sqrt{L(1-p)}(\bar{\zeta}+\bar{\sigma}\sqrt{p})}{p\varepsilon^{3/2}}+\frac{L}{p\varepsilon})$ iterations \citep{koloskova20}.
 %
%Note that asymptotically, when $\varepsilon$ is sufficiently small (or the number of iterations $T$ is
%sufficiently large), these rates are equivalent and dominated by the term $\frac{\bar{\sigma}^2}{n\varepsilon^2}$, which provides a linear speed up in the number of nodes $n$.
% 
%Importantly, the speed at which D-SGD becomes equivalent to C-PSGD highly depends on the constants in the middle term.

The first thing to note is that rate \eqref{eq:Orate} is consistent with the
above rates. When the complete graph topology $W=\frac{1}
{n}\1\1^\transpose$ is used at each iteration we have $\bar{\tau}=0$ and
 $p=1$, which allows us to recover the rate of the communication-inefficient C-PSGD. Furthermore,
 considering the classical Assumption~\ref{ass:heterogeneityA} and using
 Proposition \ref{prop:H_bound} gives the looser bound $\calO(\frac{
\bar{\sigma}^2}{n\varepsilon^2}+\frac{\sqrt{L(1-p)}(\bar{\zeta}+\bar{\sigma})}
{p\varepsilon^{3/2}}+\frac{L}{p\varepsilon})$ which is equivalent to the rate of D-SGD in \cite{koloskova20}. Similarly, the rate \eqref{eq:OrateNC} obtained for non-convex objectives is also consistent with  \cite{koloskova20}. %

Crucially, recall that in the heterogeneous setting $\bar{\tau}$ can be much smaller than $\sqrt{1-p}(\bar{\zeta}+\bar{\sigma})$ (see Section \ref{sec:our_quantity}), which makes our bounds sharper.
This is because the topology now influences the convergence rate in
 Theorem \ref{thme:main} via both the mixing parameter $p$ and $\bar{\tau}$. This is of particular significance in situations where
 communication constraints are strong so that the topology connectivity has to
 be low (i.e., $p$ close to $0$). In that case, prior rates are heavily
 impacted by data heterogeneity as $p$ can no longer compensate for it. In
 contrast, we can expect that a well-chosen sparse topology can achieve small $\bar{\tau}$ and thus mitigate the impact of data heterogeneity. To highlight this, we can go back to Example~\ref{ex:ring}. For the chosen ring topology, we have $p=\Theta(\frac{1}{n^2})$, but the specific arrangement of nodes and the weights in $W$ still allow a small bound $\bar{\tau}^2$ on neighborhood heterogeneity. \looseness = -1%

% Given the rates (\ref{eq:Orate}) and \eqref{eq:OrateNC}, the smaller $\bar{\tau}$, the fewer iterations are needed before becoming equivalent to the rate of C-PSGD. %
% This motivates the idea of learning a sparse topology
% that minimizes (an upper bound of) the neighborhood heterogeneity $H$. This is
% the focus of the next section.
% 

\section{Learning the Topology}
\label{sec:graph_learn}

In the previous rates (\ref{eq:Orate}) and \eqref{eq:OrateNC}, the smaller the bound $\bar{\tau}^2$ on neighborhood heterogeneity, the fewer iterations needed to reach an error $\varepsilon$. %
This motivates the idea of learning a \emph{sparse} topology $W$
that \emph{approximately} minimizes neighborhood heterogeneity (Equation~\eqref{eq:new}), in order to control the trade-off between
the convergence rate and the per-iteration communication
complexity given in Equation~\eqref{eq:degrees}.
However, minimizing neighborhood heterogeneity in the general setting appears
to be challenging without further statistical assumptions, as Equation~\eqref{eq:new} should hold for all $\theta\in\mathbb{R}^d$.
Below, we focus on \emph{classification with label skew}, and show that Equation~\eqref{eq:new} simplifies to a more tractable quantity. \looseness=-1
\subsection{Statistical Learning with Label Skew}
\label{sec:framework_label_skew}

Label skew is an important type of data heterogeneity in federated
classification problems \citep{kairouz2021advances,quagmire,bellet2021d}.
In this setting, each agent $i$ is associated with a random variable $Z_i = (X_i,Y_i)\sim \calD_i$ where $X_i\in\mathbb{R}^q$ represents the feature vector and $Y_i\in \llbracket 1,\ldots, K\rrbracket$ the associated class label. The agents aim to learn a classifier $h_\theta:\mathbb{R}^q\rightarrow \llbracket 1,\ldots, K\rrbracket$ parameterized by $\theta\in\mathbb{R}^p$ such that $h_\theta(X_i)$ is a good predictor of $Y_i$ for all $i$.
The heterogeneity of the distributions $\{\calD_i\}_{i=1}^n$ comes only from a
\emph{difference in the label distribution} $P_i(Y)$ i.e. $\calD_i = P_i(X,Y)=P(X|Y)P_i(Y)$.
 For simplicity, we assume that all agents use the same pointwise loss
 function ($F_i=F$ for all $i$), which is typically the cross-entropy.

Under the above framework, we can derive
a neighborhood heterogeneity bound $\bar{\tau}^2$ that can effectively be minimized with respect to $W$. %, as shown by the
% following proposition.

\begin{proposition}[Bounded neighborhood heterogeneity under
label skew]\label{prop:upper-bound-H}
    Consider the statistical framework defined above and assume there exists $B>0$ such that
    $\forall k=1,\ldots,K$ and $\forall\theta \in\RR^d$, $\|\EE_X[\nabla F(\theta ; X,Y)|Y=k] - \frac{1}{K}\sum_{k^\prime = 1}^K \EE_X[\nabla F(\theta ; X,Y)|Y=k^\prime]\|_2^2\leq B$.
    % \begin{equation}
    % \label{eq:hyp_B}
    %     \Big\|\EE_X[\nabla F(\theta ; X,Y)|Y=k] - \frac{1}{K}\sum_{k^\prime =
    %     1}^K \EE_X[\nabla F(\theta ; X,Y)|Y=k^\prime]\Big\|_2^2\leq B~.
    % \end{equation}
 Then, denoting $\pi_{jk}\triangleq P_j(Y=k)$, Assumption \ref{ass:new} is satisfied with:
 \begin{equation}
    \begin{aligned}
       \bar{\tau}^2 =  \frac{KB}{n}\sum_{k=1}^K\sum_{i=1}^n  \Big(\sum_{j=1}^n  & W_
        {ij}\pi_{jk}  - \frac{1}{n}\sum_{j=1}^n \pi_{jk}\Big)^2 \\
        & + \frac{\sigma_{\max}^2}{n}\|W-\frac{1}{n}\1\1^\transpose\|_F^2~.
    \end{aligned}
    \label{eq:prop2}
\end{equation}
\end{proposition}

The proof is provided in Appendix~\ref{app:other-proofs}.% it relies on the bias-variance decomposition already used in Equation~\eqref{eq:H_bias_variance}. 
Note that the condition involving $B$ corresponds to a bounded heterogeneity
 assumption at the class level (rather than at the agent level as in
 Assumption~\ref{ass:heterogeneityA}). %It requires the distance between the expected gradient in a given class $k\in \llbracket 1,\ldots, K\rrbracket$ and the average of expected gradients over all classes to be bounded.

The neighborhood heterogeneity bound $\bar{\tau}^2$ in \eqref{eq:prop2} is
quadratic in $W$ and composed of two terms.
The first one is a \emph{bias} term due to the label skew: it will be minimal
 if neighborhood-level class proportions (weighted by $W$) match the global
 class proportions. This is trivially achieved for any choice of $W$ if the
 class proportions are the same across nodes. % (i.e., if $\pi_{i,k}=\pi_
 % {j,k}$ for all $i \in \llbracket 1,\ldots, n\rrbracket$ and $k\in \llbracket 1,\ldots, K\rrbracket$).
%$W$ allows the local averaging of proportions to match with the global averaging.
The second term is a \emph{variance} term which is minimal when $W=
\frac{\1\1^\transpose}{n}$, the complete topology with uniform weights. As a matter of fact, this topology is also the unique global minimizer of \eqref{eq:prop2}, which is equal to $0$ in this case.
However, as already discussed, such a dense mixing matrix is impractical as it yields huge communication costs. % 
We will show how the per-iteration communication complexity of D-SGD can be controlled
while \emph{approximately} minimizing $\bar{\tau}^2$ in \eqref{eq:prop2}.

\subsection{Optimization with the Frank-Wolfe Algorithm}

In this section, we design an algorithm that finds a sparse approximate minimizer of $\bar{\tau}^2$ in \eqref{eq:prop2}. We focus on learning a single mixing matrix $W$ %
as a ``pre-processing'' step (i.e., before running D-SGD), and do so in a
centralized manner. Specifically, we assume that a single party (which may be one of the agents, or a third-party) has access to the class proportions $\pi_{ik} = P_i(Y=k)$ for each agent $i$ and each class $k$.
In practice, since each agent has access to its local dataset, it can compute these local proportions locally and share them without sharing the local data itself.
% We leave decentralized extensions of the graph learning algorithm for future work.

% Because our task is to learn a single mixing matrix that will be used along all iterations, from now we have $\forall t \in \mathbb{N}$ $W^{(t)}=W$ and $H^{(t)}=H$.

\textbf{Optimization problem.} Our objective is to learn a \emph{sparse}
mixing matrix $W$
which \emph{approximately} minimizes
% The objective function $g$ we propose to minimize with respect to $W$ is directly derived from
$\bar{\tau}^2$ in \eqref{eq:prop2}. Denoting by
$\calS \triangleq \left\{W \in [0,1]^{n \times n} : W\1 = \1 , \hspace{0.5em} \1^\transpose W = \1^\transpose \right\}$ the set of doubly stochastic matrices, the optimization problem can be written as follows:
\begin{equation}
    \label{eq:obj_FW}
    \textstyle
    \underset{W\in \mathcal{S}}{\min}~\Big\{ g(W)\triangleq  \frac{1}{n}\Big\|W \Pi -  \frac{\1\1^\transpose}{n}\Pi\Big\|_F^2 + \frac{\lambda}{n} \Big\|W -  \frac{\1\1^\transpose}{n}\Big\|_F^2\Big\}~,
\end{equation}
%\marc{Je ne trouve pas ces explications super claires :  Ideally/In practice... Inverser aussi : rappeler d'abord Assumption 4 avant de parler de $p$ (c'est bien loin dans le doc) }
where $\Pi\in[0,1]^{n\times K}$ contains the class proportions $\{\pi_{ik}\}$
and $\lambda>0$ is a hyperparameter. To exactly match \eqref{eq:prop2},
$\lambda$ should be equal to $\frac{\sigma_{\max}^2}{KB}$, but $\sigma_
{\max}^2$ and $B$ are unknown in
practice. Instead, we use $\lambda$ to control the bias-variance
trade-off.
% the bigger $\lambda$ is, the less important the bias correction term and the more important the variance term.
As discussed in Section~\ref{sec:our_quantity}, the variance term is an upper
 bound of $1-p$ with $p$ the mixing parameter of $W$. % from Assumption~\ref{ass:rho}.
Therefore, $\lambda$ allows to tune a trade-off between the minimization of the bias due to label skew and the maximization of the mixing parameter of $W$.

\textbf{Algorithm.}
% As discussed before, we want to avoid the trivial (impractical) solution $W=\frac{\1\1^\transpose}{n}$ and instead find sparse approximations of Problem~\eqref{eq:obj_FW} so as to control the per-iteration communication complexity.
% 
We propose to find sparse approximations of \eqref{eq:obj_FW} using a
Frank-Wolfe (FW) algorithm, which is well-suited to
learn a sparse parameter over convex hulls of finite set of atoms \citep{jaggi2013revisiting}. In our case,
$\calS$ corresponds to the convex hull of the set $\mathcal{A}$ of all permutation matrices \citep{lovasz2009matching,tewari2011greedy,valls2020birkhoff}.

\begin{algorithm}[t]
    \caption{Sparse Topology Learning with Frank-Wolfe (STL-FW) \looseness =-1}\label{alg:FW-W} 
    \begin{algorithmic}
    \Require Initialization $\widehat{W}^{(0)}=I_n$, class proportions $\Pi\in[0,1]^{n\times K}$ and hyperparameter $\lambda>0$.
    \For{$l = 0,\ldots,L$}
        \State $P^{(l+1)} = {\arg \min}_{P \in \mathcal{A}}~ \langle P , \nabla g(\widehat{W}^{(l)}) \rangle$ %\Comment{Find best permutation matrix}
        \State $\gamma^{(l+1)} =  {\arg \min}_{\gamma \in [0,1]}~ g\big((1-\gamma) \widehat{W}^{(l)} + \gamma P^{(l+1)}\big)$ %\Comment{Line-search}
        \State $\widehat{W}^{(l+1)} = (1-\gamma^{(l+1)}) \widehat{W}^{(l)} + \gamma^{(l+1)} P^{(l+1)}$ %\Comment{Convex update}
    \EndFor
    \end{algorithmic}
    \end{algorithm}

    \newcommand{\ratio}{0.31}%0.32
    \newcommand{\ratiob}{0.70}%0.81
    \begin{figure*}[t]
        \centering
        \begin{minipage}[t]{\ratio\linewidth}
            \centering
            \includegraphics[height=\ratiob\linewidth]{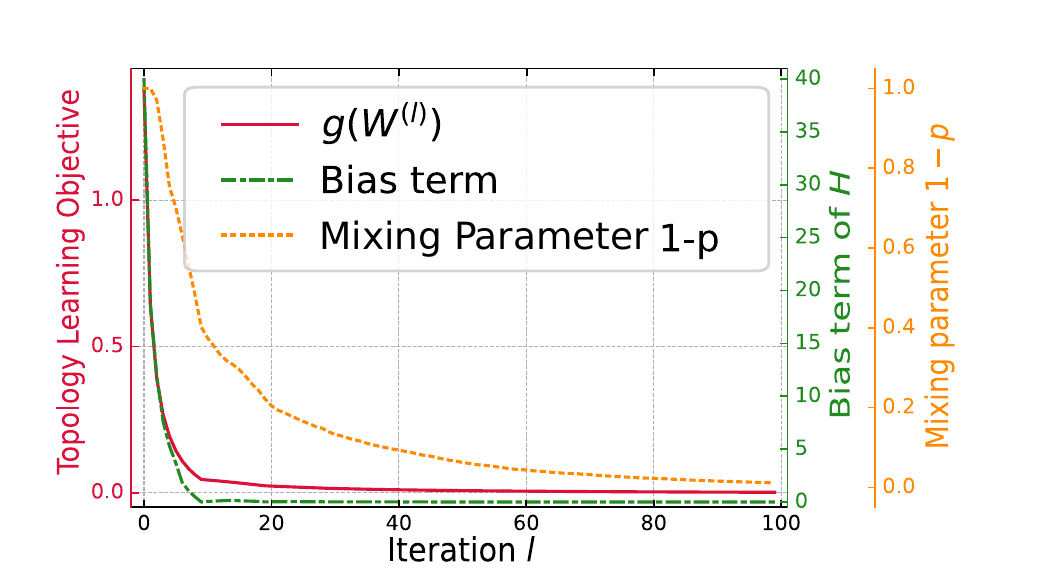}
            {\small (a) Topology learning\;}
        \end{minipage}\hfill
        \begin{minipage}[t]{\ratio\linewidth}
            \centering
            \includegraphics[height=\ratiob\linewidth]{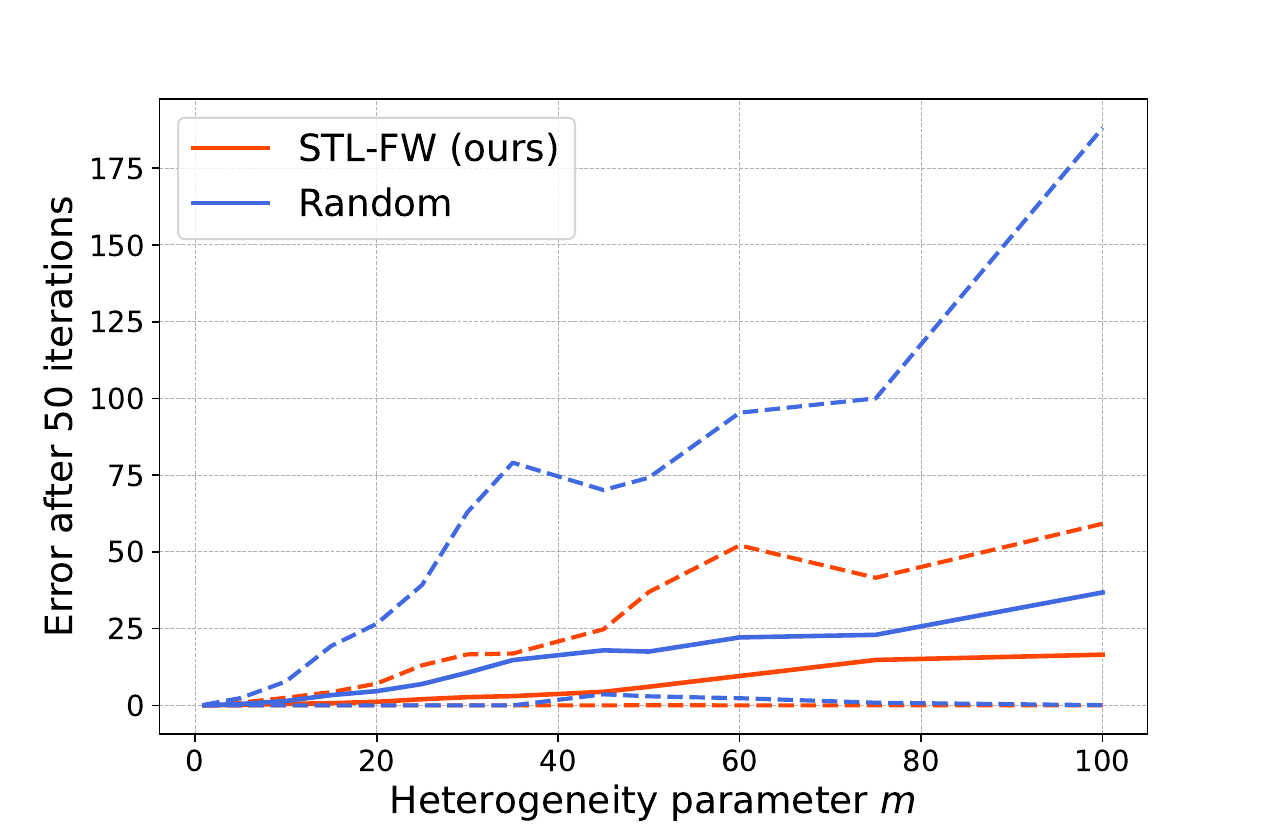}
            {\small (b)\; D-SGD with $d_{\text{max}} = 3$ }
        \end{minipage}
        \begin{minipage}[t]{\ratio\linewidth}
            \centering
            \includegraphics[height=\ratiob\linewidth]{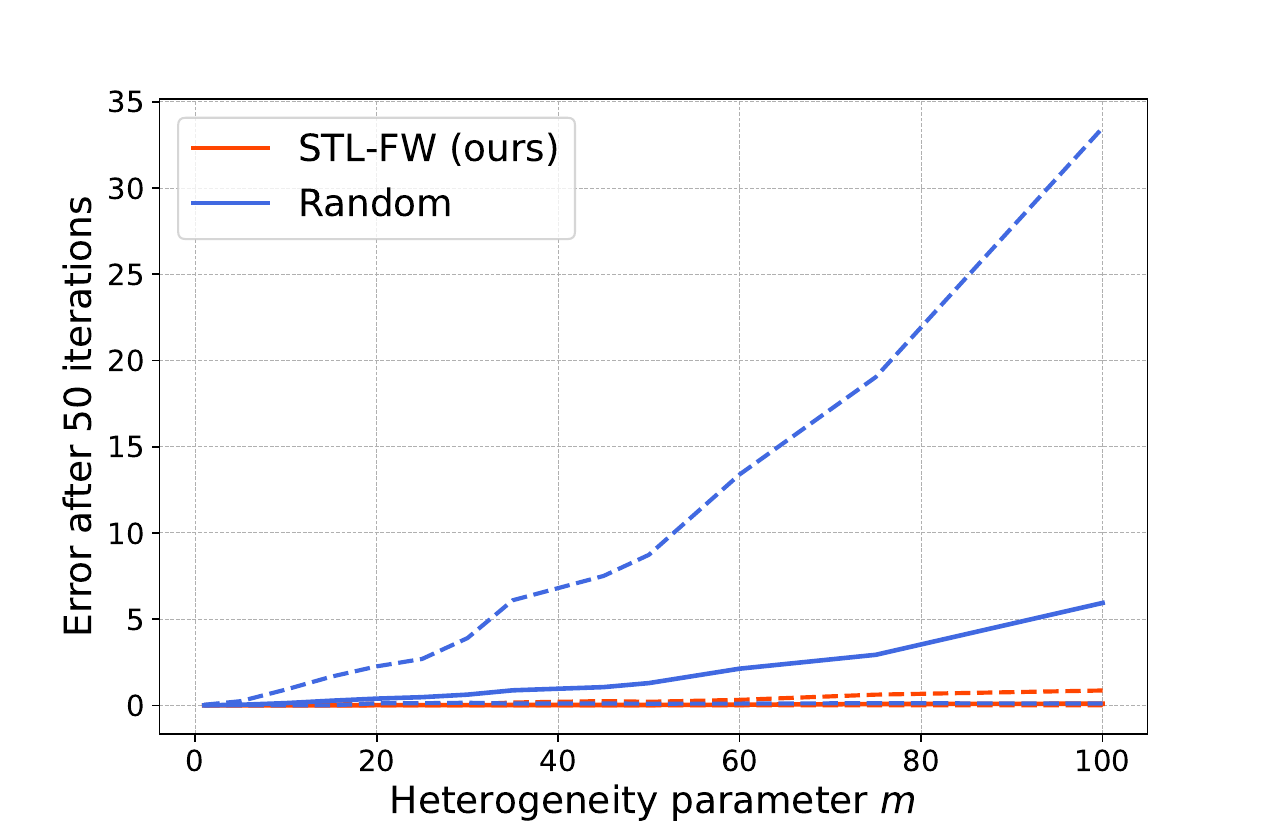}
            {\small (c)\; D-SGD with $d_{\text{max}}=9$}
        \end{minipage}
        \caption{\looseness=-1 \textbf{(a)} Evolution of key quantities across the
        iterations of topology learning: in red the objective function $g(W^{
        (l)})$, in green the
        bias term $\sup_\theta \frac{1}{n}\sum_{i=1}^n\|\sum_{j=1}^nW^{(l)}_
        {ij}\nabla f_j
        (\theta) - \nabla f(\theta)\|_2^2$
        % over $H(\theta)$ in Equation \eqref{eq:H_bias_variance}
        and in yellow the mixing
        parameter $1-p = \lambda_2(W^{(l)\transpose }W^{(l)})$. Here, $\lambda =
        0.5$ and $m=5$. \textbf{(b, c)} Error $n^{-1}\|\theta^{(t)} -
        \theta^\star\|_2^2$ (solid line) of D-SGD after $50$ iterations,
        averaged over
        $10$ runs, for increasing levels of heterogeneity (measured by parameter
        $m$). The dashed lines show
        $\max_i(\theta_i^{(t)} - \theta^\star)^2$ and $\min_i(\theta_i^{(t)} -
        \theta^\star)^2$, illustrating the variability across
        nodes.}
        \label{fig:experiments_synth}
    \end{figure*}

The algorithm is summarized in Algorithm~\ref{alg:FW-W}. Starting from the identity matrix 
$\widehat{W}^{(0)}=I_n\in \calS$, each iteration $l\geq 0$ consists of moving towards
a feasible point $P^{(l+1)}$ that minimizes a linearization of $g$
at the current iterate $\widehat{W}^{(l)}$. As finding $P^{(l+1)}$ is a linear
problem, solving it over $\calS$ is equivalent to solving it over $
\mathcal{A}$. Although $\mathcal{A}$ contains $n!$ elements, the linear
program corresponds to the well-known \emph{assignment problem} 
\citep{burkard2012assignment, crouse2016implementing} and can be solved
tractably with the Hungarian algorithm, which has a worst-case complexity of
$\calO(n^3)$~\citep{lovasz2009matching}.\footnote{The algorithm
is quite fast in practice: for instance, the scipy implementation runs in 0.3s
on a regular laptop for $n=1000$.}
% In fact, this problem corresponds to the well-known \emph{Assignment Problem} \citep{burkard2012assignment, crouse2016implementing} notably implemented in Scipy\footnote{\url{https://docs.scipy.org/doc/scipy/reference/generated/scipy.optimize.linear_sum_assignment.html}}.
Note that the gradient $\nabla g(W)$ needed to solve the assignment problem is given by \looseness = -1%
\begin{equation*}
    %\label{eq:gradient}
    \textstyle
    \frac{2}{n}\sum_{k=1}^K(W\Pi_{:,k}-\overline{\Pi_{:,k}}\1)\cdot \Pi_{:,k}^\transpose + \frac{2}{n}\lambda \left(W - \frac{\1\1^\transpose}{n}\right)~,
\end{equation*}
\looseness=-1 where $\Pi_{:,k}$ is the $k$-th column of $\Pi$.
The next iterate $\widehat{W}^{(l+1)}$ is then obtained as a convex combination of $P^{(l+1)}$ and $\widehat{W}^{(l)}$, and is thus guaranteed to be in $\calS$. The optimal combining weight is computed by line-search, which has a closed-form solution since $g$ is quadratic (see Appendix~\ref{sec:line-search}).

\looseness=-1 Crucially, Algorithm~\ref{alg:FW-W} allows to control the
sparsity of the
final solution: since a permutation matrix contains exactly one non-zero entry
in each row and each column, at most one new incoming and one new outgoing
edge per node are added. As we start from the identity matrix (i.e., only
self-edges), this guarantees that at the end of the $l$-th iteration, each
node will have at most $l$ in-neighbors and $l$ out-neighbors. The
per-iteration communication complexity of D-SGD induced by the learned topology can
thus be directly controlled by the number of iterations of our algorithm. The
trade-off with the quality of the solution is
quantified by the following theorem, which is derived from standard results for FW \citep{jaggi2013revisiting} combined with a tight bound on the smoothness of $g$ in appropriate norm (see Appendix~\ref{app:other-proofs}).
\begin{theorem} \label{coro:bound}
    Consider the statistical setup presented in Section~\ref{sec:framework_label_skew} and let $\{\widehat{W}^{(l)}\}^L_{l = 1}$
    % , \widehat{W}^{(l)} \in \calS$
    be the sequence of mixing matrices generated by Algorithm~\ref{alg:FW-W}. Then, at any iteration $l=1,\ldots,L$, we have:
    \begin{equation}
        \label{eq:good_bound_g}
        % \textstyle
        g(\widehat{W}^{(l)}) \leq \frac{16}{l+2}\big(\lambda + \frac{1}
        {n}\big\|\sum_{k=1}^K(\Pi_{:,k}-\overline{\Pi_{:,k}}\1)\cdot \Pi_
        {:,k}^\transpose \big\|^\star_{2} \big)~,
    \end{equation}
    where $\left\|\cdot\right\|^\star_{2}$ stands for the nuclear norm, i.e., the sum of singular values.
Furthermore, we have $d^{\text{in}}_{\text{max}}(\widehat{W}^{(l)})\leq l$ and $d^{\text{out}}_{\text{max}}(\widehat{W}^{(l)})\leq l$,
% each node has at most $l$ in-neighbors and at most $l$ out-neighbors in $\widehat{W}^{(l)}$,
resulting in a per-iteration complexity bounded by $l$.
\end{theorem}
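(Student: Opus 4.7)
The plan is to combine the standard $\mathcal{O}(1/l)$ convergence rate of Frank-Wolfe \cite{jaggi2013revisiting} with a sharp bound on the curvature constant of the quadratic objective $g$, exploiting the algebraic structure of differences of doubly stochastic matrices. First, observe that $g$ attains its global minimum at $W^\star = \tfrac{1}{n}\1\1^\transpose \in \calS$ with $g(W^\star)=0$, so the standard Frank-Wolfe bound $g(\widehat{W}^{(l)})-g(W^\star) \leq \tfrac{2C_g}{l+2}$ reduces to $g(\widehat{W}^{(l)}) \leq \tfrac{2C_g}{l+2}$, and it only remains to upper bound the curvature $C_g$.

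Since $g$ is quadratic, a direct expansion yields
\begin{equation*}
g(W+\Delta) - g(W) - \langle \nabla g(W), \Delta\rangle = \tfrac{1}{n}\|\Delta\Pi\|_F^2 + \tfrac{\lambda}{n}\|\Delta\|_F^2,
\end{equation*}
so $C_g = 2\sup_{V,W\in\calS}\bigl(\tfrac{1}{n}\|(V-W)\Pi\|_F^2 + \tfrac{\lambda}{n}\|V-W\|_F^2\bigr)$. The second term is immediate: $\|V\|_F \leq \sqrt{n}$ for any doubly stochastic $V$, so $\|V-W\|_F^2 \leq 4n$, contributing $8\lambda$. The main step is bounding $\|(V-W)\Pi\|_F^2$ by the nuclear norm that appears in the theorem. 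Here I exploit the fact that $\Delta = V-W$ has vanishing row and column sums, so $\Delta = \Delta Q = Q\Delta$ with $Q = I_n - \tfrac{1}{n}\1\1^\transpose$. Combining $\Delta^\transpose = Q\Delta^\transpose$ with cyclicity of the trace gives
\begin{equation*}
\|\Delta\Pi\|_F^2 = \mathrm{tr}\bigl(\Delta^\transpose\Delta\,\Pi\Pi^\transpose Q\bigr).
\end{equation*}
Applying the matrix H\"older inequality $|\mathrm{tr}(AB)| \leq \sigma_{\max}(A)\,\|B\|^\star_2$ with $A = \Delta^\transpose\Delta$ and $B = \Pi\Pi^\transpose Q$, and using invariance of the nuclear norm under transposition, then yields $\|\Delta\Pi\|_F^2 \leq \sigma_{\max}(\Delta)^2\cdot \|Q\Pi\Pi^\transpose\|^\star_2$. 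Doubly stochastic matrices have largest singular value one, so $\sigma_{\max}(\Delta)\leq 2$. Recognizing $Q\Pi\Pi^\transpose = \sum_{k=1}^K (\Pi_{:,k}-\overline{\Pi_{:,k}}\1)\Pi_{:,k}^\transpose$ and collecting constants gives $C_g \leq 8\lambda + \tfrac{8}{n}\bigl\|\sum_k (\Pi_{:,k}-\overline{\Pi_{:,k}}\1)\Pi_{:,k}^\transpose\bigr\|^\star_2$, which plugged into the Frank-Wolfe rate delivers the stated $\tfrac{16}{l+2}$ bound.

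The degree claim is a short induction: $\widehat{W}^{(l)}$ is a convex combination of $\widehat{W}^{(0)} = I_n$ and the permutation matrices $P^{(1)},\ldots,P^{(l)}$ returned by the linear subproblems, so its support is contained in the union of their supports. Each permutation contributes at most one nonzero per row and per column, so starting from the diagonal each row and each column of $\widehat{W}^{(l)}$ has at most $l$ off-diagonal entries, which bounds both $d^{\text{in}}_{\text{max}}(\widehat{W}^{(l)})$ and $d^{\text{out}}_{\text{max}}(\widehat{W}^{(l)})$ by $l$ and hence the per-iteration communication cost.

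The main obstacle is the last mile of the curvature estimate: a naive bound such as $\|(V-W)\Pi\|_F^2 \leq \sigma_{\max}(\Delta)^2 \|\Pi\|_F^2$ or $\|\Delta\|_F^2 \sigma_{\max}(\Pi)^2$ already yields an $\mathcal{O}(1/l)$ rate, but with a strictly larger constant that ignores the centering of the class-proportion matrix. Sharpening it to the nuclear norm of the \emph{centered} object $Q\Pi\Pi^\transpose$ requires the specific trace rearrangement above, so that the projector $Q$ coming from $\Delta = \Delta Q$ gets absorbed into the nuclear-norm factor via the right operator/nuclear H\"older pairing. This is the only non-routine ingredient of the proof.
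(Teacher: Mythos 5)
Your proof is correct and follows essentially the same route as the paper's: both rest on the Frank--Wolfe $\calO(1/l)$ rate with $g(W^\star)=0$ at $W^\star=\frac{1}{n}\1\1^\transpose$, the operator-norm/nuclear-norm H\"older pairing, the centering identity $(V-W)=(V-W)(I-\frac{1}{n}\1\1^\transpose)$ to absorb the projector into $\Pi\Pi^\transpose$, the fact that doubly stochastic matrices have largest singular value $1$, and the permutation-matrix structure for the degree bound. The only (cosmetic) difference is packaging: the paper bounds the Lipschitz constant of $\nabla g$ in the $(\|\cdot\|_2,\|\cdot\|_2^\star)$ sense and multiplies by $\mathrm{diam}_{\|\cdot\|_2}(\calS)^2\leq 4$ via Jaggi's Lemma~7, whereas you bound the curvature constant directly from the exact quadratic remainder and apply trace duality to $\mathrm{tr}(\Delta^\transpose\Delta\,\Pi\Pi^\transpose Q)$, arriving at the identical constant $16$.
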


The above theorem shows that the objective $g$ decreases at a rate of $\calO(1/l)$ as new connections between nodes are made. %Note that the constant in \eqref{eq:good_bound_g} depends on the proportions of classes across nodes. 
In general, we can bound \eqref{eq:good_bound_g} less tightly by  $g(\widehat{W}^{(l)}) \leq \frac{16}{l+2}\left(\lambda + 1 \right)$, which is \emph{independent of the number of nodes} $n$.
%In some cases, \eqref{eq:good_bound_g} can be tighter than the upper-bound $g(\widehat{W}^{(l)}) \leq \frac{16}{l+2}\left(\lambda + 1 \right)$ stated above. %For instance, in the favorable case where all nodes have the same class proportions, we have $\Pi_{:,k} = \overline{\Pi_{:,k}}\1$ and thus $g(\widehat{W}^{(l)}) \leq \frac{16\lambda}{l+2}$. 
%For instance, when each node holds only one class (i.e., $\forall i, \exists k$ such that $\pi_{ik}=1$) and $\frac{n}{K}$ nodes hold each class., we get $g(\widehat{W}^{(l)}) \leq \frac{16}{l+2}(\lambda +1 - \frac{1}{K})$.
Recall that with $\lambda = \sigma_{\max}^2/KB$, the value $
\bar{\tau}^2$ of Proposition \ref{prop:upper-bound-H} is exactly equal to
$KB\cdot g(W)$. Therefore, the
 bound given in Theorem~\ref{coro:bound} directly bounds neighborhood
 heterogeneity and can thus be plugged in the rates of Theorem 
\ref{thme:main}. % % We have

To summarize, our approach provides a principled way to learn the
topology
so as to reduce neighborhood heterogeneity while
controlling the per-iteration communication complexity of D-SGD. Remarkably, the fact that $g(\widehat{W}^{(l)})$ is independent of $n$ implies that we can find topologies that approximately optimize the convergence rate of D-SGD while keeping the communication load per node constant, thereby guaranteeing scalability to a large number of nodes even in highly heterogeneous scenarios.

\section{Experiments}
\label{sec:xps}

This section shows the practical usefulness of our topology learning method, referred to as Sparse Topology Learning with Frank-Wolfe (STL-FW). We call \emph{communication budget} $d_{\text{max}} = \max\{d^{\text{in}}_{\text{max}}, d^{\text{out}}_{\text{max}}\}$ the maximal number of neighbors a node can have in the used topologies, which controls the per-iteration communication complexity incurred by any node. \looseness = -1 %  implemented
 % in Scipy\footnote{\url{https://docs.scipy.org/doc/scipy/reference/generated/scipy.optimize.linear_sum_assignment.html}}
\subsection{Simulations on Synthetic Data}
\label{sec:synthetic}

\newcommand{\ratiorwexp}{0.33}%0.32
\begin{figure*}[t]
    \centering
    \begin{minipage}[t]{\ratiorwexp\linewidth}
        \centering
        %  python $TOOLS/analyze/diff.py --rundirs  all/2022-05-04-fully-connected-mnist all/2022-05-05-fw-beta-2 all/2022-05-05-random-2-mnist --pass-through | python $TOOLS/plot/convergence-batiste.py --yaxis validation-accuracy --save-figure ../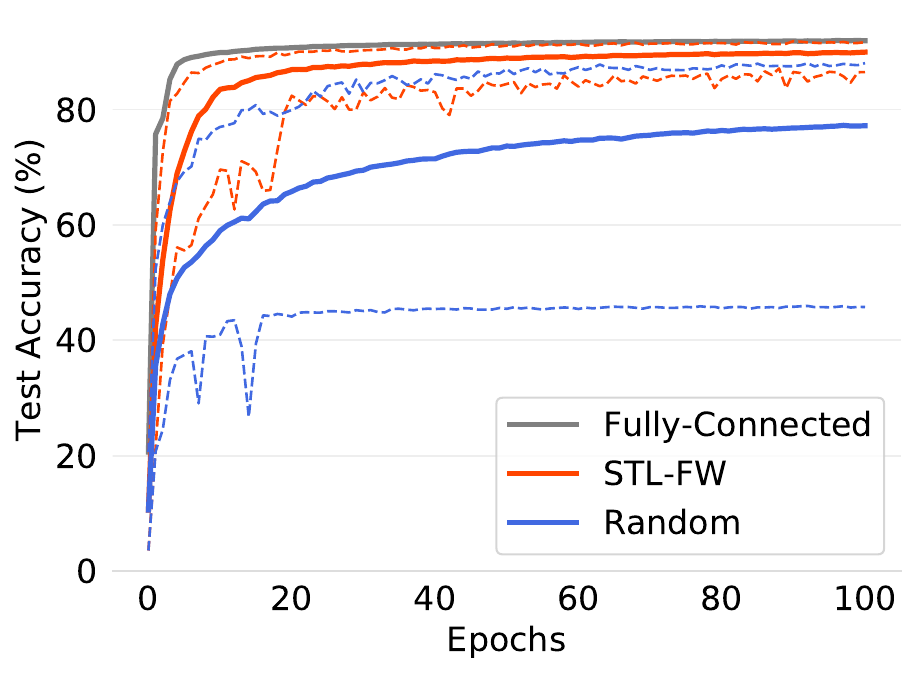 --add-min-max --font-size 16 --labels 'Fully-Connected' 'STL-FW' 'Random' --colors 'gray' 'orangered' 'royalblue' --linewidth 2.5 --yaxis test-accuracy
        \includegraphics[height=\ratiob\linewidth]{figures/fw-vs-others-beta-2-mnist.pdf}
        {\small MNIST $d_{\max}=2$}
    \end{minipage}\hfill
    \begin{minipage}[t]{\ratiorwexp\linewidth}
        \centering
        %  python $TOOLS/analyze/diff.py --rundirs  all/2022-05-04-fully-connected-mnist all/2022-05-17-fw-BETA=5,MOMENTUM=0.0-mnist all/2022-05-17-random-BETA=5,MOMENTUM=0.0-mnist all/2022-05-18-d-cliques-MOMENTUM=0-size=5-no-c-avg-mnist --pass-through | python $TOOLS/plot/convergence-batiste.py --yaxis validation-accuracy --save-figure ../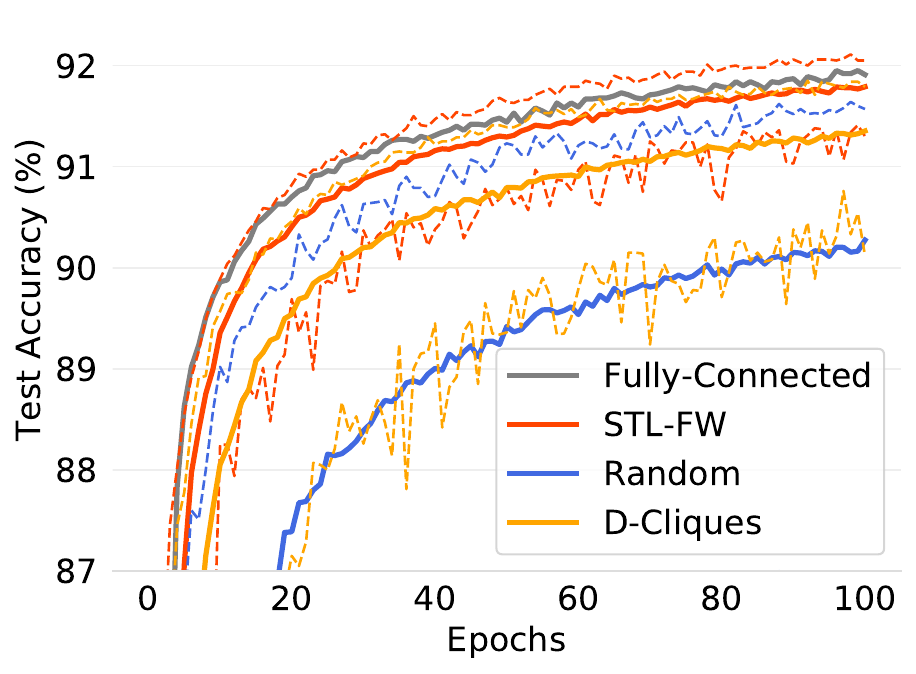 --add-min-max --font-size 16 --labels 'Fully-Connected' 'STL-FW' 'Random'  'D-Cliques' --colors 'gray' 'orangered' 'royalblue' 'orange' --linewidth 2.5 --ymin 87 --ymax 92.5 --yaxis test-accuracy
        \includegraphics[height=\ratiob\linewidth]{figures/fw-vs-others-beta-5-mnist.pdf}
        {\small MNIST $d_{\max}=5$ }
    \end{minipage}
    \begin{minipage}[t]{\ratiorwexp\linewidth}
        \centering
         %  python $TOOLS/analyze/diff.py --rundirs  all/2022-05-04-fully-connected-mnist all/2022-05-05-fw-beta-10 all/2022-05-01-random-10-mnist all/2022-05-18-d-cliques-MOMENTUM=0-size=10-no-c-avg-mnist all/2022-05-17-expander-BETA=14-mnist --pass-through | python $TOOLS/plot/convergence-batiste.py --yaxis validation-accuracy --save-figure ../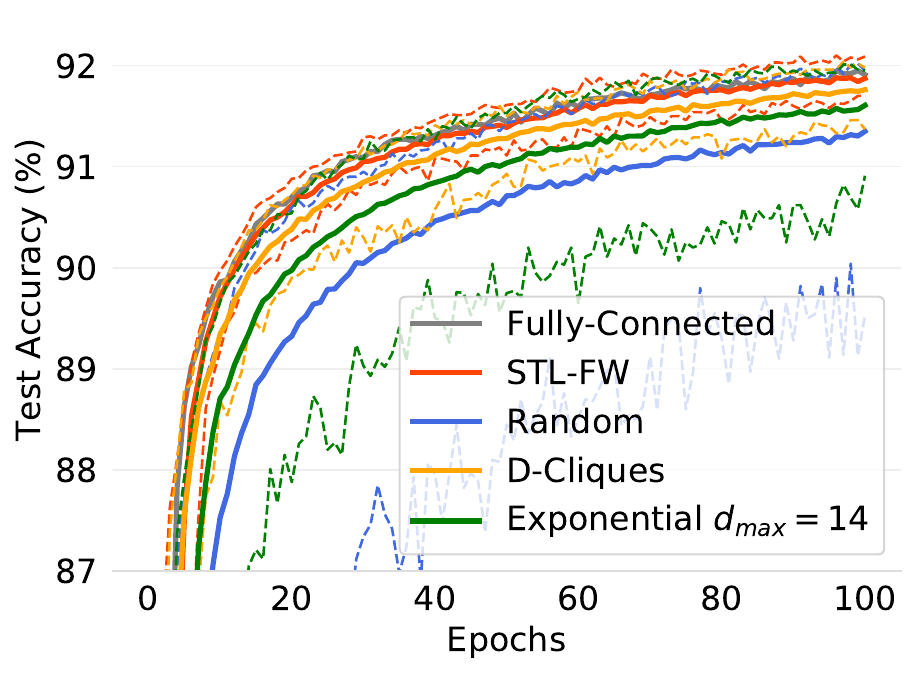 --add-min-max --font-size 16 --labels 'Fully-Connected' 'STL-FW' 'Random' 'D-Cliques' 'Exponential $d_{max}=14$' --colors 'gray' 'orangered' 'royalblue' 'orange' 'green' --ymin 87 --ymax 92.5 --linewidth 2.5 --legend 'lower right' --yaxis test-accuracy
        \includegraphics[height=\ratiob\linewidth]{figures/fw-vs-others-beta-10-mnist.pdf}
        {\small MNIST $d_{\max}=10$ }
    \end{minipage}
        \begin{minipage}[t]{\ratiorwexp\linewidth}
        \centering
        % python $TOOLS/analyze/diff.py --rundirs all/2022-05-16-fully-connected-MOMENTUM=0.0-cifar10 all/2022-05-16-fw-BETA=2,MOMENTUM=0.0-cifar10  all/2022-05-16-random-BETA=2,MOMENTUM=0.0-cifar10 --pass-through | python $TOOLS/plot/convergence-batiste.py --yaxis validation-accuracy --save-figure ../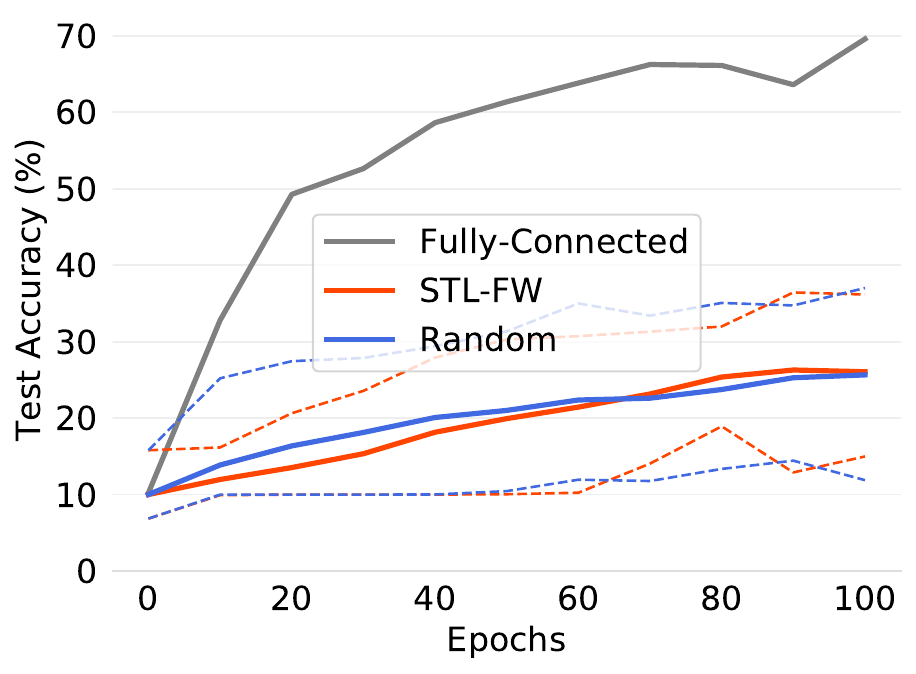 --add-min-max --font-size 16 --labels 'Fully-Connected' 'STL-FW' 'Random' --colors 'gray' 'orangered' 'royalblue' --linewidth 2.5 --yaxis test-accuracy
        \includegraphics[height=\ratiob\linewidth]{figures/fw-vs-others-beta-2-cifar10.pdf}
        {\small CIFAR10 $d_{\max}=2$}
    \end{minipage}\hfill
    \begin{minipage}[t]{\ratiorwexp\linewidth}
        \centering
        %python $TOOLS/analyze/diff.py --rundirs all/2022-05-16-fully-connected-MOMENTUM=0.0-cifar10 all/2022-05-17-fw-BETA=5,MOMENTUM=0.0-cifar10  all/2022-05-16-random-BETA=5,MOMENTUM=0.0-cifar10  all/2022-05-17-d-cliques-MOMENTUM=0-size=5-no-c-avg-cifar10 --pass-through | python $TOOLS/plot/convergence-batiste.py --yaxis validation-accuracy --save-figure ../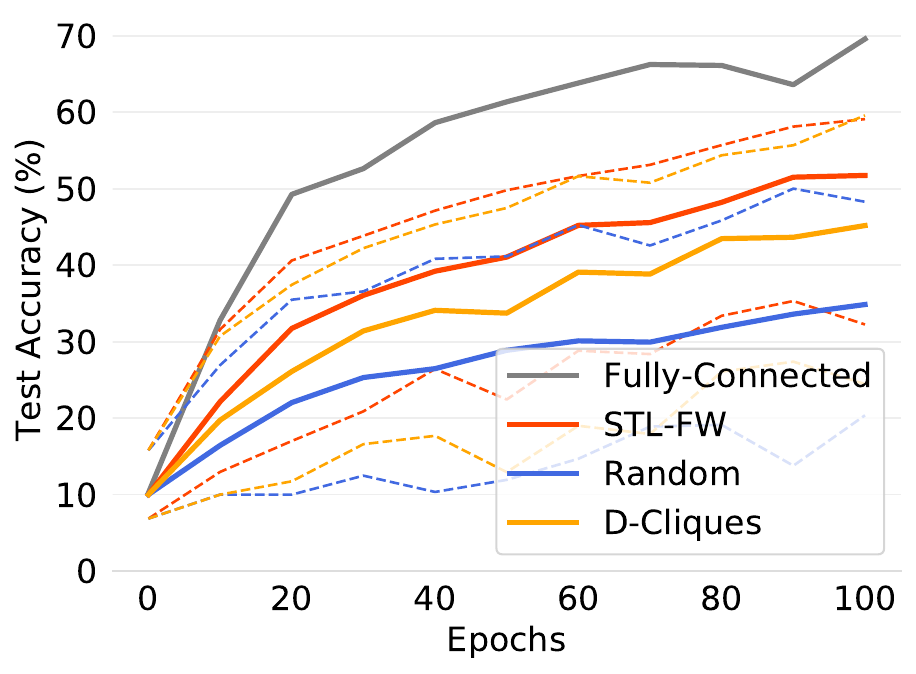 --add-min-max --font-size 16 --labels 'Fully-Connected' 'STL-FW' 'Random' 'D-Cliques' --colors 'gray' 'orangered' 'royalblue' 'orange' --linewidth 2.5 --legend 'lower right' --yaxis test-accuracy
         \includegraphics[height=\ratiob\linewidth]{figures/fw-vs-others-beta-5-cifar10.pdf}
        {\small CIFAR10 $d_{\max}=5$ }
    \end{minipage}
    \begin{minipage}[t]{\ratiorwexp\linewidth}
        \centering
         % python $TOOLS/analyze/diff.py --rundirs all/2022-05-16-fully-connected-MOMENTUM=0.0-cifar10 all/2022-05-16-fw-BETA=10,MOMENTUM=0.0-cifar10  all/2022-05-16-random-BETA=10,MOMENTUM=0.0-cifar10  all/2022-05-18-d-cliques-MOMENTUM=0-size=10-no-c-avg-cifar10 all/2022-05-17-expander-BETA=14-cifar10 --pass-through | python $TOOLS/plot/convergence-batiste.py --yaxis validation-accuracy --save-figure ../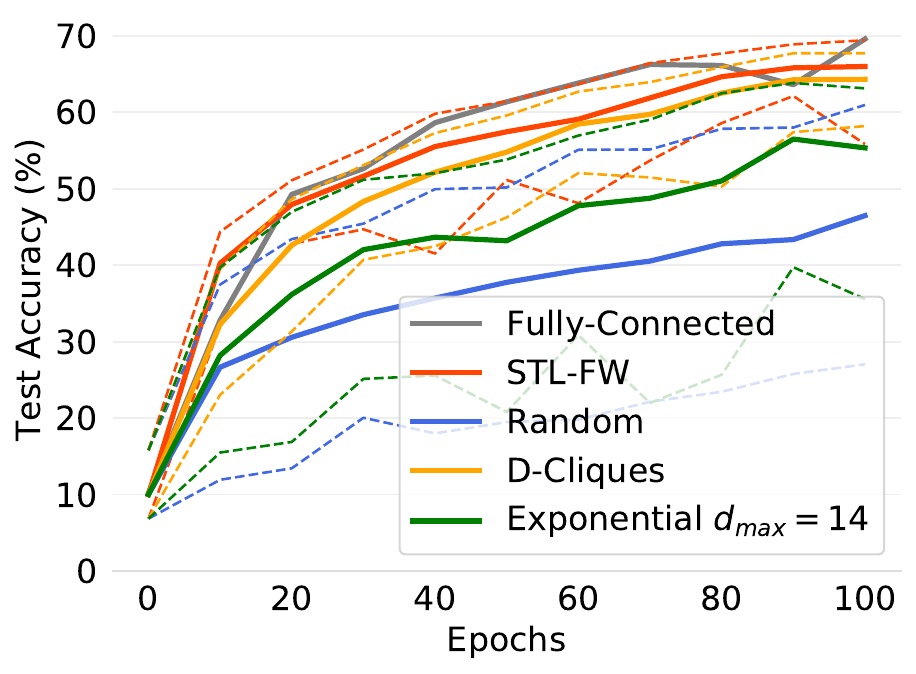 --add-min-max --font-size 16 --labels 'Fully-Connected' 'STL-FW' 'Random' 'D-Cliques' 'Exponential $d_{max}=14$' --colors 'gray' 'orangered' 'royalblue' 'orange' 'green' --linewidth 2.5 --legend 'lower right' --yaxis test-accuracy
        \includegraphics[height=\ratiob\linewidth]{figures/fw-vs-others-beta-10-cifar10.pdf}
        {\small CIFAR10 $d_{\max}=10$ }
    \end{minipage}
    \caption{Convergence of D-SGD with STL-FW (our approach) and alternative
    topologies on real datasets under different communication
 budgets. The fully connected graph induces intractable
 communication costs but gives a performance
 upper bound, while the exponential graph is shown for $d_{\max} = 10$ but
 exceeds this budget.}% % Dashed lines represent minimal and maximal accuracy
 % observed at a node level.} %
 \label{fig:real_experiments}
\end{figure*}

\looseness=-1 \textbf{Statistical setup.} We generalize the mean estimation
objective of
Example
\ref{ex:ring} with $K=10$ clusters and $n=100$ nodes, with exactly $10$ nodes associated to each cluster. %This
% will help
% us to better understand
% how
% the different parameters of the model influence the performances.
Each cluster is associated with a specific Gaussian
distribution,
which corresponds to a class in the statistical framework described in
Section~\ref{sec:framework_label_skew}.
The variance
of the $K$ distributions is the same ($
\tilde{\sigma}^2 = 1$) but their means are
evenly spread
over $[-m,m]$. Thus, $m\geq 0$ controls the heterogeneity of the problem (the
bigger $m$, the more heterogeneous the setup).
We can analytically compute all numerical constants
introduced throughout the paper.
Unless otherwise noted, $\lambda$ is set to $\sigma^2/KB$
 where $\sigma^2=4\tilde{\sigma}^2$ and
 $B=4m^2$.
 % containing $11$ elements. % the set $[0.001, 0.005, 0.007, 0.01, 0.03,
 % 0.05, 0.07, 0.1, 0.5, 1.0]$. The communication budget varies in $\llbracket 2, \ldots, 10\rrbracket$.

\textbf{Competitor.} For a fixed budget $d_{\text{max}}$, we compare the topology learned
by STL-FW to a random $d_{\text{max}}$-regular graph with uniform weights $
\frac{1}{d_{\text{max}}+1}$. This graph is independent of the data but
 has good mixing parameter $p$ (every node will have exactly $b$
 neighbors, with uniform weights). %Being the only competitor that has the flexibility to build a topology for any $b$, it is the only one considered here. In the following section on real data however, we will also compare ourselves to the D-clique \citep{bellet2021d} topology which cannot fit easily to any $b$ but remains the only other topology that adapts to the data distributions.
 We use a fixed step-size for D-SGD, which is tuned separately for each 
 topology in the interval $[0.001,1]$.

\textbf{Results.} We first study the behavior of our topology learning
algorithm. As seen in Figure~\ref{fig:experiments_synth}(a), the objective
function $g(W^{(l)})$ decreases quickly in the first iterations with a clear
elbow at $l=9$ iterations. This is because we have $K=10$
``classes'', hence $9$ neighbors are sufficient to compensate for
label skew. We also see that decreasing $g$ successfully decreases
the two key quantities that affect the convergence of D-SGD and are upper
bounded by $g$:
% Over the
% same plot, we also observe the decreasing of the two
% quantities that characterizes $H$ (and therefore $\bar{\tau}$ and the convergence rate). In green, we have
the bias term in Equation \eqref{eq:H_bias_variance} (which does not depend on $\theta$ in this setup and can therefore be computed exactly) % Utiliser la preuve de la prop 2 + exemple 1
%
% since it
% does not depend on $\theta$
% for the considered model. In yellow, we have
and the mixing parameter $1-p$ (which continues to decrease beyond $l=9$). \looseness =-1
% The decreasing of both these terms
% illustrates that STL-FW allows to optimize them while controlling the budget.

Figure \ref{fig:experiments_synth}(b, c) shows that the topology learned by
STL-FW indeed translates into faster convergence for D-SGD than with the
 random (but well-connected) topology in data heterogeneous settings. This is
 especially striking when looking at best and worst-case errors across
 nodes (dashed lines). For a low budget ($d_{\text{max}}=3$), D-SGD with our
 topology
 remains slightly impacted by heterogeneity. But remarkably, for $d_{
 \text{max}}=9$, our topology makes D-SGD completely insensitive to
 increasing data heterogeneity. This
 observation is consistent with the elbow observed at $l=d_{\text{max}}=9$ in Figure~\ref{fig:experiments_synth}(a).
 In Appendix~\ref{sec:statistics_topologies}, we provide basic statistics on
 the topologies obtained for the two budgets $d_{\text{max}}=3$ and $d_{
 \text{max}}=9$. We can see in particular that the topologies obtained
 with
 STL-FW are $d_{\text{max}}$-regular (like the random graph) but have much
 lower bias (i.e., the distribution of labels in the neighborhood of nodes is
 closer to the global distribution). As expected, this bias is equal to $0$
 for $d_{\text{max}}=9$.
% appendix: statistics on the topologies for d_max=3 and d_max=9, showing that
% STL is d_max-regular (like the random graph) but has much lower bias.

\subsection{Experiments on Real Datasets}
\label{sec:real-datasets}

%This section shows how STL-FW performs in practical learning scenarios, compared to alternatives. 

\textbf{Setup.} We follow the experimental
setup in \cite{bellet2021d} and consider two classification tasks: a linear
model
 on MNIST \citep{deng2012mnist} and a Group Normalized LeNet~\citep{quagmire}
 on CIFAR10 \citep{krizhevsky2009learning}. In both cases, we partition the dataset on 100 nodes using the scheme introduced in~\cite{mcmahan2017communication}, i.e. % the dataset is sorted by class and divided into shards of equal size and each node selects two shards at random.
on average, nodes will have examples of two classes, but may have only 1 and
 up to 4. We re-use the hyper-parameters from \cite{bellet2021d}: a learning
 rate of 0.1 and batch size of 128 for MNIST, and a learning rate of 0.002 and
 batch size of 20 for CIFAR10. Using D-SGD, we compare the topology learned
 with our approach STL-FW to other fixed
 topologies: (1) a \textit{fully-connected graph} ($d_{\max} = 99$), which
 exhibits the fastest convergence speed but is impractical, (2) a 
 \textit{random graph} with the
 same communication budget as STL-FW, (3)
 \textit{D-Cliques}~\citep{bellet2021d}, also with
 the same budget, and (4) a
 deterministic \textit{exponential graph} promoted in recent
 work~\citep{ying2021exponential} ($d_{\max} = 14$). %, 4 more than the largest budget allowed for other alternatives.
 Note that all competing topologies are data-independent, except D-Cliques.
 To have a fair comparison, we use standard D-SGD without algorithmic
 modifications like ``clique-averaging'' introduced by 
\cite{bellet2021d}.
% ain a distribution that is representative of the global distribution, we only show D-Cliques results for cliques of size 5 and 10. %Contrary to the experiments reported in ~\citep{bellet2021d}, we do not use momentum because \el{TODO: Batiste explains why}.
  For all experiments with STL-FW, we use $\lambda=0.1$ since, remarkably, its
 value does not significantly change the results (see Figure~
\ref{fig:lambda-effect} in Appendix~\ref{sec:additional-experiments}). %All topologies were tested on the same non-iid simulator\footnote{Link omitted for blind review}. %which provides deterministic executions and modular experiments.Additional results are provided in Appendix \ref{sec:additional-experiments}.

\textbf{Results.} Figure~\ref{fig:real_experiments} shows our results for
varying communication budget $d_{max}$: small ($2$), medium ($5$) and large 
($10$). On MNIST, STL-FW makes convergence faster than all competitors and
quickly matches the speed of the fully-connected topology as the
budget $d_{\max}$ increases.
Remarkably, STL-FW is already showing good performance at $d_{\max} = 2$,
which is a very small budget that the other topologies (except the random one)
cannot handle. As expected, the two data-dependent topologies (D-Cliques
and STL-FW) outperform the random topologies, including the exponential graph
which has better connectivity ($d_{\max}=14$) but does
not compensate for the heterogeneity. The fact that STL-FW improves over
D-Cliques can be explained by the fact that D-Cliques only compensate the
heterogeneity (the bias term in Equation \eqref{eq:H_bias_variance}) without consideration for the
overall connectivity (the variance term in Equation 
\eqref{eq:H_bias_variance}). This is illustrated in the tables of Appendix \ref{sec:statistics_topologies}. %, where we provide basic statistics of the used topologies. % \looseness = -1 \\

On CIFAR10, we see that $d_{\max} = 2$ is not sufficient to reach good
performance. This can be explained by the increased
complexity of the problem (non-convex objective with a deep model), requiring
larger communication budgets. This is in line with empirical results in
prior work \citep{kong2021consensus}.
However, with slightly larger budgets i.e. $d_{\max} = 5$ and $10$ ($d_{\max} = 3$ in Fig.~\ref{fig:dmax-3}, App.~\ref{sec:additional-experiments}), performance
improves and the results are consistent with those on MNIST: STL-FW
outperforms other sparse topologies and comes close to the performance of the
fully connected topology for $d_{\max} = 10$.
Overall, STL-FW provides better convergence speed than all
tractable alternatives, with the additional ability to operate in low
communication regimes (unlike D-Cliques and the exponential graph).

\section{Conclusion}
\label{sec:conclu}

This paper addressed two important open problems in decentralized
 learning. First, thanks to our new notion of neighborhood
 heterogeneity, we characterized the joint effect of the
 topology and the data heterogeneity in the convergence rate of D-SGD. Our
 results show that, if chosen appropriately, the topology can compensate for
 the heterogeneity and speed up convergence. %
Second, we tackled the problem of learning a good topology under data
heterogeneity. To the best of our knowledge, our work is the first to provide
a principled and data-dependent approach, with explicit
control on the trade-off between the communication costs and the convergence
speed of D-SGD.
We believe that our work paves the way for the design of other
data-dependent topology learning techniques. One may for instance investigate
different types of heterogeneity (beyond label skew), different
knowledge assumptions (e.g., not knowing the proportions), and dynamic
learning of the topology. We can also envision fully decentralized
and privacy-preserving versions.

% \section*{Acknowledgement}

% We wish to thank Martijn De Vos for his help parallelizing the simulations to
% speed up experiments and the Distributed Systems Group at TU Delft for lending
% their cluster and experiment infrastructure.

% This work was supported in part by the French National Research Agency
% (ANR) through grant ANR-20-CE23-0015 (Project PRIDE).

\bibliographystyle{apalike} 
\bibliography{biblio}

\newpage
\onecolumn

\appendix

%\section{Appendix}

\addcontentsline{toc}{section}{Appendix}
\section*{Appendix}

\section{Details on Example~\ref{ex:ring}}
\label{app:examples}

In this section, we provide more details on Example~\ref{ex:ring} (Section 
\ref{sec:our_quantity}) by giving the exact parametrization. Recall that
we want to find an example where Assumption \ref{ass:heterogeneityA} is not
verified while Assumption \ref{ass:new} is.

Let us consider $n$ nodes with $n$ an even number. For all $i=1,\ldots,n$,
assume
$Z_i\sim \mathcal{N}(m,\tilde{\sigma}^2)$ if $i$ is odd and $Z_i\sim 
\mathcal{N}(-m,\tilde{\sigma}^2)$ if $i$ is even. Assume further that $
\tilde{\sigma}^2<+\infty$ but $m>0$ can be asymptotically large. For all
$i=1,\ldots,n$ we fix $F_i(\theta,Z_i) = (\theta - Z_i)^2$, which
corresponds to a simple mean estimation objective.

Consider a fixed mixing matrix $W$ associated with a ring topology that
alternates between the two distributions. Specifically, for $i=2,\ldots, n-1$ and $j=1,\ldots,n$, we fix the
weights as follows:
\begin{equation*}
    W_{ij} = \left\{
        \begin{array}{ll}
          \frac{1}{2} \hspace{1em} \text{if}\hspace{0.5em}  j=i~,\\
          \frac{1}{4} \hspace{1em} \text{if} \hspace{0.5em} j=i+1  \hspace{0.5em} \text{or} \hspace{0.5em} j=i-1~,\\
          0 \hspace{1em} \text{otherwise}~.
        \end{array}
      \right.
\end{equation*}
Moreover, we fix $W_{11}=W_{nn}=\frac{1}{2}$ and $W_{1n}=W_{n1}=\frac{1}{4}$.

With such parametrization we have $\nabla F_i(\theta,Z_i) = 2(\theta-Z_i)$ and
therefore $\nabla f_i(\theta) = 2(\theta-m)$ if $i$ is odd and $\nabla f_i
(\theta) = 2(\theta+m)$ if $i$ is even. Moreover, the gradient of the global objective is $\nabla f(\theta)=\frac{1}{n}\sum_i\nabla f_i(\theta) = 2\theta$ and the neighborhood averaging $\sum_jW_{ij}\nabla f_j(\theta) = 2\theta$ for all $i$.

We first verify that Assumptions \ref{ass:variance} is satisfied:

\begin{equation*}
    \EE \left[\left(\nabla F_i(\theta,Z_i)- \nabla f_i(\theta)\right)^2\right] = \EE \left[4\left(Z_i - \EE Z_i\right)^2\right] = 4\tilde{\sigma}^2 < \infty~.
\end{equation*}

Let us now find a bound $\bar{\tau}^2$ on the neighborhood heterogeneity.
Using a bias-variance decomposition, we have:
\begin{align*}
    H(\theta) & = \frac{1}{n}\sum_{i=1}^n\EE\Bigg(\sum_{j=1}^nW_{ij}\nabla F_j(\theta) - \frac{1}{n}\sum_{j=1}^n\nabla F_j(\theta)\Bigg)^2 \\
    & =   \frac{1}{n}\sum_{i=1}^n\Big(\sum_{j=1}^nW_{ij}\nabla f_j(\theta) - \frac{1}{n}\sum_{j=1}^n\nabla f_j(\theta)\Big)^2 +  \frac{1}{n}\sum_{i=1}^n\EE\Big(\sum_{j=1}^n(W_{ij}-\frac{1}{n})(\nabla f_j(\theta) - \nabla F_j(\theta))\Big)^2 \\
    & =   \frac{1}{n}\sum_{i=1}^n\left(2\theta - 2\theta\right)^2 +  \frac{1}{n}\sum_{i=1}^n\sum_{j=1}^n(W_{ij}-\frac{1}{n})^2\EE(\nabla f_j(\theta) - \nabla F_j(\theta))^2 \\
    & = 0 + 4\tilde{\sigma}^2 \frac{1}{n}\sum_{i=1}^n\sum_{j=1}^n(W_{ij}-\frac{1}{n})^2 \leq 4\tilde{\sigma}^2~.
\end{align*}

The third equality was obtained thanks to the fact that $\EE[\nabla f_j
(\theta) - \nabla F_j(\theta)]=0$. This result shows that Assumption 
\ref{ass:new} is verified with $\bar{\tau}^2 = 4\tilde{\sigma}^2<\infty$.

On the contrary, since $m$ can be arbitrary large, Assumption 
\ref{ass:heterogeneityA} is not verified. Indeed:

\begin{align*}
    \frac{1}{n}\sum_{i=1}^n\left(\nabla f_i(\theta) - \frac{1}{n}\sum_{j=1}^n\nabla f_j(\theta)\right)^2 & =  \frac{1}{n}\sum_{i=1}^n (2m)^2 \\
    & = \frac{4m^2}{n} \triangleq \bar{\zeta}^2 \underset{m\rightarrow \infty}{\longrightarrow} + \infty~.
\end{align*}

\textbf{Remark.} % Without a careful reading, one can be led to wonder: what
% about the case where $\tilde{\sigma}^2$ can tend to infinity while $m$ is finite? Indeed, in that case Assumption \ref{ass:heterogeneityA} would be satisfied whereas Assumptions \ref{ass:new} won't. However, recall that here we do not seek to compare directly Assumptions \ref{ass:new} and \ref{ass:heterogeneityA} but rather the set of assumptions \ref{ass:variance}-\ref{ass:new} against \ref{ass:variance}, \ref{ass:rho}, \ref{ass:heterogeneityA} for which both need Assumption \ref{ass:variance}. Taking $\tilde{\sigma}^2\rightarrow \infty$ would therefore make both set of assumptions unsatisfied.
At first sight, one may wonder why the local variance term $
\tilde{\sigma}^2$ appears in $\bar{\tau}^2$ but not in $\bar{\zeta}^2$. This is
because we chose to define neighborhood heterogeneity in expectation with
respect to the pointwise loss functions $F_1,\dots,F_n$, resulting in a
bias-variance decomposition (see Eq.~\ref{eq:H_bias_variance}) which is the
relevant quantity to optimize when learning the topology in
Section~\ref{sec:graph_learn}. In contrast, following the convention used
in previous work, local heterogeneity is defined
with respect to the local objectives $f_1,\dots,f_n$ and thus only measures a
bias term, while the variance term is accounted separately by
Assumption~\ref{ass:variance}. Since the variance terms are the same in
both settings, the difference is in how the bias term is measured (at the node level or at the neighborhood level):  in the
example above, it is equal to $\frac{4m^2}{n}$ for local heterogeneity while
it is equal to $0$ for neighborhood heterogeneity (see the above calculation
of $H(\theta)$).

\section{Proof of Theorem \ref{thme:main}}

\label{app:proof_thme}

\subsection{Notations and Overview}

We start by re-writing the updates of D-SGD (Algorithm~\ref{alg:d-sgd}) in
matrix form.

Let $\Theta^{(t)}\triangleq \left(\theta_1^{(t)}, \ldots, \theta_n^{
(t)}\right) \in \RR^{d\times n}$ be the matrix that contains the parameter
vectors of all nodes at time $t$. Denote by $\nabla F(\Theta^{(t)},Z^{(t)})
\triangleq \left(\nabla F_1(\theta^{(t)}_1,Z^{(t)}_1), \ldots, \nabla F_n
(\theta^{(t)}_n,Z^{(t)}_n)\right)\in \RR^{d\times n}$ the matrix containing
all stochastic gradients at time $t$. The D-SGD update at time $t$ can
then be written as:

$$\Theta^{(t+1)} = \left(\Theta^{(t)}-\eta_t \nabla F(\Theta^{(t)},Z^{(t)})\right)W^{(t)\transpose}~.$$

In the following, we denote $\overline{\Theta}^{(t)} \triangleq \left(\bar{\theta}^{(t)}, \ldots, \bar{\theta}^{(t)}\right)  = \Theta^{(t)}\cdot\frac{1}{n}\1\1^\transpose$.

The proof follows the classical steps found in the literature (see
e.g. \cite{koloskova20,neglia2020decentralized}). The main difference resides
in how the consensus term $\|\Theta^{(t)}-\overline{\Theta}^{
(t)}\|_F^2$ is controlled across iterations (Lemma 
\ref{lemma:consensus_control}). The proof is organized as follows.

\textbf{Convex case.}
\begin{enumerate}
\item Lemma \ref{lemma:descent} provides a descent recursion that
allows to control the decreasing of the term  $\left\|\bar{\theta}^{(t)} - \theta^{\star}\right\|^2$. The proof closely follows the one of \cite{koloskova20,neglia2020decentralized}.

\item In Lemma \ref{lemma:consensus_control}, the consensus term
$\|\Theta^{(t)}-\overline{\Theta}^{(t)}\|_F^2$, which appears in the result
of Lemma \ref{lemma:descent}, is upper-bounded. The resulting upper-bound
exhibits
our new quantity $\bar{\tau}^2$ (an upper bound on neighborhood
heterogeneity).
\item Corollary \ref{cor:cons_rec} uses the previous lemma to bound $\frac{1}
{T+1}\sum_{t=0}^T\|\Theta^{(t)}-\overline{\Theta}^{(t)}\|_F^2$.

\item Lemma \ref{lemma:rate_T} provides an upper-bound on the error
term with the following form:
\begin{equation*}
    \frac{1}{T+1}\sum_{t=0}^T\EE (f(\bar{\theta}^{(t)}) - f^\star) \leq 2\left(\frac{br_0}{T+1}\right)^{\frac{1}{2}} + 2e^{\frac{1}{3}}\left(\frac{r_0}{T+1}\right)^{\frac{2}{3}}+\frac{dr_0}{T+1},
\end{equation*}
where $b=\frac{\bar{\sigma}^2}{n}$, $e=\frac{36L\bar{\tau}^2}{p^2}$, $d=\frac{8L}{p}$ and $r_0=\|\theta^{(0)}-\theta^\star\|_2^2$.

\item To get the final rate of Theorem \ref{thme:main}, it suffices to find
$T$ such that each term in the right-hand side of the previous equation in bounded by $\frac{\varepsilon}{3}$.

\begin{itemize}
    \item $2\left(\frac{br_0}{T+1}\right)^{\frac{1}{2}} \leq \frac{\varepsilon}{3}  \Longleftrightarrow \frac{36br_0}{\varepsilon^2} \leq T+1 \Longleftrightarrow \frac{36\bar{\sigma}^2r_0}{n \varepsilon^2} \leq T+1$,
    \item  $2e^{\frac{1}{3}}\left(\frac{r_0}{T+1}\right)^{\frac{2}{3}} \leq \frac{\varepsilon}{3}  \Longleftrightarrow \frac{e^\frac{1}{2}6^\frac{3}{2}r_0}{\varepsilon^\frac{3}{2}} \leq T + 1  \Longleftrightarrow \frac{6^\frac{5}{2}\sqrt{L}\bar{\tau}r_0}{p \varepsilon^\frac{3}{2}} \leq T + 1 $,
    \item $\frac{dr_0}{T+1} \leq \frac{\varepsilon}{3}  \Longleftrightarrow \frac{3dr_0}{\varepsilon} \leq T+1 \Longleftrightarrow \frac{24Lr_0}{p\varepsilon} \leq T+1$.
\end{itemize}

In particular, it suffices to take 

\begin{equation*}
    T \geq \frac{36\bar{\sigma}^2r_0}{n \varepsilon^2} + \frac{89\sqrt{L}\bar{\tau}r_0}{p \varepsilon^\frac{3}{2}} + \frac{24Lr_0}{p\varepsilon} = \calO\left(\frac{\bar{\sigma}^2}{n \varepsilon^2} + \frac{\sqrt{L}\bar{\tau}}{p \varepsilon^\frac{3}{2}} + \frac{L}{p\varepsilon}\right)r_0~,
\end{equation*}

in order to have all three terms bounded by $\frac{\varepsilon}{3}$, and obtain the final result.
\end{enumerate}

\textbf{Non-convex case.} The proof is similar to the convex one: it only
differs in the descent lemmas that are used.

\begin{enumerate}
  \item Lemma \ref{lemma:descentNC} provides the descent lemma for the non-convex scenario.
  \item The consensus term is bounded using the same results as in the convex case, i.e., with Lemma \ref{lemma:consensus_control} and Corollary \ref{cor:cons_rec}.
  \item  Lemma \ref{lemma:rate_TNC} provides an upper-bound on the error
  term with the following form:
  \begin{equation*}
    \frac{1}{T+1}\sum_{t=0}^T\EE \Big\| \nabla f(\bar{\theta}^{(t)})  \Big\|_2^2 \leq 2\left(\frac{4bf_0}{T+1}\right)^{\frac{1}{2}} + 2e^{\frac{1}{3}}\left(\frac{4f_0}{T+1}\right)^{\frac{2}{3}}+\frac{4df_0}{T+1},
\end{equation*}
where $b=\frac{2L\bar{\sigma}^2}{n}$, $e=\frac{96L^2\bar{\tau}^2}{p^2}$, $d=\frac{8L}{p}$ and $f_0=f(\theta^{(0)}) - f^\star$.

  \item We bound in each term of the previous equation by $\frac{\varepsilon}{3}$ and get the sufficient condition:
  
  \begin{equation*}
    T \geq \frac{288L\bar{\sigma}^2f_0}{n \varepsilon^2} + \frac{576L\bar{\tau}f_0}{p \varepsilon^\frac{3}{2}} + \frac{96Lf_0}{p\varepsilon} = \calO\left(\frac{L\bar{\sigma}^2}{n \varepsilon^2} + \frac{L\bar{\tau}}{p \varepsilon^\frac{3}{2}} + \frac{L}{p\varepsilon}\right)f_0~.
\end{equation*}

\end{enumerate}

\subsection{Preliminaries and Useful Results}

\begin{property}[Averaging preservation]
\label{proper:mix_average} Let $W\in \RR^{n \times n}$ be a mixing matrix and $\Theta$ be any matrix in $\RR^{d\times n}$. Then, $W$ preserves averaging:
    \begin{equation}
        \label{eq:preserve_ave}
        (\Theta W)\frac{\1\1^\transpose}{n} = \Theta\frac{\1\1^\transpose}{n} = \overline{\Theta}
    \end{equation}
\end{property}

\begin{property}[Implications of $L$-smoothness and convexity]
~
\begin{itemize}
    \item If we assume convexity, we have for all
    $i\in \llbracket 1,\ldots, n\rrbracket$:
\begin{equation}
    \label{eq:grad_conv}
    \langle   \nabla f_i(\tilde{\theta})   ,  \tilde{\theta}   -   \theta   \rangle   \geq f_i( \tilde{\theta})   -   f_i(\theta).
\end{equation}
    \item Under Assumption \ref{ass:smoothness} ($L$-smoothness), it holds for
    all $i\in \llbracket 1,\ldots, n\rrbracket$:
\begin{equation}
    \label{eq:Lip_F_i}
    F_i(\theta,Z) \leq F_i(\tilde{\theta},Z) + \langle   \nabla F_i(\tilde{\theta},Z)   ,  \theta  - \tilde{\theta} \rangle   +   \frac{L}{2} \|\theta  - \tilde{\theta}\|_2^2, \quad \quad \quad \forall \theta, \tilde{\theta} \in \RR^d, Z\in\theta_i.
\end{equation}
Taking the expectation of the previous equation, we also
have:
\begin{equation}
    \label{eq:Lip_f_i}
    f_i(\theta) \leq f_i(\tilde{\theta}) + \langle   \nabla F(\tilde{\theta})   ,  \theta  - \tilde{\theta} \rangle   +   \frac{L}{2} \|\theta  - \tilde{\theta}\|_2^2, \quad \quad \quad \quad \forall \theta, \tilde{\theta} \in \RR^d.
\end{equation}
\item If we further assume that the $F_i$'s are convex, Assumption 
\ref{ass:smoothness} also implies $\forall \theta, \tilde{\theta} \in \RR^d$,$ Z\in\theta_i$:
\begin{align}
    \|\nabla f_i(\theta) - \nabla f_i(\tilde{\theta})\|_2 & \leq L \|\theta  - \tilde{\theta}\|_2, \label{eq:f_i_smooth}\\ 
    \|\nabla f_i(\theta) - \nabla f_i(\tilde{\theta})\|^2_2 & \leq 2L \left(f_i(\theta) - f_i(\tilde{\theta}) - \big\langle   \nabla f_i(
    \tilde{\theta})   ,  \theta  - \tilde{\theta} \big\rangle\right), \label{eq:f_i_smooth2} \\
    \|\nabla F_i(\theta,Z) - \nabla F_i(\tilde{\theta},Z)\|^2_2 & \leq 2L
    \left(F_i(\theta,Z) - F_i(\tilde{\theta},Z) - \big\langle   \nabla F_i(
    \tilde{\theta},Z)   ,  \theta  - \tilde{\theta} \big\rangle\right). 
    \label{eq:F_i_smooth2}
\end{align}
\end{itemize}

\end{property}

These results can be found in many convex optimization books and papers, e.g. in \cite{bubeck2014convex}.

\begin{property}[Norm inequalities]
~
    \begin{itemize}
        \item For a set of vectors $\{a_i\}_{i=1}^n$ such that $a_i\in\RR^{d}$,

        \begin{equation}
            \label{eq:norm_sum}
            \Big\|\sum_{i=1}^n a_i\Big\|_2^2 \leq n\sum_{i=1}^n
            \left\|a_i\right\|_2^2.
        \end{equation}
        \item For two vectors $a,b\in\RR^d$,
        \begin{equation}
            \label{eq:norm_alpha}
            \|a+b\|_2^2 \leq (1+\alpha)\|a\|_2^2 + (1+\alpha^{-1})\|b\|_2^2, \quad \quad \quad \forall \alpha>0.
        \end{equation}
        \item For two vectors $a,b\in\RR^d$,
        \begin{equation}
            \label{eq:inner_alpha}
            2\langle a, b \rangle \leq \alpha\|a\|_2^2 + \alpha^{-1}\|b\|_2^2, \quad \quad \quad \forall \alpha>0.
        \end{equation}
    \end{itemize}
\end{property}

\subsection{Needed Lemmas}

% \aurelien{the expectation notation you use in the statement of the lemma hides
% a conditioning on previous steps (which is why there is no expectation in the
% right hand side). I don't think you have used this
% anywhere before. it may be a good idea to make the meaning of this notation
% explicit (e.g., before or in the lemma statement)}

In the following we denote by $\mathcal{F}_t = \sigma (Z^{(k)} | k \leq t)$ the natural filtration with respect to $Z^{(t)} = (Z_1^{(t)},\ldots,Z_n^{(t)})$. Remark that $\forall i = 1,\ldots, n$ the iterates $\theta_i^{(t+1)}$ and $\bar{\theta}^{(t+1)}$ are in particular $\calF_{t}$-measurable.

% \bat{Peut-être enlever le $Z_t$ dans l'esperence conditional}

\begin{lemma}[Descent Lemma - Convex case]
\label{lemma:descent} Consider the setting of
Theorem \ref{thme:main} and let $\eta_t\leq\frac{1}{4L}$, then we almost surely have:
\begin{equation}
    \EE_{Z^{(t)} | \calF_{t-1}} \left\|\bar{\theta}^{(t+1)} - \theta^{\star}\right\|^2 \leq \left\|\bar{\theta}^{(t)} - \theta^{\star}\right\|^2 + \frac{\eta_t^2\bar{\sigma}^2}{n} - \eta_t\left(f(\bar{\theta}^{(t)}) - f^\star\right) + \frac{3L}{2n}\eta_t \left\|\Theta^{(t)}-\overline{\Theta}^{(t)}\right\|_F^2,
\end{equation}
where $\EE_{Z^{(t)} | \calF_{t-1}}$ stands for the conditional expectation $\EE_{Z^{(t)}}[ \cdot | \calF_{t-1}]$.
\end{lemma}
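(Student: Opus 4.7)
The plan is to track how the averaged iterate $\bar{\theta}^{(t)}$ moves toward $\theta^\star$ after one D-SGD step, and show that the discrepancy between local iterates $\theta_i^{(t)}$ and their average $\bar{\theta}^{(t)}$ is the only obstacle preventing a clean centralized-SGD-style descent. The starting point is the observation that, since $W^{(t)}$ is doubly stochastic (Property~\ref{proper:mix_average}), averaging the matrix update over columns kills the mixing step and leaves
\[
\bar{\theta}^{(t+1)} = \bar{\theta}^{(t)} - \frac{\eta_t}{n}\sum_{i=1}^n \nabla F_i(\theta_i^{(t)}, Z_i^{(t)}).
\]
I would expand $\|\bar{\theta}^{(t+1)} - \theta^\star\|^2$ and take the conditional expectation over $Z_1^{(t)},\dots,Z_n^{(t)}$. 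Splitting stochastic gradients as $\nabla F_i = \nabla f_i + (\nabla F_i - \nabla f_i)$, independence across nodes together with Assumption~\ref{ass:variance} yields the noise term $\eta_t^2 \bar{\sigma}^2 / n$, exactly the first correction in the target inequality.

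Next comes the deterministic drift $-2\eta_t \left\langle \bar{\theta}^{(t)} - \theta^\star, \tfrac{1}{n}\sum_i \nabla f_i(\theta_i^{(t)})\right\rangle + \eta_t^2 \bigl\|\tfrac{1}{n}\sum_i \nabla f_i(\theta_i^{(t)})\bigr\|^2$. I would insert $\pm \theta_i^{(t)}$ in the inner product, so the cross term splits into (a) $-2\eta_t \tfrac{1}{n}\sum_i \langle \theta_i^{(t)} - \theta^\star, \nabla f_i(\theta_i^{(t)})\rangle$, which convexity \eqref{eq:grad_conv} bounds by $-2\eta_t (f(\bar{\theta}^{(t)}) - f^\star)$ after another use of convexity to swap $f_i(\theta_i^{(t)})$ for $f_i(\bar{\theta}^{(t)})$ at the cost of a first-order term; and (b) $-2\eta_t \tfrac{1}{n}\sum_i \langle \bar{\theta}^{(t)} - \theta_i^{(t)}, \nabla f_i(\theta_i^{(t)})\rangle$, which I would control by Young's inequality to produce, on the one hand, a consensus term $\tfrac{L}{n}\|\Theta^{(t)} - \overline{\Theta}^{(t)}\|_F^2$, and on the other hand a $\tfrac{1}{L}\cdot\tfrac{1}{n}\sum_i \|\nabla f_i(\theta_i^{(t)})\|^2$ piece that can be recycled against the squared-gradient term.

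The key technical step is to bound $\bigl\|\tfrac{1}{n}\sum_i \nabla f_i(\theta_i^{(t)})\bigr\|^2 \le \tfrac{1}{n}\sum_i \|\nabla f_i(\theta_i^{(t)})\|^2$ by Jensen, and then to apply the convex smoothness bound \eqref{eq:f_i_smooth2} with reference point $\theta^\star$, giving
\[
\tfrac{1}{n}\sum_i \|\nabla f_i(\theta_i^{(t)})\|^2 \le 2L\bigl[\tfrac{1}{n}\sum_i f_i(\theta_i^{(t)}) - f^\star\bigr],
\]
since $\nabla f_i(\theta^\star) = 0$ is not assumed; instead I would first inject the standard manipulation $\|\nabla f_i(\theta_i^{(t)})\|^2 \le 2L(f_i(\theta_i^{(t)}) - f_i(\theta^\star) - \langle \nabla f_i(\theta^\star), \theta_i^{(t)} - \theta^\star\rangle)$ and, after averaging, use $\tfrac{1}{n}\sum_i \nabla f_i(\theta^\star) = \nabla f(\theta^\star) = 0$. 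Convexity of each $f_i$ then lets me replace $\tfrac{1}{n}\sum_i f_i(\theta_i^{(t)})$ by $f(\bar{\theta}^{(t)})$ plus an $L$-smooth error proportional to $\|\Theta^{(t)} - \overline{\Theta}^{(t)}\|_F^2/n$.

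The main obstacle — and the place where the stepsize condition is actually used — is the bookkeeping at the end: several terms of the form $\eta_t \cdot (f(\bar{\theta}^{(t)}) - f^\star)$, $\eta_t^2 L \cdot (f(\bar{\theta}^{(t)}) - f^\star)$, and $\eta_t L \cdot \|\Theta^{(t)} - \overline{\Theta}^{(t)}\|_F^2/n$ all appear with competing signs. Choosing $\eta_t \le 1/(4L)$ is exactly what makes the positive $\eta_t^2 L$ multiples of the suboptimality gap absorb into at most half of $\eta_t (f(\bar{\theta}^{(t)}) - f^\star)$, leaving the single negative $-\eta_t (f(\bar{\theta}^{(t)}) - f^\star)$ and, after similar consolidation, the consensus coefficient $\tfrac{3L}{2n}\eta_t$ claimed in the statement. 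Once this careful accounting is done, collecting all surviving terms yields the inequality.
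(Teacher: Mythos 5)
Your overall skeleton (averaging kills the mixing step, the noise term gives $\eta_t^2\bar{\sigma}^2/n$, the cross term is split by inserting $\pm\theta_i^{(t)}$, and the stepsize condition does the final bookkeeping) matches the paper's proof, but two of your key steps do not go through as described. First, the bound $\frac{1}{n}\sum_i\|\nabla f_i(\theta_i^{(t)})\|^2 \leq 2L\big[\frac{1}{n}\sum_i f_i(\theta_i^{(t)}) - f^\star\big]$ is not available here: the cocoercivity inequality \eqref{eq:f_i_smooth2} controls $\|\nabla f_i(\theta_i^{(t)}) - \nabla f_i(\theta^\star)\|^2$, not $\|\nabla f_i(\theta_i^{(t)})\|^2$, and in the heterogeneous setting $\nabla f_i(\theta^\star)\neq 0$ individually. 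Worse, the cancellation you invoke "after averaging" fails: $\frac{1}{n}\sum_i\langle\nabla f_i(\theta^\star),\theta_i^{(t)}-\theta^\star\rangle$ does not vanish because $\theta_i^{(t)}$ varies with $i$; only when all gradients are evaluated at the \emph{common} point $\bar{\theta}^{(t)}$ does $\frac{1}{n}\sum_i\nabla f_i(\theta^\star)=0$ kill the cross term. This is exactly why the paper first inserts $\pm\nabla f_i(\bar{\theta}^{(t)})$ inside $T_1=\|\frac{1}{n}\sum_i\nabla f_i(\theta_i^{(t)})\|^2$: the deviation $\nabla f_i(\theta_i^{(t)})-\nabla f_i(\bar{\theta}^{(t)})$ is handled by plain Lipschitzness \eqref{eq:f_i_smooth} (yielding the $\frac{2L^2}{n}\|\Theta^{(t)}-\overline{\Theta}^{(t)}\|_F^2$ piece, harmless for $\eta_t\le 1/(4L)$), and only $\nabla f_i(\bar{\theta}^{(t)})-\nabla f_i(\theta^\star)$ is fed to \eqref{eq:f_i_smooth2}, where the averaged linear term genuinely cancels.

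Second, your treatment of $-\frac{2\eta_t}{n}\sum_i\langle\bar{\theta}^{(t)}-\theta_i^{(t)},\nabla f_i(\theta_i^{(t)})\rangle$ by Young's inequality with parameter $L$ produces a leftover $\frac{\eta_t}{L}\cdot\frac{1}{n}\sum_i\|\nabla f_i(\theta_i^{(t)})\|^2$, which after any cocoercivity-type bound becomes $+2\eta_t\times(\text{function gap})$ — the \emph{same first order in $\eta_t$} as the negative descent term $-2\eta_t(f(\bar{\theta}^{(t)})-f^\star)$. Your claim that only "$\eta_t^2 L$ multiples of the suboptimality gap" need absorbing is therefore wrong: the problematic coefficient is $2\eta_t$, and no stepsize condition can rescue it. The paper instead lower-bounds this inner product directly via the smoothness inequality \eqref{eq:Lip_f_i}, $\langle\nabla f_i(\theta_i^{(t)}),\bar{\theta}^{(t)}-\theta_i^{(t)}\rangle \geq f_i(\bar{\theta}^{(t)})-f_i(\theta_i^{(t)})-\frac{L}{2}\|\bar{\theta}^{(t)}-\theta_i^{(t)}\|^2$, so that the $f_i(\theta_i^{(t)})$ exactly cancels the one produced by convexity \eqref{eq:grad_conv} in the other half of the cross term, at the cost of only $\frac{L\eta_t}{n}\|\Theta^{(t)}-\overline{\Theta}^{(t)}\|_F^2$ and no gradient-norm term at all. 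With these two repairs your argument collapses onto the paper's; without them the proof does not close.
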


\begin{proof}
The proof closely follows the one in \cite{koloskova20}. Using the recursion of D-SGD and since all mixing matrices are doubly stochastic and preserve the average (Proposition \ref{proper:mix_average}) we have:
\begin{align*}
    \|\bar{\theta}^{(t+1)}  &- \theta^{\star}\|^2  = \left\|\bar{\theta}^{(t)} - \frac{\eta_t}{n}\sum_{i=1}^n\nabla F_i(\theta_i^{(t)},Z_i^{(t)}) - \theta^{\star}\right\|^2 \\
    & = \left\|\bar{\theta}^{(t)} - \theta^{\star} -  \frac{\eta_t}{n}\sum_{i=1}^n\nabla f_i(\theta_i^{(t)}) +  \frac{\eta_t}{n}\sum_{i=1}^n\nabla f_i(\theta_i^{(t)}) - \frac{\eta_t}{n}\sum_{i=1}^n\nabla F_i(\theta_i^{(t)},Z_i^{(t)}) \right\|^2 \\
    & = \left\|\bar{\theta}^{(t)} - \theta^{\star} -  \frac{\eta_t}{n}\sum_{i=1}^n\nabla f_i(\theta_i^{(t)})  \right\|^2 + \eta_t^2\left\| \frac{1}{n}\sum_{i=1}^n\nabla f_i(\theta_i^{(t)}) - \frac{1}{n}\sum_{i=1}^n\nabla F_i(\theta_i^{(t)},Z_i^{(t)}) \right\|^2 \\
    & \quad \quad \quad + 2\left\langle \bar{\theta}^{(t)} - \theta^{\star} - 
    \frac{\eta_t}{n}\sum_{i=1}^n\nabla f_i(\theta_i^{(t)}) , \frac{\eta_t}{n}\sum_{i=1}^n\nabla f_i(\theta_i^{(t)}) - \frac{\eta_t}{n}\sum_{i=1}^n\nabla F_i(\theta_i^{(t)},Z_i^{(t)}) \right\rangle.
\end{align*}
Passing to the conditional expectation, the last term (the inner product) is
equal to $0$. This comes from the fact that $\EE_{Z_i^{(t)} | \calF_{t-1}} [\nabla F_i(\theta_i^{(t)},Z_i^{(t)})]= \nabla f_i(\theta_i^{(t)})$.
% , making the second term in the inner product equal to $0$ (in expectation).
We therefore need to bound the first two terms in the conditional expectation.% with respect to $Z^{(t)} = (Z_1^{(t)},\ldots,Z_n^{(t)})$.

The second one can easily be bounded using Assumption~\ref{ass:variance}:
\begin{align*}
    \eta_t^2\EE_{Z^{(t)} | \calF_{t-1}} \Big\| \frac{1}{n}\sum_{i=1}^n\nabla f_i(\theta_i^{(t)})  - \frac{1}{n}\sum_{i=1}^n & \nabla F_i(\theta_i^{(t)},Z_i^{(t)}) \Big\|^2 \\
    & = \frac{\eta_t^2}{n^2} \EE_{Z^{(t)} | \calF_{t-1}} \Big\| \sum_{i=1}^n(\nabla f_i(\theta_i^{(t)}) -\nabla F_i(\theta_i^{(t)},Z_i^{(t)}) )\Big\|^2 \\
    & =  \frac{\eta_t^2}{n^2}\sum_{i=1}^n \EE_{Z_i^{(t)} | \calF_{t-1}} \Big\| \nabla f_i(\theta_i^{(t)}) -\nabla F_i(\theta_i^{(t)},Z_i^{(t)}) \Big\|^2 \\
    & \overset{(A.\ref{ass:variance})}{\leq} \frac{\eta_t^2\bar{\sigma}^2}{n},
\end{align*}
where the second equality was obtained using the identity $\EE\left\|\sum_i Y_i\right\|_2^2 = \sum_i\EE \|Y_i\|_2^2$ when $Y_i$ are independent and $\EE Y_i = 0$.

Now that the second term is bounded, we can move to the first one. Because $\theta_i^{(t)}$ and $\bar{\theta}^{(t)}$ are $\calF_{t-1}$-measurable, we have 
\begin{align*}
    \EE_{Z^{(t)} | \calF_{t-1}}&\left\|\bar{\theta}^{(t)} - \theta^{\star} -  \frac{\eta_t}{n}\sum_{i=1}^n\nabla f_i(\theta_i^{(t)})  \right\|^2 = \left\|\bar{\theta}^{(t)} - \theta^{\star} -  \frac{\eta_t}{n}\sum_{i=1}^n\nabla f_i(\theta_i^{(t)})  \right\|^2 \\
    & = \left\|\bar{\theta}^{(t)} - \theta^{\star} \right\|^2 + \eta_t^2 \underbrace{\left\|\frac{1}{n}\sum_{i=1}^n\nabla f_i(\theta_i^{(t)})  \right\|^2}_{T_1} -\underbrace{2\eta_t\left\langle  \bar{\theta}^{(t)} - \theta^{\star} ,  \frac{1}{n}\sum_{i=1}^n\nabla f_i(\theta_i^{(t)}) \right\rangle}_{T_2}.
\end{align*}

In order to bound $T_1$, recall that by definition $\frac{1}{n}\sum_i\nabla f_i(\theta^\star) = 0$, therefore:

\begin{align*}
    T_1 & = \left\|\frac{1}{n}\sum_{i=1}^n(\nabla f_i(\theta_i^{(t)})   -  \nabla f_i(\bar{\theta}^{(t)})   +   \nabla f_i(\bar{\theta}^{(t)})   -   \nabla f_i(\theta^{\star}))\right\|^2 \\
    & \overset{(\ref{eq:norm_sum})}{\leq}  2\left\|\frac{1}{n}\sum_{i=1}^n(\nabla f_i(\theta_i^{(t)})   -  \nabla f_i(\bar{\theta}^{(t)})) \right\|^2 +   2\left\|\frac{1}{n}\sum_{i=1}^n(\nabla f_i(\bar{\theta}^{(t)})   -   \nabla f_i(\theta^{\star}))\right\|^2 \\
    & \overset{(\ref{eq:norm_sum})}{\leq} \frac{2}{n}\sum_{i=1}^n\left\|\nabla f_i(\theta_i^{(t)})   -  \nabla f_i(\bar{\theta}^{(t)}) \right\|^2 +   \frac{2}{n}\sum_{i=1}^n\left\|\nabla f_i(\bar{\theta}^{(t)})   -   \nabla f_i(\theta^{\star})\right\|^2 \\
    & \overset{(\ref{eq:f_i_smooth})(\ref{eq:f_i_smooth2})}{\leq} \frac{2L^2}{n}\sum_{i=1}^n\left\|\theta_i^{(t)}  -  \bar{\theta}^{(t)}\right\|^2      +    \frac{4L}{n}\sum_{i=1}^n    \left(f_i(\bar{\theta}^{(t)}) - f_i(\theta^{\star}) - \left\langle   \nabla f_i(\theta^{\star})   ,  \bar{\theta}^{(t)}  - \theta^{\star} \right\rangle\right) \\
    & = \frac{2L^2}{n}\sum_{i=1}^n\left\|\theta_i^{(t)}  -  \bar{\theta}^{(t)}\right\|^2    +    \frac{4L}{n}\sum_{i=1}^n    \left(f_i(\bar{\theta}^{(t)}) - f_i(\theta^{\star})\right)      -      4L\left\langle   \underbrace{\frac{1}{n}\sum_{i=1}^n \nabla f_i(\theta^{\star})}_{=0}   ,  \bar{\theta}^{(t)}  - \theta^{\star} \right\rangle \\
    & = \frac{2L^2}{n}\sum_{i=1}^n\left\|\theta_i^{(t)}  -  \bar{\theta}^{(t)}\right\|^2    +    4L\left(f(\bar{\theta}^{(t)}) - f^\star\right).
\end{align*}

Finally, we have to bound $T_2$:
\begin{align*}
    -T_2   & =    - \frac{2\eta_t}{n}\sum_{i=1}^n \left\langle  \bar{\theta}^{(t)} - \theta^{\star} , \nabla f_i(\theta_i^{(t)}) \right\rangle \\
    & = - \frac{2\eta_t}{n}\sum_{i=1}^n \left[\left\langle  \bar{\theta}^{(t)} - \theta_i^{(t)} , \nabla f_i(\theta_i^{(t)}) \right\rangle      +     \left\langle  \theta_i^{(t)} - \theta^{\star} , \nabla f_i(\theta_i^{(t)}) \right\rangle \right] \\
    & \overset{(\ref{eq:Lip_f_i})(\ref{eq:grad_conv})}{\leq} - \frac{2\eta_t}{n}\sum_{i=1}^n \left[ f_i(\bar{\theta}^{(t)})   -   f_i(\theta_i^{(t)})   -    \frac{L}{2}\|\bar{\theta}^{(t)}   -   \theta_i^{(t)}\|_2^2   +   f_i(\theta_i^{(t)})   -   f_i(\theta^{\star})\right] \\
    & = -2 \eta_t \left(f(\bar{\theta}^{(t)}) - f( \theta^\star)\right)   +   \frac{L\eta_t}{n} \sum_{i=1}^n \|\bar{\theta}^{(t)}   -   \theta_i^{(t)}\|_2^2 \\
    & =  -2 \eta_t \left(f(\bar{\theta}^{(t)}) - f^\star\right)   +   \frac{L\eta_t}{n} \|\overline{\Theta}^{(t)} - \Theta^{(t)}\|_F^2.
\end{align*}

Combining all previous results, we get:
\begin{align*}
    \EE_{Z^{(t)} | \calF_{t-1}} \left\|\bar{\theta}^{(t+1)} - \theta^{\star}\right\|^2 & \leq  \left\|\bar{\theta}^{(t)} - \theta^{\star} \right\|^2 + \frac{\eta_t^2\bar{\sigma}^2}{n} + \frac{L\eta_t}{n}\left(2L\eta_t+1\right)  \|\overline{\Theta}^{(t)} - \Theta^{(t)}\|_F^2 \\
    & \hspace{4cm} + 2\eta_t\left(2L\eta_t - 1\right) \left(f(\bar{\theta}^{(t)}) - f^\star\right).
\end{align*}

Since, by hypothesis, $\eta_t\leq\frac{1}{4L}$, we have $2L\eta_t+1\leq \frac{3}{2}$ and $2L\eta_t - 1 \leq -\frac{1}{2}$, which concludes the proof.
\end{proof}

\begin{lemma}[Descent Lemma - Non-convex case]
    \label{lemma:descentNC} Consider the setting of
    Theorem \ref{thme:main} and let $\eta_t\leq\frac{1}{4L}$, then we almost surely have:
    \begin{equation}
        \EE_{Z^{(t)} | \calF_{t-1}} f(\bar{\theta}^{(t+1)}) - f^\star \leq f(\bar{\theta}^{(t)}) - f^\star - \frac{\eta_t}{4}\|\nabla f(\bar{\theta}^{(t)})\|_2^2+ \frac{L^2}{n}\eta_t \left\|\Theta^{(t)}-\overline{\Theta}^{(t)}\right\|_F^2 + \frac{L\bar{\sigma}^2}{2n}\eta_t^2.
    \end{equation}

    \end{lemma}

    \begin{proof}
        The proof adapts the one of Lemma $10$ in \cite{koloskova20} to our setting.

\begin{align*}
    & \EE_{Z^{(t)} | \calF_{t-1}} f(\bar{\theta}^{(t+1)}) = \EE_{Z^{(t)} | \calF_{t-1}} f\Big( \bar{\theta}^{(t)} - \frac{\eta_t}{n}\sum_{i=1}^n\nabla F_i(\theta_i^{(t)},Z_i^{(t)})\Big) \\
    & \quad \overset{\eqref{eq:Lip_f_i}}{\leq} f(\bar{\theta}^{(t)}) - \EE_{Z^{(t)} | \calF_{t-1}} \Big\langle \nabla f(\bar{\theta}^{(t)})     ,     \frac{\eta_t}{n}\sum_{i=1}^n\nabla F_i(\theta_i^{(t)},Z_i^{(t)}) \Big\rangle \\
    & \pushright{+ \frac{L}{2}\EE_{Z^{(t)} | \calF_{t-1}}\Big\|\frac{\eta_t}{n}\sum_{i=1}^n\nabla F_i(\theta_i^{(t)},Z_i^{(t)})\Big\|_2^2 }\\
    & \quad = f(\bar{\theta}^{(t)}) \underbrace{ - \Big\langle \nabla f(\bar{\theta}^{(t)})     ,     \frac{\eta_t}{n}\sum_{i=1}^n\nabla f_i(\theta_i^{(t)}) \Big\rangle }_{\triangleq T_4} + \frac{L\eta_t^2}{2}   \underbrace{\EE_{Z^{(t)} | \calF_{t-1}}\Big\|\frac{1}{n}\sum_{i=1}^n\nabla F_i(\theta_i^{(t)},Z_i^{(t)})\Big\|_2^2}_{\triangleq T_5}
\end{align*}

Adding and subtracting $\eta_t \nabla f(\bar{\theta}^{(t)})$ in $T_4$, we have 

\begin{align*}
    T_4 & = - \eta_t \Big\|\nabla f(\bar{\theta}^{(t)})\Big\|_2^2 + \frac{\eta_t}{n}\sum_{i=1}^n     \Big\langle \nabla f(\bar{\theta}^{(t)})     ,     \nabla f_i(\bar{\theta}^{(t)}) - \nabla f_i(\theta_i^{(t)}) \Big\rangle \\
    & \overset{\eqref{eq:inner_alpha}, \alpha = 1}{\leq} - \eta_t \Big\|\nabla f(\bar{\theta}^{(t)})\Big\|_2^2 +  \frac{\eta_t}{2} \Big\|\nabla f(\bar{\theta}^{(t)})\Big\|_2^2 +  \frac{\eta_t}{2n} \sum_{i=1}^n \Big\|\nabla f_i(\bar{\theta}^{(t)})  - \nabla f_i(\theta_i^{(t)}) \Big\|_2^2 \\
    & \overset{\eqref{eq:f_i_smooth}}{\leq} -  \frac{\eta_t}{2} \Big\|\nabla f(\bar{\theta}^{(t)})\Big\|_2^2 +  \frac{L^2\eta_t}{2n}  \sum_{i=1}^n \Big\|\bar{\theta}^{(t)}  - \theta_i^{(t)} \Big\|_2^2~.
\end{align*}

Let us now bound the term $T_5$:

\begin{align*}
    T_5 &= \EE_{Z^{(t)} | \calF_{t-1}}\Big\|\frac{1}{n}\sum_{i=1}^n\nabla F_i(\theta_i^{(t)},Z_i^{(t)}) - \frac{1}{n}\sum_{i=1}^n\nabla f_i(\theta_i^{(t)}) + \frac{1}{n}\sum_{i=1}^n\nabla f_i(\theta_i^{(t)})\Big\|_2^2 \\
    &= \frac{1}{n^2}\EE_{Z^{(t)} | \calF_{t-1}}\Big\|\sum_{i=1}^n\nabla F_i(\theta_i^{(t)},Z_i^{(t)}) -\sum_{i=1}^n\nabla f_i(\theta_i^{(t)})\Big\|_2^2 + \Big\| \frac{1}{n}\sum_{i=1}^n\nabla f_i(\theta_i^{(t)})\Big\|_2^2 \\
    & \overset{(A.\ref{ass:variance})}{=} \frac{\bar{\sigma}^2}{n} + \Big\| \frac{1}{n}\sum_{i=1}^n\nabla f_i(\theta_i^{(t)}) - \nabla f(\bar{\theta}^{(t)}) + \nabla f(\bar{\theta}^{(t)})\Big\|_2^2 \\
    & \overset{\eqref{eq:norm_sum}}{\leq} \frac{\bar{\sigma}^2}{n} + 2\Big\| \frac{1}{n}\sum_{i=1}^n\nabla f_i(\theta_i^{(t)}) - \nabla f(\bar{\theta}^{(t)}) \Big\|_2^2 +  2\Big\|\nabla f(\bar{\theta}^{(t)})\Big\|_2^2 \\
    & \overset{\eqref{eq:norm_sum}}{\leq} \frac{\bar{\sigma}^2}{n} + \frac{2}{n} \sum_{i=1}^n \Big\| \nabla f_i(\theta_i^{(t)}) - \nabla f_i(\bar{\theta}^{(t)}) \Big\|_2^2 +  2\Big\|\nabla f(\bar{\theta}^{(t)})\Big\|_2^2 \\
    & \overset{\eqref{eq:f_i_smooth}}{\leq} \frac{\bar{\sigma}^2}{n} + \frac{2L^2}{n} \sum_{i=1}^n \Big\| \theta_i^{(t)} - \bar{\theta}^{(t)} \Big\|_2^2 +  2\Big\|\nabla f(\bar{\theta}^{(t)})\Big\|_2^2.
\end{align*}

Next, plugging $T_4$ and $T_5$ into the first inequality, we have:

\begin{align*}
    &\EE_{Z^{(t)} | \calF_{t-1}} f(\bar{\theta}^{(t+1)}) \\
    & \quad \leq f(\bar{\theta}^{(t)})   -  \eta_t\Big(\frac{1}{2} - L\eta_t\Big) \Big\|\nabla f(\bar{\theta}^{(t)})\Big\|_2^2 +  \Big(\frac{L^2\eta_t}{2n} + \frac{L^3\eta_t^2}{n}\Big) \left\|\Theta^{(t)}-\overline{\Theta}^{(t)}\right\|_F^2 + \frac{L\bar{\sigma}^2}{2n}\eta_t^2~.
\end{align*}
Since by hypothesis $\eta_t\leq \frac{1}{4L}$, we have $\frac{1}{2} - L\eta_t
\geq \frac{1}{4}$ and $\frac{L^2\eta_t}{2n} + \frac{L^3\eta_t^2}{n} \leq 
\frac{L^2\eta_t}{n}$, we therefore get

\begin{align*}
    \EE_{Z^{(t)} | \calF_{t-1}} f(\bar{\theta}^{(t+1)}) \leq f(\bar{\theta}^{(t)})   -  \frac{\eta_t}{4} \Big\|\nabla f(\bar{\theta}^{(t)})\Big\|_2^2 +  \frac{L^2\eta_t}{n}\left\|\Theta^{(t)}-\overline{\Theta}^{(t)}\right\|_F^2 + \frac{L\bar{\sigma}^2}{2n}\eta_t^2~.
\end{align*}
Subtracting each side of the equation by $f^\star$, we obtain the final result.
    \end{proof}

\begin{lemma}[Consensus Control]
\label{lemma:consensus_control}
Consider the setting of Theorem \ref{thme:main} and let $\eta_t\leq\frac{p}{8L}$, then:
    \begin{equation}
        \label{eq:consensus_control}
        \EE \left\|\Theta^{(t)}-\overline{\Theta}^{(t)}\right\|_F^2 \leq (1-\frac{p}{4})\EE \left\|\Theta^{(t-1)}-\overline{\Theta}^{(t-1)}\right\|_F^2 + \frac{6n\bar{\tau}^2}{p}\eta_{t-1}^2.
    \end{equation}
\end{lemma}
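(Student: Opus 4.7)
The plan is to express the consensus error at time $t$ directly from the D-SGD recursion in matrix form. Setting $G^{(t-1)} \triangleq \nabla F(\Theta^{(t-1)}, Z^{(t-1)})$ and using the fact that doubly-stochastic mixing preserves the column-wise average (Property~\ref{proper:mix_average}), one obtains
\begin{equation*}
\Theta^{(t)} - \overline{\Theta}^{(t)} = \bigl(\Theta^{(t-1)}W^{(t-1)} - \overline{\Theta}^{(t-1)}\bigr) - \eta_{t-1}\bigl(G^{(t-1)}W^{(t-1)} - \overline{G^{(t-1)}}\bigr).
\end{equation*}
I will then apply the $(1+\alpha)$-inequality \eqref{eq:norm_alpha} to the Frobenius norm to split this into a pure mixing contribution and a stochastic gradient contribution. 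The first is handled immediately by Assumption~\ref{ass:rho}, yielding a $(1-p)$ contraction on the previous consensus error.

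The main difficulty is the stochastic gradient term, since Assumption~\ref{ass:new} only bounds $\|GW-\bar G\|_F^2$ when all columns are gradients evaluated at a \emph{common} $\theta$, whereas here each column uses a distinct local iterate $\theta_j^{(t-1)}$. To bridge this gap, I will introduce the auxiliary matrix $\widetilde{G}^{(t-1)}$ whose $j$-th column is $\nabla F_j(\bar{\theta}^{(t-1)}, Z_j^{(t-1)})$ and use the decomposition
\begin{equation*}
G^{(t-1)}W^{(t-1)} - \overline{G^{(t-1)}} = \bigl(\widetilde{G}^{(t-1)}W^{(t-1)} - \overline{\widetilde{G}^{(t-1)}}\bigr) + \bigl((G^{(t-1)}-\widetilde{G}^{(t-1)})W^{(t-1)} - \overline{G^{(t-1)}-\widetilde{G}^{(t-1)}}\bigr),
\end{equation*}
together with a second $(1+\beta)$-inequality. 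Conditioning on the randomness prior to iteration $t-1$ makes $\bar\theta^{(t-1)}$ deterministic, so Assumption~\ref{ass:new} applied at $\theta = \bar\theta^{(t-1)}$ bounds the expected squared Frobenius norm of the first summand by $n\bar\tau^2$. For the second summand, Assumption~\ref{ass:rho} applied to $G^{(t-1)}-\widetilde G^{(t-1)}$ produces the factor $(1-p)$, after which an $L$-smoothness argument column by column bounds $\|G^{(t-1)}-\widetilde{G}^{(t-1)}\|_F^2$ by $L^2\|\Theta^{(t-1)}-\overline{\Theta}^{(t-1)}\|_F^2$, reintroducing the very quantity we are trying to control.

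Putting the pieces together yields an inequality of the form
\begin{equation*}
\EE\|\Theta^{(t)} - \overline{\Theta}^{(t)}\|_F^2 \leq A\,\EE\|\Theta^{(t-1)} - \overline{\Theta}^{(t-1)}\|_F^2 + B\, n\bar{\tau}^2\eta_{t-1}^2,
\end{equation*}
with $A = (1+\alpha)(1-p) + (1+\alpha^{-1})(1+\beta^{-1})(1-p)L^2\eta_{t-1}^2$ and $B = (1+\alpha^{-1})(1+\beta)$. The last step is to tune the free parameters: choosing $\alpha = \frac{p}{2(1-p)}$ makes $(1+\alpha)(1-p) = 1-p/2$ and $1+\alpha^{-1}\leq 2/p$, while choosing $\beta = 2$ gives $B \leq 6/p$; the hypothesis $\eta_{t-1}\leq p/(8L)$ then absorbs the residual $L^2\eta_{t-1}^2$ term so that $A \leq 1-p/4$, matching the constants in \eqref{eq:consensus_control}.
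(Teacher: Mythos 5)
Your proposal is correct and follows essentially the same route as the paper's proof: the same matrix-form recursion with averaging preservation, the same $(1+\alpha)$ split into a mixing term contracted by Assumption~\ref{ass:rho} and a gradient term, the same auxiliary decomposition around gradients evaluated at $\bar{\theta}^{(t-1)}$ so that Assumption~\ref{ass:new} applies (conditionally on the past), and the same $L$-smoothness bound on the residual followed by tuning of the free parameters. The only differences are cosmetic choices of $\alpha$ and $\beta$ (the paper uses $\alpha=p/2$ and an implicit $\beta=1$), which lead to the same constants $1-\frac{p}{4}$ and $\frac{6}{p}$ under $\eta_{t-1}\leq\frac{p}{8L}$.
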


\begin{proof} For any $\alpha>0$, we have:
    \begin{align*}
        \EE  & \left\|\Theta^{(t)}-\overline{\Theta}^{(t)}\right\|_F^2  =  \EE \left\|\Theta^{(t)}\left(I - \frac{\1\1^\transpose}{n}\right)\right\|_F^2\\
        & = \EE \left\|   \left(\Theta^{(t-1)}-\eta_{t-1} \nabla F(\Theta^{(t-1)},Z^{(t-1)})\right)W^{(t-1)\transpose}   \left(I - \frac{\1\1^\transpose}{n}\right)  \right\|_F^2 \\
        & \overset{(\ref{eq:preserve_ave})}{=} \EE \left\|   \left(\Theta^{(t-1)}-\eta_{t-1} \nabla F(\Theta^{(t-1)},Z^{(t-1)})\right)   \left(W^{(t-1)\transpose} - \frac{\1\1^\transpose}{n}\right)  \right\|_F^2 \\
        & \overset{(\ref{eq:norm_alpha})}{\leq} (1+\alpha) \EE \left\|   \Theta^{(t-1)}\left(W^{(t-1)\transpose} - \frac{\1\1^\transpose}{n}\right)  \right\|_F^2   \\
        &  \pushright{+     (1+\alpha^{-1})\eta_{t-1}^2\underbrace{\EE \left\|   \nabla F(\Theta^{(t-1)},Z^{(t-1)})\left(W^{(t-1)\transpose} - \frac{\1\1^\transpose}{n}\right)  \right\|_F^2}_{T_3} }\\
        & \overset{(A.\ref{ass:rho})}{\leq} (1+\alpha) (1-p)\EE \left\|\Theta^{
        (t-1)}-\overline{\Theta}^{(t-1)}\right\|_F^2  + (1+\alpha^{-1})\eta_{t-1}^2T_3.
    \end{align*}

We now bound $T_3$ by relying on Assumption~\ref{ass:new}:
\begin{align*}
    T_3 & = \EE \Bigg\|   \left(\nabla F(\Theta^{(t-1)},Z^{(t-1)})    -    \nabla F(\overline{\Theta}^{(t-1)},Z^{(t-1)})    +      \nabla F(\overline{\Theta}^{(t-1)},Z^{(t-1)})  \right) \cdot \\
    & \hspace{10cm} \cdot \left(W^{(t-1)\transpose} - \frac{\1\1^\transpose}{n}\right)  \Bigg\|_F^2 \\
    & \overset{(\ref{eq:norm_sum})}{\leq} 2\EE \left\|   \left(\nabla F(\Theta^{(t-1)},Z^{(t-1)})    -    \nabla F(\overline{\Theta}^{(t-1)},Z^{(t-1)})     \right)\left(W^{(t-1)\transpose} - \frac{\1\1^\transpose}{n}\right)  \right\|_F^2     \\
    &  \pushright{ + 2\EE \left\|     \nabla F(\overline{\Theta}^{(t-1)},Z^{(t-1)})  \left(W^{(t-1)\transpose} - \frac{\1\1^\transpose}{n}\right)  \right\|_F^2 } \\
    & = 2\EE \left\|   \left(\nabla F(\Theta^{(t-1)},Z^{(t-1)})    -    \nabla F(\overline{\Theta}^{(t-1)},Z^{(t-1)})     \right)\left(W^{(t-1)\transpose} - \frac{\1\1^\transpose}{n}\right)  \right\|_F^2     \\
    &  \pushright{ +     2\sum_{i=1}^n\EE\left\|\sum_{j=1}^nW^{(t-1)}_{ij}\nabla F_j(\bar{\theta}^{(t-1)},Z_j^{(t-1)}) - \frac{1}{n}\sum_{j=1}^n\nabla F_j(\bar{\theta}^{(t-1)},Z_j^{(t-1)})\right\|_2^2 } \\
    & \overset{(\ref{eq:new})}{\leq} 2\EE \left\|   \left(\nabla F(\Theta^{(t-1)},Z^{(t-1)})    -    \nabla F(\overline{\Theta}^{(t-1)},Z^{(t-1)})     \right)\left(W^{(t-1)\transpose} - \frac{\1\1^\transpose}{n}\right)  \right\|_F^2 + 2n \bar{\tau}^2.
\end{align*}

For conciseness, we will denote $F_i(\theta_i^{(t-1)},Z_j^{(t-1)})$ by $F_i
(\theta_i^{(t-1)})$ and
$\nabla F
(\Theta,Z^{(t-1)})$ by $\nabla F(\Theta)$. Using Assumption
\ref{ass:rho}, we can bound the first term of the previous equation by:
% \aurelien{the huge bar notation below is a bit heavy/ugly , perhaps we could
% find
% something better}
\begin{align*}
    & \hspace{0.5cm} 2(1-p) \EE \left\|   \left(\nabla F(\Theta^{(t-1)})    -    \nabla F(\overline{\Theta}^{(t-1)})     \right) -  \left(\overline{\nabla F(\Theta^{(t-1)})    -    \nabla F(\overline{\Theta}^{(t-1)}) }    \right) \right\|_F^2 \\
    & \overset{(\ref{eq:norm_sum})}{\leq} 4(1-p)\left[\EE \left\|   \nabla F(\Theta^{(t-1)})    -    \nabla F(\overline{\Theta}^{(t-1)})    \right\|_F^2 + \EE \left\| \overline{\nabla F(\Theta^{(t-1)})    -    \nabla F(\overline{\Theta}^{(t-1)}) }    \right\|_F^2\right]\\
    & = 4(1-p)\times\\
    &\times \left[    \sum_{i=1}^n\Bigg(\EE \left\|    \nabla F_i(\theta_i^{(t-1)})  - \nabla F_i(\bar{\theta}^{(t-1)})     \right\|_2^2     +     \EE \Big\|    \frac{1}{n}\sum_{j=1}^n\left(\nabla F_j(\theta_j^{(t-1)})  - \nabla F_j(\bar{\theta}^{(t-1)})  \right)   \Big\|_2^2   \Bigg) \right] \\
    & \overset{(A.\ref{ass:smoothness})}{\leq} 4(1-p)\left[ L^2\sum_{i=1}^n         \EE \left\|    \theta_i^{(t-1)}  -  \bar{\theta}^{(t-1)}     \right\|_2^2             + \frac{n}{n^2} \EE \Big\|   \sum_{j=1}^n\left(\nabla F_j(\theta_j^{(t-1)})  - \nabla F_j(\bar{\theta}^{(t-1)})  \right)   \Big\|_2^2   \right] \\
    & \overset{(\ref{eq:norm_sum})}{\leq} 4(1-p)\left[ L^2\sum_{i=1}^n         \EE \left\|    \theta_i^{(t-1)}  -  \bar{\theta}^{(t-1)}     \right\|_2^2             +  \sum_{j=1}^n \EE \left\|  \nabla F_j(\theta_j^{(t-1)})  - \nabla F_j(\bar{\theta}^{(t-1)})    \right\|_2^2   \right]\\
    & \overset{(A.\ref{ass:smoothness})}{\leq} 4(1-p)\left[ L^2\sum_{i=1}^n         \EE \left\|    \theta_i^{(t-1)}  -  \bar{\theta}^{(t-1)}     \right\|_2^2             + L^2 \sum_{j=1}^n \EE \left\|    \theta_j^{(t-1)}  -  \bar{\theta}^{(t-1)}     \right\|_2^2   \right] \\
    & = 8(1-p)L^2\EE \left\|\Theta^{(t-1)}-\overline{\Theta}^{(t-1)}\right\|_F^2.
\end{align*}
% \aurelien{i am a bit confused how the $n/n^2$ appears (shouldn't be $1/n^2$?)
% and then disappears,
% please double check and clarify if needed}\bat{This comes from the 'big' sum over $i=1,\ldots,n$}

Combining all previous results and setting $\alpha = \frac{p}{2}$ , we get:
\begin{align*}
    \EE  \left\|\Theta^{(t)}-\overline{\Theta}^{(t)}\right\|_F^2 & \leq (1+\alpha) (1-p)\EE \left\|\Theta^{(t-1)}-\overline{\Theta}^{(t-1)}\right\|_F^2  \\
    & \hspace{0.5cm} + 8(1+\alpha^{-1})(1-p)L^2\eta_{t-1}^2 \EE \left\|\Theta^{(t-1)}-\overline{\Theta}^{(t-1)}\right\|_F^2        +      2 (1+\alpha^{-1})\eta_{t-1}^2 n \bar{\tau}^2 \\
    & \leq \underbrace{(1+\frac{p}{2}) (1-p)}_{\leq 1 - \frac{p}{2}}\EE \left\|\Theta^{(t-1)}-\overline{\Theta}^{(t-1)}\right\|_F^2  \\
    & \hspace{1cm} + \underbrace{8(1+\frac{2}{p})(1-p)}_{\leq \frac{16}{p}}L^2\eta_{t-1}^2 \EE \left\|\Theta^{(t-1)}-\overline{\Theta}^{(t-1)}\right\|_F^2        +      \underbrace{2 (1+\frac{2}{p})}_{\leq \frac{6}{p}}\eta_{t-1}^2 n \bar{\tau}^2.
\end{align*}

Since by hypothesis we have $\eta_{t-1}\leq \frac{p}{8L}$, we can bound the second term and get:
\begin{align*}
    \EE  \left\|\Theta^{(t)}-\overline{\Theta}^{(t)}\right\|_F^2 & \leq \left(1-\frac{p}{2}+\frac{p}{4}\right)\EE \left\|\Theta^{(t-1)}-\overline{\Theta}^{(t-1)}\right\|_F^2 + \frac{6n\bar{\tau}^2}{p}\eta_{t-1}^2 \\
    & = \left(1-\frac{p}{4}\right)\EE \left\|\Theta^{(t-1)}-\overline{\Theta}^{(t-1)}\right\|_F^2 + \frac{6n\bar{\tau}^2}{p}\eta_{t-1}^2.
\end{align*}

\end{proof}

\begin{corollary}[Consensus recursion]
\label{cor:cons_rec} Consider the setting
of Theorem \ref{thme:main} and fix $\eta_t = \eta \leq\frac{p}{8L}$, we have:
    \begin{equation}
        \label{eq:cons_recursion}
        \frac{1}{T+1}\sum_{t=0}^T \EE \left\|\Theta^{(t)}-\overline{\Theta}^{(t)}\right\|_F^2 \leq \frac{24\eta^2n\bar{\tau}^2}{p^2}.
    \end{equation}
\end{corollary}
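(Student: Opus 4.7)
The plan is to view the corollary as the immediate consequence of iterating the scalar recursion supplied by Lemma~\ref{lemma:consensus_control}. Setting $a_t \triangleq \EE \|\Theta^{(t)} - \overline{\Theta}^{(t)}\|_F^2$ and fixing $\eta_t = \eta$, Lemma~\ref{lemma:consensus_control} reads
$$a_t \leq \Bigl(1 - \tfrac{p}{4}\Bigr) a_{t-1} + \frac{6 n \bar{\tau}^2}{p}\eta^2,$$
a linear recurrence with contraction factor $1 - p/4$ and constant forcing term. The stepsize hypothesis $\eta \leq p/(8L)$ of the corollary is exactly what Lemma~\ref{lemma:consensus_control} requires, so the recurrence is applicable at every step $t \geq 1$.

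Next I would unroll the recurrence starting from $t = 0$. The initialization in Algorithm~\ref{alg:d-sgd} has all nodes share the same point $\theta^{(0)}$, so $\Theta^{(0)} = \overline{\Theta}^{(0)}$ and $a_0 = 0$; no transient term survives. A straightforward induction then gives
$$a_t \leq \frac{6 n \bar{\tau}^2}{p} \eta^2 \sum_{s=0}^{t-1}\Bigl(1-\tfrac{p}{4}\Bigr)^s \leq \frac{6 n \bar{\tau}^2}{p}\eta^2 \cdot \frac{4}{p} = \frac{24 n \bar{\tau}^2 \eta^2}{p^2},$$
where the partial geometric sum is bounded by its infinite counterpart $\sum_{s \geq 0}(1-p/4)^s = 4/p$. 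Because this bound is uniform in $t$, averaging it over $t = 0, \ldots, T$ preserves it and yields the stated inequality.

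There is really no substantive obstacle: the corollary is just the steady-state value of a geometric recurrence with zero initial condition. The only points that need care are (i) confirming $a_0 = 0$ so that no initial-condition term pollutes the bound, and (ii) noting that the constant stepsize satisfies the hypothesis of Lemma~\ref{lemma:consensus_control} at every iteration. Both are immediate from the corollary's assumptions.
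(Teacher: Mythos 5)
Your proposal is correct and follows exactly the paper's own argument: unroll the geometric recursion from Lemma~\ref{lemma:consensus_control} with $a_0=0$, bound the partial geometric sum by $4/p$ to get the uniform bound $24\eta^2 n\bar{\tau}^2/p^2$, and average over $t$. Nothing to add.
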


\begin{proof}
    Unrolling the expression (\ref{eq:consensus_control}) in Lemma 
    \ref{lemma:consensus_control} up to $t=0$, we have for all $t>0$: 
    \begin{align*}
        \EE \left\|\Theta^{(t)}-\overline{\Theta}^{(t)}\right\|_F^2 & \leq \left(1-\frac{p}{4}\right)^t    \underbrace{\left\|\Theta^{(0)}-\overline{\Theta}^{(0)}\right\|_F^2}_{ = 0}         +           \frac{6n\bar{\tau}^2}{p}\eta^2 \sum_{j=0}^{t-1} \left(1-\frac{p}{4}\right)^{j} \\
        &  =  \frac{6n\bar{\tau}^2}{p}\eta^2 \times \frac{1- \left(1-\frac{p}{4}\right)^{t}}{1- \left(1-\frac{p}{4}\right)} \\
        & \leq \frac{6\eta^2 n\bar{\tau}^2}{p} \times \frac{4}{p}\\
        & = \frac{24\eta^2n\bar{\tau}^2}{p^2}
    \end{align*}

    Summing and dividing by $T+1$, we get the final result.
\end{proof}

\begin{lemma}[Convergence rate with $T$ - Convex case]
\label{lemma:rate_T}  Consider the
setting of Theorem \ref{thme:main} in the convex case. There exists a constant stepsize $\eta\leq \eta_{\max}=\frac{p}{8L}$ such that
    \begin{equation}
        \frac{1}{T+1}\sum_{t=0}^T\EE (f(\bar{\theta}^{(t)}) - f^\star) \leq 2\left(\frac{br_0}{T+1}\right)^{\frac{1}{2}} + 2e^{\frac{1}{3}}\left(\frac{r_0}{T+1}\right)^{\frac{2}{3}}+\frac{dr_0}{T+1},
    \end{equation}
    where $b=\frac{\bar{\sigma}^2}{n}$, $e=\frac{36L\bar{\tau}^2}{p^2}$, $d=\frac{8L}{p}$ and $r_0=\|\theta^{(0)}-\theta^\star\|_2^2$.
\end{lemma}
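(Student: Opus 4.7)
The plan is to combine the descent lemma and the consensus recursion to obtain a three-term upper bound of the form $\frac{r_0}{\eta(T+1)} + \eta b + e\eta^2$ that must be minimized over $\eta \in (0, \eta_{\max}]$, and then invoke a standard step-size tuning argument to conclude.

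First I would take the recursion from Lemma \ref{lemma:descent}, pass it to the full expectation, and rearrange to isolate the error term. Moving $\eta(f(\bar\theta^{(t)}) - f^\star)$ to the left and summing over $t = 0, \ldots, T$ gives a telescoping sum in $\EE\|\bar\theta^{(t)} - \theta^\star\|^2$, which is bounded by $r_0 = \|\theta^{(0)}-\theta^\star\|^2$. Dividing by $\eta(T+1)$ yields
\begin{equation*}
\frac{1}{T+1}\sum_{t=0}^T \EE(f(\bar\theta^{(t)}) - f^\star) \;\leq\; \frac{r_0}{\eta(T+1)} + \frac{\eta \bar\sigma^2}{n} + \frac{3L}{2n(T+1)}\sum_{t=0}^{T}\EE\|\Theta^{(t)} - \overline{\Theta}^{(t)}\|_F^2.
\end{equation*}
The consensus sum is handled by Corollary \ref{cor:cons_rec}, which bounds it by $24\eta^2 n\bar\tau^2/p^2$. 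Plugging this in and simplifying yields exactly
\begin{equation*}
\Phi(\eta) \;\triangleq\; \frac{r_0}{\eta(T+1)} + b\eta + e\eta^2,
\end{equation*}
valid for any constant $\eta \leq \eta_{\max} = p/(8L) = 1/d$ (note the factor $8$ rather than $4$ is the binding constraint coming from Lemma \ref{lemma:consensus_control}).

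The remaining task is the step-size tuning: choose $\eta$ to (approximately) minimize $\Phi(\eta)$ subject to $\eta \leq 1/d$. This is the standard ``three-term balancing'' lemma (see e.g.\ \cite{koloskova20,stich2020error}), which I would either invoke directly or reprove in the appendix. The idea is to set
\begin{equation*}
\eta = \min\!\left\{\,\eta_{\max},\ \Big(\tfrac{r_0}{b(T+1)}\Big)^{\!1/2},\ \Big(\tfrac{r_0}{e(T+1)}\Big)^{\!1/3}\right\}\!.
\end{equation*}
A case analysis on which of these three quantities achieves the minimum gives an upper bound of $\Phi(\eta) \leq \tfrac{dr_0}{T+1} + 2(br_0/(T+1))^{1/2} + 2e^{1/3}(r_0/(T+1))^{2/3}$, which is the stated bound. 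Substituting $b = \bar\sigma^2/n$, $e = 36L\bar\tau^2/p^2$, and $d = 8L/p$ finishes the proof.

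The main obstacle, if any, is not conceptual but bookkeeping: one must verify that the admissible range $\eta \leq p/(8L)$ is still usable when the unconstrained minimizer exceeds $\eta_{\max}$ (in which case one simply takes $\eta = \eta_{\max}$ and the term $dr_0/(T+1)$ dominates), and keep the numerical constants consistent with the $36$ inside $e$ arising from the $\tfrac{3L}{2n}\cdot \tfrac{24 n}{p^2}$ factors. Once the tuning lemma is in place, everything else is a direct substitution.
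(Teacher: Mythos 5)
Your proposal matches the paper's proof essentially step for step: rearrange and telescope the descent lemma, bound the averaged consensus term via Corollary~\ref{cor:cons_rec} to obtain $\frac{r_0}{\eta(T+1)}+b\eta+e\eta^2$, and then apply the standard three-term step-size tuning lemma (the paper cites Lemma~15 of \cite{koloskova20}) with $\eta=\min\{(\frac{r_0}{b(T+1)})^{1/2},(\frac{r_0}{e(T+1)})^{1/3},\frac{1}{d}\}$. The constants, including the $36=\frac{3L}{2}\cdot\frac{24}{p^2}\cdot\frac{p^2}{L}$ bookkeeping and the binding constraint $\eta\leq p/(8L)$ from the consensus lemma, are all handled as in the paper, so the argument is correct and not materially different.
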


\begin{proof}
    Thanks to the descent lemma  (Lemma~\ref{lemma:descent}), we almost surely have:

    \begin{align*}
        f(\bar{\theta}^{(t)}) - f^\star \leq \frac{1}{\eta}\Big(  \left\|\bar{\theta}^{(t)} - \theta^{\star}\right\|^2 - \EE_{Z^{(t)} | \calF_{t-1}} \left\|\bar{\theta}^{(t+1)} - \theta^{\star}\right\|^2 + \frac{\eta^2\bar{\sigma}^2}{n}  + \frac{3L}{2n}\eta \left\|\Theta^{(t)}-\overline{\Theta}^{(t)}\right\|_F^2   \Big),
    \end{align*}
    where all terms are $\calF_{t-1}$-measurable. Therefore, 

    \begin{align*}
        \EE (f(\bar{\theta}^{(t)}) - f^\star) \leq \frac{1}{\eta}\Big(  \EE \left\|\bar{\theta}^{(t)} - \theta^{\star}\right\|^2 - \EE \left\|\bar{\theta}^{(t+1)} - \theta^{\star}\right\|^2 + \frac{\eta^2\bar{\sigma}^2}{n}  + \frac{3L}{2n}\eta \EE \left\|\Theta^{(t)}-\overline{\Theta}^{(t)}\right\|_F^2   \Big),
    \end{align*}
    and summing up we get:

    \begin{align*}
        \frac{1}{T+1} & \sum_{t=0}^T\EE (f(\bar{\theta}^{(t)}) - f^\star) \\ 
        & \leq \frac{1}{\eta(T+1)}\sum_{t=0}^T\left( \EE \left\|\bar{\theta}^{(t)} - \theta^{\star}\right\|^2 - \EE \left\|\bar{\theta}^{(t+1)} - \theta^{\star}\right\|^2 + \frac{\eta^2\bar{\sigma}^2}{n}  + \frac{3L}{2n}\eta \EE \left\|\Theta^{(t)}-\overline{\Theta}^{(t)}\right\|_F^2  \right) \\
        & \leq   \frac{1}{\eta(T+1)}\left\|\theta^{(0)} - \theta^{\star}\right\|^2 + \frac{\eta\bar{\sigma}^2}{n} + \frac{3L}{2n}  \frac{1}{T+1}\sum_{t=0}^T \EE \left\|\Theta^{(t)}-\overline{\Theta}^{(t)}\right\|_F^2 \\
        & \overset{(\ref{eq:cons_recursion})}{\leq}    \frac{1}{\eta(T+1)}\left\|\theta^{(0)} - \theta^{\star}\right\|^2 + \frac{\bar{\sigma}^2}{n}\eta + \frac{36L\bar{\tau}^2}{p^2}\eta^2.
    \end{align*}
Fixing $\eta = \min\left\{\left(\frac{r_0}{b(T+1)}\right)^{\frac{1}{2}},    \left(\frac{r_0}{e(T+1)}\right)^{\frac{1}{3}} ,      \frac{1}{d} \right\}$ with $b=\frac{\bar{\sigma}^2}{n}$, $e=\frac{36L\bar{\tau}^2}{p^2}$, $d=\frac{8L}{p}$ and $r_0=\|\theta^{(0)}-\theta^\star\|_2^2$, then applying Lemma \ref{lemma:stepsize} that is recalled after, we obtain the final result.
\end{proof}

\begin{lemma}[Convergence rate with $T$ - Non convex case]
  \label{lemma:rate_TNC}  Consider the
  setting of Theorem \ref{thme:main} in the non-convex case. There exists a
  constant stepsize
  $\eta\leq \eta_{\max}=\frac{p}{8L}$ such that
      \begin{equation}
          \frac{1}{T+1}\sum_{t=0}^T\EE \Big\| \nabla f(\bar{\theta}^{(t)})  \Big\|_2^2 \leq 2\left(\frac{4bf_0}{T+1}\right)^{\frac{1}{2}} + 2e^{\frac{1}{3}}\left(\frac{4f_0}{T+1}\right)^{\frac{2}{3}}+\frac{4df_0}{T+1},
      \end{equation}
      where $b=\frac{2L\bar{\sigma}^2}{n}$, $e=\frac{96L^2\bar{\tau}^2}{p^2}$, $d=\frac{8L}{p}$ and $f_0=f(\theta^{(0)}) - f^\star$.
  \end{lemma}

\begin{proof}
  Similarly to Lemma \ref{lemma:rate_T} for the convex case, we can use the descent Lemma \ref{lemma:descentNC} and obtain
  \begin{align*}
     \EE \Big\| \nabla f(\bar{\theta}^{(t)})  \Big\|_2^2 \leq \frac{4}{\eta}\Big(\EE f_t - \EE f_{t+1} + \frac{L^2\eta}{n} \EE \left\|\Theta^{(t)}-\overline{\Theta}^{(t)}\right\|_F^2 + \frac{L\bar{\sigma}^2}{2n}\eta^2 \Big), 
  \end{align*}
where for all $t\geq 0$, $f_t\triangleq f(\bar{\theta}^{(t)}) - f^\star$. Then summing up and dividing by $T+1$ we get:
 \begin{align*}
  \frac{1}{T+1}\sum_{t=0}^T \EE \Big\| \nabla f(\bar{\theta}^{(t)})  \Big\|_2^2 & \leq \frac{4f_0}{\eta(T+1)} + \frac{4L^2}{n}\frac{1}{T+1}\sum_{t=0}^T \EE \left\|\Theta^{(t)}-\overline{\Theta}^{(t)}\right\|_F^2 + \frac{2L\bar{\sigma}^2}{n}\eta \\
  & \overset{\eqref{eq:cons_recursion}}{\leq}\frac{4f_0}{\eta(T+1)} + \frac{4L^2}{n}  \frac{24\eta^2n\bar{\tau}^2}{p^2} + \frac{2L\bar{\sigma}^2}{n}\eta \\
  & = \frac{4f_0}{\eta(T+1)} + \frac{96L^2\bar{\tau}^2}{p^2}\eta^2 + \frac{2L
  \bar{\sigma}^2}{n}\eta ~.
 \end{align*}

 Fixing $\eta = \min\left\{\left(\frac{4f_0}{b(T+1)}\right)^{\frac{1}{2}},    \left(\frac{4f_0}{e(T+1)}\right)^{\frac{1}{3}} ,      \frac{1}{d} \right\}$ with $b=\frac{2L\bar{\sigma}^2}{n}$, $e=\frac{96L^2\bar{\tau}^2}{p^2}$, $d=\frac{8L}{p}$ and $f_0=f(\theta^{(0)}) - f^\star$, we can apply Lemma \ref{lemma:stepsize} and obtain the final result.
\end{proof}

\begin{lemma}[Tuning stepsize \citep{koloskova20}] \label{lemma:stepsize} For
any parameter $r_0,b,e,d\geq0$, $T\in\mathbb{N}$, we can fix 
    \begin{equation*}
        \eta = \min\left\{\left(\frac{r_0}{b(T+1)}\right)^{\frac{1}{2}},    \left(\frac{r_0}{e(T+1)}\right)^{\frac{1}{3}} ,      \frac{1}{d} \right\} \leq \frac{1}{d},
    \end{equation*}
    and get
    \begin{equation*}
        \frac{r_0}{\eta (T+1)} +b\eta +e\eta^2 \leq 2\left(\frac{br_0}{T+1}\right)^{\frac{1}{2}} + 2e^{\frac{1}{3}}\left(\frac{r_0}{T+1}\right)^{\frac{2}{3}}+\frac{dr_0}{T+1}.
    \end{equation*}
\end{lemma}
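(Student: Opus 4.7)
The plan is to apply the standard Frank–Wolfe convergence result (Jaggi 2013, Theorem~1), which for a convex function $g$ on a compact convex set gives $g(\widehat{W}^{(l)}) - g^\star \leq \tfrac{2 C_g}{l+2}$, where $C_g$ is the curvature constant of $g$ on $\calS$. First, I note that $\frac{\1\1^\transpose}{n} \in \calS$ and $g(\frac{\1\1^\transpose}{n}) = 0$, so $g^\star = 0$ and it suffices to prove $C_g \leq 8(\lambda + \tfrac{1}{n}\|\sum_k(\Pi_{:,k} - \overline{\Pi_{:,k}}\1)\Pi_{:,k}^\transpose\|^\star_2)$. Since $g$ is quadratic, the curvature constant admits the closed form $C_g = \sup_{W,S\in\calS} \langle S-W,\, \nabla^2 g \cdot (S-W)\rangle$. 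Computing the Hessian action on any direction $V$ gives $\nabla^2 g \cdot V = \tfrac{2}{n}V\Pi\Pi^\transpose + \tfrac{2\lambda}{n}V$, so $\langle V, \nabla^2 g \cdot V\rangle = \tfrac{2}{n}\|V\Pi\|_F^2 + \tfrac{2\lambda}{n}\|V\|_F^2$.

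The easy variance term is controlled by the observation that for $S, W$ doubly stochastic (hence convex combinations of permutations), $\|S-W\|_F^2 \leq 2n$, since two permutations differ in at most $2n$ entries of magnitude $1$. The delicate term is $\|(S-W)\Pi\|_F^2$, and here is where the tight bound in an ``appropriate norm'' enters. Since $S, W \in \calS$ imply $(S-W)\1 = 0$ and thus $(S-W)\overline{\Pi} = 0$ (recall $\overline{\Pi} = \tfrac{1}{n}\1\1^\transpose \Pi$), I can rewrite
\begin{equation*}
\|(S-W)\Pi\|_F^2 = \langle (S-W)^\transpose(S-W),\, (\Pi - \overline{\Pi})\Pi^\transpose\rangle.
\end{equation*}
Then I apply the matrix Hölder inequality with operator/nuclear-norm duality: $|\langle A, B\rangle| \leq \|A\|_{\mathrm{op}}\|B\|^\star_2$. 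For doubly stochastic matrices, $\|W\|_{\mathrm{op}} = 1$ (Birkhoff decomposition plus $\|P\|_{\mathrm{op}}=1$ for permutations), so $\|S-W\|_{\mathrm{op}} \leq 2$ and $\|(S-W)^\transpose(S-W)\|_{\mathrm{op}} \leq 4$. This yields $\|(S-W)\Pi\|_F^2 \leq 4\|(\Pi - \overline{\Pi})\Pi^\transpose\|^\star_2$.

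Combining, $C_g \leq \tfrac{8}{n}\|(\Pi - \overline{\Pi})\Pi^\transpose\|^\star_2 + 4\lambda \leq 8\bigl(\lambda + \tfrac{1}{n}\|(\Pi - \overline{\Pi})\Pi^\transpose\|^\star_2\bigr)$, and the identity $(\Pi - \overline{\Pi})\Pi^\transpose = \sum_k(\Pi_{:,k} - \overline{\Pi_{:,k}}\1)\Pi_{:,k}^\transpose$ yields the claimed constant after plugging into the Jaggi bound. For the degree statement, I argue structurally: unrolling Algorithm~\ref{alg:FW-W} gives $\widehat{W}^{(l)} = \alpha_0 I_n + \sum_{k=1}^l \alpha_k P^{(k)}$ for non-negative coefficients summing to $1$, so the support of $\widehat{W}^{(l)}$ is contained in the union of the supports of $I_n$ and of $l$ permutation matrices. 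Since each permutation contributes exactly one nonzero entry per row and per column, each row and each column of $\widehat{W}^{(l)}$ has at most $l$ off-diagonal nonzero entries, giving the per-iteration complexity bound.

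The main obstacle is the curvature bound in the right norm: a naive Frobenius-norm smoothness-times-diameter argument would give a constant involving $\|\Pi\Pi^\transpose\|_{\mathrm{op}}\cdot n$, scaling with the number of nodes. The crucial trick is to combine (i) the annihilation $(S-W)\overline{\Pi}=0$ granted by double stochasticity with (ii) operator/nuclear-norm duality with operator-norm bound $\leq 2$ on the doubly stochastic difference, which together produce a dimension-free bound on the nuclear-norm quantity and ultimately the $n$-independent rate that Section~\ref{sec:graph_learn} relies on.
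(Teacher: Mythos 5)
Your proposal does not address the statement at hand. The lemma you were asked to prove is the stepsize-tuning lemma (Lemma~\ref{lemma:stepsize}): a purely algebraic claim that for the choice $\eta = \min\bigl\{(\tfrac{r_0}{b(T+1)})^{1/2}, (\tfrac{r_0}{e(T+1)})^{1/3}, \tfrac{1}{d}\bigr\}$ one has $\tfrac{r_0}{\eta(T+1)} + b\eta + e\eta^2 \leq 2(\tfrac{br_0}{T+1})^{1/2} + 2e^{1/3}(\tfrac{r_0}{T+1})^{2/3} + \tfrac{dr_0}{T+1}$. What you have written instead is a proof of Theorem~\ref{coro:bound}, the Frank--Wolfe objective bound: curvature constants, nuclear norms, the annihilation $(S-W)\overline{\Pi}=0$, and the degree count all belong to that result, not to this one. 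Nothing in your argument involves $\eta$, $T$, $b$, $e$, $d$, or $r_0$, so there is no overlap with the claim to be established.

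For the record, the intended argument is an elementary case analysis on which of the three candidates attains the minimum. If $\eta = \tfrac{1}{d}$ then $\tfrac{1}{d}$ is the smallest, so $\tfrac{r_0}{\eta(T+1)} = \tfrac{dr_0}{T+1}$, while $b\eta \leq b(\tfrac{r_0}{b(T+1)})^{1/2} = (\tfrac{br_0}{T+1})^{1/2}$ and $e\eta^2 \leq e(\tfrac{r_0}{e(T+1)})^{2/3} = e^{1/3}(\tfrac{r_0}{T+1})^{2/3}$. If $\eta = (\tfrac{r_0}{b(T+1)})^{1/2}$ then $\tfrac{r_0}{\eta(T+1)} = (\tfrac{br_0}{T+1})^{1/2}$ and $b\eta$ equals the same quantity, while $e\eta^2$ is again bounded using $\eta \leq (\tfrac{r_0}{e(T+1)})^{1/3}$; the third case is symmetric. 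Summing the resulting bounds in each case gives the stated inequality (with the factors of $2$ absorbing the pairs of equal terms). The paper itself simply defers to Lemma~15 of \cite{koloskova20}, which carries out exactly this computation. Your Frank--Wolfe argument, whatever its merits for Theorem~\ref{coro:bound}, leaves the present lemma entirely unproved.
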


\begin{proof}
The proof of this lemma can be found in the supplementary materials of \cite{koloskova20} (Lemma 15).
\end{proof}

\section{Additional Results and Proofs}
\label{app:other-proofs}

\textbf{Proposition \ref{prop:H_bound}.}
\emph{        Let Assumptions \ref{ass:variance}-\ref{ass:rho} and \ref{ass:heterogeneityA} to be verified. Then Assumption~\ref{ass:new} is satisfied with
$\bar{\tau}^2 = (1-p)\left(\bar{\zeta}^2 + \bar{\sigma}^2\right)$,
where $\bar{\sigma}^2\triangleq\frac{1}{n}\sum_i\sigma_i^2$.
}

\begin{proof} Denoting $\nabla F(\theta) = \left(\nabla F_1(\theta,Z_1), \ldots, \nabla F_n(\theta,Z_n)\right)\in \RR^{d\times n}$, and using the relation
\begin{equation}
\label{eq:basic_bias_variance_formula}
\EE\|Y\|_2^2 = \|\EE Y\|_2^2 + \EE \|Y - \EE
Y\|_2^2,
\end{equation}
we
have:
\begin{align*}
& H^{(t)} = \frac{1}{n}\EE\|\nabla F(\theta)W^{(t)} - \overline{\nabla F(\theta)}\|_F^2 \\
& \overset{(A.\ref{ass:rho})}{\leq} \frac{1-p}{n} \EE\|\nabla F(\theta) - \overline{\nabla F(\theta)}\|_F^2 \\
& =  \frac{1-p}{n} \sum_{i=1}^n \EE\Big\|\nabla F_i(\theta,Z_i) - \frac{1}
{n}\sum_{j=1}^n\nabla F_j(\theta,Z_j)\Big\|_2^2 \\
& \overset{\eqref{eq:basic_bias_variance_formula}}{=} \frac{1-p}
{n}\sum_{i=1}^n \Bigg(
\Big\|\nabla f_i(\theta) - \frac{1}{n}\sum_{j=1}^n\nabla f_j(\theta)\Big\|_2^2
+ \EE
\Big\|\sum_{j=1}^n\big(\indic_{\{j=i\}}-\frac{1}{n})\big ( \nabla F_j
(\theta,Z_j) -
\nabla f_j(\theta) ) \Big\|_2^2\Bigg) \\
& \overset{(A.\ref{ass:heterogeneityA})}{\leq} (1-p)\Bigg(\bar{\zeta}^2 + 
\frac{1}{n}\sum_{i=1}^n \EE \Big\|\sum_{j=1}^n\big(\indic_{\{j=i\}}-\frac{1}
{n}\big)
( \nabla F_j(\theta,Z_j) - \nabla f_j(\theta) ) \Big\|_2^2\Bigg).
\end{align*}

Since all terms $j$ in the norm are independent and with expectation $0$, the expectation of the sum is equal to the sum of expectations and 
\begin{align*}
    H^{(t)} & \leq (1-p)\left(\bar{\zeta}^2 + \frac{1}{n}\sum_{i=1}^n\sum_
    {j=1}^n \big(\indic_{\{j=i\}}-\frac{1}{n}\big)^2 \EE \left\| \nabla F_j
    (\theta,Z_j) - \nabla f_j(\theta)  \right\|_2^2\right) \\
    & = (1-p)\Bigg(\bar{\zeta}^2 + \frac{1}{n}\sum_{j=1}^n\EE \left\| \nabla
    F_j(\theta,Z_j) - \nabla f_j(\theta)  \right\|_2^2 \underbrace{\sum_
    {i=1}^n \big(\indic_{\{j=i\}}-\frac{1}{n}\big)^2}_{= \frac{n-1}{n}} \Bigg)
    \\
    & \overset{(A.\ref{ass:variance})}{\leq} (1-p)\left(\bar{\zeta}^2 + \frac{n-1}{n}\bar{\sigma}^2\right) \leq  (1-p)\left(\bar{\zeta}^2 + \bar{\sigma}^2\right),
\end{align*}
which concludes the proof.
\end{proof}

\vspace{1.5cm}

\textbf{Proposition \ref{prop:upper-bound-H}.}
\emph{
    \emph{(Bounded neighborhood heterogeneity under
    label skew)}  Consider the statistical framework defined above and assume there exists $B>0$ such that
    $\forall k=1,\ldots,K$ and $\forall\theta \in\RR^d$, $\|\EE_X[\nabla F(\theta ; X,Y)|Y=k] - \frac{1}{K}\sum_{k^\prime = 1}^K \EE_X[\nabla F(\theta ; X,Y)|Y=k^\prime]\|_2^2\leq B$.
    % \begin{equation}
    % \label{eq:hyp_B}
    %     \Big\|\EE_X[\nabla F(\theta ; X,Y)|Y=k] - \frac{1}{K}\sum_{k^\prime =
    %     1}^K \EE_X[\nabla F(\theta ; X,Y)|Y=k^\prime]\Big\|_2^2\leq B~.
    % \end{equation}
 Then, denoting $\pi_{jk}\triangleq P_j(Y=k)$, Assumption \ref{ass:new} is satisfied with:
    \begin{align*}
       \bar{\tau}^2 =  \frac{KB}{n}\sum_{k=1}^K\sum_{i=1}^n  \Big(\sum_{j=1}^n  W_
        {ij}\pi_{jk}  - \frac{1}{n}\sum_{j=1}^n \pi_{jk}\Big)^2 + \frac{\sigma_{\max}^2}{n}\|W-\frac{1}{n}\1\1^\transpose\|_F^2~.
    \end{align*}
}

\begin{proof}

    First, observe that the local objective functions can be re-written

    \begin{align*}
        f_j(\theta) & = \EE_{(X,Y)\sim \calD_j}[F(\theta;X,Y)] \\
        & = \sum_{k=1}^K P_j(Y=k)\EE_X[F(\theta; X,Y)| Y= k] \\
        & = \sum_{k=1}^K \pi_{jk} \EE_X[F(\theta; X,Y)| Y= k] ~.
    \end{align*}

    From \eqref{eq:H_bias_variance}, we have the bias-variance decomposition

    \begin{align*}
        H(\theta) & \leq \frac{1}{n}\sum_{i=1}^n\left\|\sum_{j=1}^nW_{ij}\nabla f_j(\theta) - \nabla f(\theta)\right\|_2^2 + \frac{\sigma_{\max}^2}{n}\|W-\frac{1}{n}\1\1^\transpose\|_F^2 \\
        & =   \frac{1}{n}\sum_{i=1}^n\left\|\sum_{j=1}^n(W_{ij} - \frac{1}{n})  \nabla f_j(\theta)\right\|_2^2 + \frac{\sigma_{\max}^2}{n}\|W-\frac{1}{n}\1\1^\transpose\|_F^2 \\
        & =   \frac{1}{n}\sum_{i=1}^n\underbrace{\left\|\sum_{j=1}^n(W_{ij} - 
        \frac{1}{n})  \sum_{k=1}^K \pi_{jk} \EE_X[\nabla F(\theta; X,Y)| Y= k]\right\|_2^2}_{T_4} + \frac{\sigma_{\max}^2}{n}\|W-\frac{1}{n}\1\1^\transpose\|_F^2 ~.
    \end{align*}

    Then, observing that $\sum_{j=1}^n(W_{ij} - \frac{1}{n}) = 0$ and $\sum_{k=1}^K \pi_{jk} = 1$ imply 

    \begin{equation*}
        \sum_{j=1}^n(W_{ij} - \frac{1}{n})  \sum_{k=1}^K \pi_{jk} \frac{1}{K}\sum_{k^\prime =1}^n\EE_X[\nabla F(\theta; X,Y)| Y= k^\prime] = \textbf{0} ~,
    \end{equation*}

    we can add this in the norm of the term $T_4$ defined above and get
    \begin{align*}
        & T_4 = \Bigg\|\sum_{j=1}^n(W_{ij} - \frac{1}{n})  \sum_{k=1}^K \pi_{jk} \Big(\EE_X[\nabla F(\theta; X,Y)| Y= k] - \frac{1}{K}\sum_{k^\prime = 1}^K \EE_X[\nabla F(\theta ; X,Y)|Y=k^\prime]\Big)\Bigg\|_2^2 \\
            & = \Bigg\| \sum_{k=1}^K  \Big(\EE_X[\nabla F(\theta; X,Y)| Y= k] - \frac{1}{K}\sum_{k^\prime = 1}^K \EE_X[\nabla F(\theta ; X,Y)|Y=k^\prime]\Big) \sum_{j=1}^n(W_{ij} - \frac{1}{n}) \pi_{jk}\Bigg\|_2^2 \\
            &  \overset{\eqref{eq:norm_sum}}{\leq}  K \sum_{k=1}^K \Bigg\|  \Big(\EE_X[\nabla F(\theta; X,Y)| Y= k] - \frac{1}{K}\sum_{k^\prime = 1}^K \EE_X[\nabla F(\theta ; X,Y)|Y=k^\prime]\Big) \sum_{j=1}^n(W_{ij} - \frac{1}{n}) \pi_{jk}\Bigg\|_2^2 \\
            & = K\sum_{k=1}^K \underbrace{\left\|  \EE_X[\nabla F(\theta; X,Y)| Y= k] - \frac{1}{K}\sum_{k^\prime = 1}^K \EE_X[\nabla F(\theta ; X,Y)|Y=k^\prime] \right\|_2^2}_{\leq B} \\
            & \pushright{\times \Big(\sum_{j=1}^n(W_{ij} - \frac{1}{n}) \pi_{jk}\Big)^2 }\\
            & \leq KB \sum_{k=1}^K \Bigg(\sum_{j=1}^n  W_{ij}\pi_{jk} - 
            \frac{1}{n}\sum_{j=1}^n \pi_{jk}\Bigg)^2~.
    \end{align*}

    Finally, plugging this into the upper-bound on $H(\theta)$ found above, we get the
    final result.
    % \begin{align*}    
    %     & =  \frac{1}{n}\sum_{i=1}^n\left\| \sum_{k=1}^K  \EE_X[\nabla F(\theta; X,Y)| Y= k] \sum_{j=1}^n(W_{ij}^{(t)} - \frac{1}{n}) \pi_{jk}\right\|_2^2 + \frac{\sigma_{\max}^2}{n}\|W^{(t)}-\frac{1}{n}\1\1^\transpose\|_F^2 \\
    %     & \leq \frac{K}{n}\sum_{i=1}^n\sum_{k=1}^K \left\|   \EE_X[\nabla F(\theta; X,Y)| Y= k] \sum_{j=1}^n(W_{ij}^{(t)} - \frac{1}{n}) \pi_{jk}\right\|_2^2 + \frac{\sigma_{\max}^2}{n}\|W^{(t)}-\frac{1}{n}\1\1^\transpose\|_F^2 \\ 
    %     & = \frac{K}{n}\sum_{i=1}^n\sum_{k=1}^K \left(\sum_{j=1}^n  W_{ij}^{(t)}\pi_{jk} - \frac{1}{n}\sum_{j=1}^n \pi_{jk}\right)^2 \underbrace{\left\| \EE_X[\nabla F(\theta; X,Y)| Y= k] \right\|_2^2}_{\leq B} + \frac{\sigma_{\max}^2}{n}\|W^{(t)}-\frac{1}{n}\1\1^\transpose\|_F^2 \\ 
    %     & \leq \frac{KB}{n}\sum_{i=1}^n\sum_{k=1}^K \left(\sum_{j=1}^n  W_{ij}^{(t)}\pi_{jk} - \frac{1}{n}\sum_{j=1}^n \pi_{jk}\right)^2 + \frac{\sigma_{\max}^2}{n}\|W^{(t)}-\frac{1}{n}\1\1^\transpose\|_F^2 
    % \end{align*}
\end{proof}

\vspace{1.5cm}

\textbf{Theorem \ref{coro:bound}} \emph{Consider the statistical setup presented in Section~\ref{sec:framework_label_skew} and let $\{\widehat{W}^{(l)}\}^L_{l = 1}$
% , \widehat{W}^{(l)} \in \calS$
be the sequence of mixing matrices generated by Algorithm~\ref{alg:FW-W}. Then, at any iteration $l=1,\ldots,L$, we have:
\begin{equation*}
    \textstyle
    g(\widehat{W}^{(l)}) \leq \frac{16}{l+2}\big(\lambda + \frac{1}
    {n}\big\|\sum_{k=1}^K(\Pi_{:,k}-\overline{\Pi_{:,k}}\1)\cdot \Pi_
    {:,k}^\transpose \big\|^\star_{2} \big)~,
\end{equation*}
where $\left\|\cdot\right\|^\star_{2}$ stands for the nuclear norm, i.e., the sum of singular values.
 Bounding the second term in the parenthesis, we can obtain the looser bound
    \begin{equation*}
        \textstyle
        %\label{eq:bad_bound_g}
        g(\widehat{W}^{(l)}) \leq \frac{16}{l+2}\left(\lambda + 1 \right)~.
    \end{equation*}
    Furthermore, we have $d^{\text{in}}_{\text{max}}(\widehat{W}^{(l)})\leq l$ and $d^{\text{out}}_{\text{max}}(\widehat{W}^{(l)})\leq l$,
% each node has at most $l$ in-neighbors and at most $l$ out-neighbors in $\widehat{W}^{(l)}$,
resulting in a per-iteration complexity bounded by $l$.
}

\begin{proof}
    The proof of this theorem is directly derived from Theorem~\ref{thme:FW-conv} given below, applied with the parameters of our problem.
    To prove the first inequality, we first need to find a bound on the diameter of the set of doubly stochastic matrices, denoted $\text{diam}_{\|\cdot\|}(\calS)$, for a certain (matrix) norm $\|\cdot\|$. We fix this norm to be the operator norm induced by the $\ell_2$-norm, denoted $\left\|\cdot\right\|_2$, which is simply the maximum singular value of the matrix.

    For all $W,P\in \calS$, we have
    \begin{align*}
        \left\|W - P\right\|_{2} & \leq  \left\|W\right\|_2 + \left\|P\right\|_2 \\
        %& = \max_{1\leq j\leq n} \sum_{i = 1}^n|W_{ij}| + \max_{1\leq j\leq n} \sum_{i = 1}^n|P_{ij}| \\
        & = 1 + 1 = 2~,
    \end{align*}
    which comes from the fact that $W$ and $P$ are doubly stochastic, i.e.,
    their largest eigenvalue is $1$. This shows that $\text{diam}_{\|\cdot\|_2}(\calS)\leq 2$.

    Let us now find the Lipschitz constant associated to the gradient of the
    objective:
    \begin{equation*}
        %\label{eq:gradient}
        \nabla g(W) = \frac{2}{n}\sum_{k=1}^K(W\Pi_{:,k}-\overline{\Pi_{:,k}}\1)\cdot \Pi_{:,k}^\transpose + \frac{2}{n}\lambda \left(W - \frac{\1\1^\transpose}{n}\right)~.
    \end{equation*}

    Recall that the dual norm $\left\|\cdot\right\|^\star_1$ of $\left\|\cdot\right\|_1$ is the nuclear norm, i.e., the sum of the singular values.% Recall as well the inequality

    For any $W,P\in\calS$, we have
    \begin{align*}
        \left\|\nabla g(W) - \nabla g(P)\right\|^\star_{2} & = \frac{2}{n}\left\|(W-P)\left(\lambda I + \sum_{k=1}^K\Pi_{:,k}\Pi_{:,k}^\transpose \right)\right\|^\star_{2} \\
        & \leq \frac{2}{n}\left\|\lambda (W-P)I \right\|^\star_{2} + \frac{2}{n}\left\| (W-P) \sum_{k=1}^K\Pi_{:,k}\Pi_{:,k}^\transpose \right\|^\star_{2} \\
        & \leq \frac{2\lambda}{n}\left\|W-P\right\|_{2}\left\|I\right\|^\star_{2} + \frac{2}{n} \left\| (W-P) \sum_{k=1}^K\Pi_{:,k}\Pi_{:,k}^\transpose \right\|^\star_{2}~,
    \end{align*}
    where the last inequality is obtained using the fact that for any real
    matrices $A$ and $B$, $\left\|AB\right\|^\star \leq \|A^\transpose\|\|B\|^\star$. 
    
    Before bounding the second term, we must observe that because $W$ and $P$ are doubly stochastic, $(W-P)\1 = 0$ and therefore, for any matrix $A\in \RR^{n\times n}$, $(W-P)A = (W-P)(A - \frac{\1\1^\transpose}{n} A)$.

    Now, the second term can be re-written and bounded as follows:
    \begin{align*}
        \frac{2}{n}\left\| (W-P) \sum_{k=1}^K\Pi_{:,k}\Pi_{:,k}^\transpose \right\|^\star_{2} & =  \frac{2}{n}\left\| (W-P) \left(\sum_{k=1}^K\Pi_{:,k}\Pi_{:,k}^\transpose -  \frac{\1\1^\transpose}{n}\sum_{k=1}^K\Pi_{:,k}\Pi_{:,k}^\transpose \right) \right\|^\star_{2} \\
        & \leq \frac{2}{n}\left\|W-P\right\|_2 \left\|\sum_{k=1}^K\Pi_{:,k}\Pi_{:,k}^\transpose -  \frac{\1\1^\transpose}{n}\sum_{k=1}^K\Pi_{:,k}\Pi_{:,k}^\transpose\right\|_2^\star \\
        & = \frac{2}{n}\left\|W-P\right\|_2 \left\|\sum_{k=1}^K(\Pi_{:,k}-
        \overline{\Pi_{:,k}}\1)\cdot \Pi_{:,k}^\transpose\right\|_2^\star ~.
    \end{align*}

    Plugging the previous result into the bound obtained above, and since
    $\left\|I\right\|^\star_{2} = n$, we get
    \begin{equation*}
        \left\|\nabla g(W) - \nabla g(P)\right\|^\star_{2}  \leq 2 \left( \lambda + \frac{1}{n}\left\|\sum_{k=1}^K(\Pi_{:,k}-\overline{\Pi_{:,k}}\1)\cdot \Pi_{:,k}^\transpose\right\|_2^\star \right) \left\|W-P\right\|_2~.
    \end{equation*}

    We can now apply Theorem~\ref{thme:FW-conv} with the found
    Lipschitz constant and diameter, which gives:
    \begin{equation*}
        g(\widehat{W}^{(l)}) - g(W^\star) \leq \frac{16}{l+2}\left(\lambda + \frac{1}{n}\left\|\sum_{k=1}^K(\Pi_{:,k}-\overline{\Pi_{:,k}}\1)\cdot \Pi_{:,k}^\transpose \right\|^\star_{2} \right)~,
    \end{equation*}
    where $W^\star$ is the optimal solution of the problem. Since we known
    that $W^\star = \frac{\1\1^\transpose}{n}$ with $g(W^\star)= 0$, we obtain
    the first inequality in Theorem~\ref{coro:bound}.

    To prove the second inequality, it suffices to show that $\left\|\sum_
    {k=1}^K(\Pi_{:,k}-\overline{\Pi_{:,k}}\1)\cdot \Pi_{:,k}^\transpose
    \right\|^\star_{2}  \leq n$. We have:
    \begin{align*}
        \left\|\sum_{k=1}^K(\Pi_{:,k}-\overline{\Pi_{:,k}}\1)\cdot \Pi_{:,k}^\transpose \right\|^\star_{2} & = \left\|\left(I - \frac{\1\1^\transpose}{n}\right)\sum_{k=1}^K\Pi_{:,k}\Pi_{:,k}^\transpose \right\|^\star_{2} \\
        & \leq \left\|I - \frac{\1\1^\transpose}{n}\right\|_2  \left\|\sum_{k=1}^K\Pi_{:,k}\Pi_{:,k}^\transpose \right\|^\star_{2} \\
        & = \left\|\sum_{k=1}^K\Pi_{:,k}\Pi_{:,k}^\transpose \right\|^\star_{2} \\
        & \leq \sum_{k=1}^K \left\|\Pi_{:,k}\Pi_{:,k}^\transpose
        \right\|^\star_{2}~. 
    \end{align*}

    Because for any $k=1,\ldots,K$, $\Pi_{:,k}\Pi_{:,k}^\transpose$ is a
    rank-$1$ matrix, its unique eigenvalue is $\Pi_{:,k}^\transpose\Pi_{:,k}$ and therefore

    \begin{align*}
        \left\|\sum_{k=1}^K(\Pi_{:,k}-\overline{\Pi_{:,k}}\1)\cdot \Pi_{:,k}^\transpose \right\|^\star_{2} & \leq \sum_{k=1}^K \left\|\Pi_{:,k}\Pi_{:,k}^\transpose \right\|^\star_{2} \\
        & = \sum_{k=1}^K \Pi_{:,k}^\transpose\Pi_{:,k} \\
        & = \sum_{k=1}^K \sum_{i=1}^n \pi_{ik}^2 \\
        & \overset{\text{Holder}}{\leq} \sum_{i=1}^n \max_{k}\{\pi_{ik}\}\underbrace{\sum_{k=1}^K \pi_{ik}}_{= 1} \\
        & \leq \sum_{i=1}^n 1 = n~,
    \end{align*}
    which concludes the proof of the second inequality in Theorem~\ref{coro:bound}.

    The last statement of the theorem follows directly from the structure of permutation matrices and the greedy nature of the algorithm.
\end{proof}

\vspace{1.5cm}

\begin{theorem}\label{thme:FW-conv}\emph{(Frank-Wolfe Convergence  \citep{jaggi2013revisiting,bubeck2014convex})}
    Let the gradient of the objective function $g:x\rightarrow g(x)$ be $L$-smooth with respect to a norm $\|\cdot\|$ and its dual norm $\|\cdot\|^{\star}$:
    \begin{equation*}
        \|\nabla g(x) - \nabla g(y)\|^{\star} \leq L \|x - y\|~.
    \end{equation*}
    
    If $g$ is minimized over $\calS$ using Frank-Wolfe algorithm, then for each $l\geq 1$, the iterates $x^{(l)}$ satisfy
    \begin{equation*}
        g(x^{(l)}) - g(x^\star) \leq \frac{2L\text{diam}_{\|\cdot\|}(\calS)^2}{l + 2}~,
    \end{equation*} 
    where $x^\star \in \calS$ is an optimal solution of the problem and $\text{diam}_{\|\cdot\|}(\calS)$ stands for the diameter of $\calS$ with respect to the norm $\|\cdot\|$.
    
\end{theorem}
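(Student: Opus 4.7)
The plan is to instantiate the standard Frank-Wolfe convergence result (Theorem~\ref{thme:FW-conv}) for the specific objective $g$ in \eqref{eq:obj_FW} and feasible set $\calS$. Once a norm is chosen, the bound reduces to computing three quantities: the diameter of $\calS$, a smoothness constant $L$ for $\nabla g$ in the dual norm, and the optimum value $g(W^\star)$.

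I would work with the spectral (operator $\ell_2$) norm $\|\cdot\|_2$, whose dual is the nuclear norm $\|\cdot\|^\star_2$. Every doubly stochastic matrix has spectral norm equal to $1$ (the all-ones vector $\1$ is a singular vector with unit singular value, which is maximal by Perron-Frobenius), so the triangle inequality gives $\mathrm{diam}_{\|\cdot\|_2}(\calS) \leq 2$. The uniform matrix $W^\star = \tfrac{\1\1^\transpose}{n}$ lies in $\calS$ and drives both quadratic terms of $g$ to zero, so $g(W^\star) = 0$.

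The substantive step is the smoothness bound. Since $g$ is quadratic, a direct computation gives $\nabla g(W) - \nabla g(P) = \tfrac{2}{n}(W - P)\bigl(\lambda I + \sum_k \Pi_{:,k}\Pi_{:,k}^\transpose\bigr)$. A naive nuclear-norm sub-multiplicative bound would yield a constant depending on the uncentered matrix $\sum_k \Pi_{:,k}\Pi_{:,k}^\transpose$, which can scale with $n$. The key trick giving the tight constant is to exploit that $W, P \in \calS$ implies $(W - P)\1 = \textbf{0}$: one may freely subtract $\tfrac{\1\1^\transpose}{n}\sum_k \Pi_{:,k}\Pi_{:,k}^\transpose$ from the right factor without changing the product, thereby replacing it by the centered form $\sum_k (\Pi_{:,k} - \overline{\Pi_{:,k}}\1)\Pi_{:,k}^\transpose$. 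Combining this with the submultiplicative inequality $\|AX\|^\star_2 \leq \|A\|_2 \|X\|^\star_2$, and treating the identity part separately (using $\|I\|^\star_2 = n$ to kill the $\tfrac{1}{n}$ prefactor), yields the smoothness constant $L = 2\bigl(\lambda + \tfrac{1}{n}\bigl\|\sum_k (\Pi_{:,k} - \overline{\Pi_{:,k}}\1)\Pi_{:,k}^\transpose\bigr\|^\star_2\bigr)$. Plugging this $L$ together with $\mathrm{diam}^2 \leq 4$ into the generic bound $g(\widehat{W}^{(l)}) - g(W^\star) \leq 2L\,\mathrm{diam}^2/(l+2)$ yields precisely $16/(l+2) \cdot (\lambda + \cdots)$, as claimed.

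For the degree bounds, Algorithm~\ref{alg:FW-W} starts from $\widehat{W}^{(0)} = I_n$ and each update forms a convex combination with a permutation matrix $P^{(l+1)}$, whose support contains exactly one nonzero per row and one per column. Since the support of a convex combination is contained in the union of supports, each Frank-Wolfe iteration adds at most one new in-neighbor and one new out-neighbor to each node, so after $l$ iterations the maximum in- and out-degrees are bounded by $l$. The main obstacle of the whole argument is the centering trick in step (iii): without exploiting $(W-P)\1 = \textbf{0}$, one obtains only a loose smoothness bound whose final constant scales with $n$, whereas the centered version makes the constant dimension-free once the nuclear norm is further bounded (as used in the subsequent corollary).
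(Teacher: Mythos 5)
You have proven the wrong statement. The theorem you were asked to prove is the \emph{generic} Frank--Wolfe convergence guarantee (Theorem~\ref{thme:FW-conv}): that for any objective whose gradient is $L$-smooth in a norm/dual-norm pair, the Frank--Wolfe iterates satisfy $g(x^{(l)}) - g(x^\star) \leq 2L\,\text{diam}_{\|\cdot\|}(\calS)^2/(l+2)$. Your proposal opens by \emph{invoking} exactly this result as a black box ("instantiate the standard Frank-Wolfe convergence result") and then carries out the instantiation for the specific objective $g$ of Problem~\eqref{eq:obj_FW} --- the diameter bound, the smoothness constant via the centering trick $(W-P)\1 = \mathbf{0}$, and the degree bounds. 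That is a correct and faithful reconstruction of the paper's proof of Theorem~\ref{coro:bound}, but it is circular as a proof of Theorem~\ref{thme:FW-conv} itself: the statement to be established is assumed, not derived. (For the record, the paper does not reprove it either; it cites Theorem~1 and Lemma~7 of \cite{jaggi2013revisiting}.)

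To actually prove the target statement you would need the standard Frank--Wolfe analysis. Writing $h_l = g(x^{(l)}) - g(x^\star)$ and $D = \text{diam}_{\|\cdot\|}(\calS)$, the $L$-smoothness in the dual pairing gives, for the update $x^{(l+1)} = (1-\gamma)x^{(l)} + \gamma s$ with $s$ the linear minimizer,
\begin{equation*}
g(x^{(l+1)}) \leq g(x^{(l)}) + \gamma\,\langle \nabla g(x^{(l)}),\, s - x^{(l)}\rangle + \tfrac{\gamma^2 L}{2}\,\|s - x^{(l)}\|^2 .
\end{equation*}
Optimality of $s$ over $\calS$ together with convexity of $g$ yields $\langle \nabla g(x^{(l)}), s - x^{(l)}\rangle \leq \langle \nabla g(x^{(l)}), x^\star - x^{(l)}\rangle \leq -h_l$, so $h_{l+1} \leq (1-\gamma)h_l + \tfrac{\gamma^2}{2}LD^2$ (line search only improves on any fixed $\gamma$). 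Choosing $\gamma_l = 2/(l+2)$ and running the usual induction gives $h_l \leq 2LD^2/(l+2)$. None of this machinery appears in your proposal, so as a proof of the stated theorem it has a fatal gap, even though the material you did write would serve as a valid proof of the downstream Theorem~\ref{coro:bound}.
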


\begin{proof}
    The proof of this theorem is a direct combination of Theorem $1$ and Lemma $7$ in \cite{jaggi2013revisiting}, both proved in the paper.
\end{proof}

% \aurelien{$p=1 - \lambda_2(W^\transpose W)$ with $\lambda_2(W^\transpose W)$ looks like an assumption; but don't you only need $p\geq 1 - \lambda_2(W^\transpose W)$ (which is always true) to prove this statement? if so, reformulate by removing this hypothesis from the statement and using $\geq$ instead of $=$ in the derivation}

\begin{proposition}\emph{(Relation between $p$ and $\|W-\frac{\1\1^\transpose}{n}\|_F^2$)} 
    \label{prop:relation-p-frob}
    Let $W$ be a mixing matrix satisfying Assumption \ref{ass:rho}. Then,
\begin{equation*}
    (1-p)\leq \left\|W-\frac{\1\1^\transpose}{n}\right\|_F^2 \leq (n-1)(1-p)~.
\end{equation*}
    
\end{proposition}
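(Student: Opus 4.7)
The plan is to reduce both inequalities to a clean spectral identity for the left-hand side and then sandwich the relevant eigenvalues between $\lambda_2(W^\transpose W)$ and $(n-1)\lambda_2(W^\transpose W)$. Throughout, I will use the tight interpretation $1-p = \lambda_2(W^\transpose W)$ mentioned in the paragraph after Assumption~\ref{ass:rho}, which is the largest (and natural) admissible mixing parameter; the inequalities in the proposition are only jointly tight at this choice.

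First, I would expand the squared Frobenius norm via $\|A\|_F^2 = \mathrm{tr}(A^\transpose A)$ and use the doubly stochastic identities $W\1 = \1$ and $\1^\transpose W = \1^\transpose$ to collapse the cross terms. A direct calculation gives
\[
\Big\|W - \tfrac{1}{n}\1\1^\transpose\Big\|_F^2 = \mathrm{tr}(W^\transpose W) - \tfrac{2}{n}\1^\transpose W \1 + \tfrac{1}{n^2}\mathrm{tr}(\1\1^\transpose \1\1^\transpose) = \mathrm{tr}(W^\transpose W) - 1.
\]
Next, I would observe that $W^\transpose W$ is symmetric positive semidefinite, that $\1/\sqrt{n}$ is one of its unit eigenvectors with eigenvalue $1$ (because $W^\transpose W \1 = W^\transpose \1 = \1$), and that $1$ is its largest eigenvalue: this follows by applying Assumption~\ref{ass:rho} to $M = v^\transpose \in \RR^{1\times n}$ with $v \perp \1$, which gives $\bar{M} = 0$ and $v^\transpose W W^\transpose v \leq (1-p)\|v\|_2^2 \leq \|v\|_2^2$. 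Therefore $\mathrm{tr}(W^\transpose W) = 1 + \sum_{i=2}^n \lambda_i(W^\transpose W)$, so that
\[
\Big\|W - \tfrac{1}{n}\1\1^\transpose\Big\|_F^2 = \sum_{i=2}^n \lambda_i(W^\transpose W).
\]

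Finally, using $1-p = \lambda_2(W^\transpose W)$ and the fact that $0 \leq \lambda_i(W^\transpose W) \leq \lambda_2(W^\transpose W)$ for all $i \geq 2$, the sum of $n-1$ such eigenvalues is at least $\lambda_2(W^\transpose W) = 1-p$ and at most $(n-1)\lambda_2(W^\transpose W) = (n-1)(1-p)$, which is the claim. As a sanity check, the upper bound can also be recovered in one line from Assumption~\ref{ass:rho} alone by taking $M = I - \tfrac{1}{n}\1\1^\transpose$: then $\bar{M} = 0$, $MW = W - \tfrac{1}{n}\1\1^\transpose$, and $\|M\|_F^2 = n-1$ (the Frobenius norm squared of a rank-$(n-1)$ orthogonal projector), so Assumption~\ref{ass:rho} gives the upper bound for any admissible $p$.

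The main conceptual subtlety I would flag is this interpretation of $p$. Assumption~\ref{ass:rho} guarantees only $1-p \geq \lambda_2(W^\transpose W)$, so while the upper bound holds verbatim for every admissible $p$, the lower bound genuinely requires the optimal choice; the non-trivial ingredient on that side is therefore the spectral identity above rather than a direct use of the contraction property. Computationally, the steps are all routine (trace expansion, eigenvalue positivity, and sandwiching), so I do not anticipate any real obstacle beyond making this convention on $p$ explicit.
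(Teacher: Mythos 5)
Your proof is correct and takes essentially the same spectral route as the paper: both arguments reduce $\|W-\frac{\1\1^\transpose}{n}\|_F^2$ to the non-trivial eigenvalues of $W^\transpose W$ and identify $1-p$ with $\lambda_2(W^\transpose W)$, and your one-line sanity check for the upper bound is exactly the paper's own argument (Assumption~\ref{ass:rho} applied with $M=I$, which holds for any admissible $p$). Your explicit caveat that the lower bound genuinely requires the optimal choice $1-p=\lambda_2(W^\transpose W)$ --- it fails for smaller admissible $p$, e.g.\ $W=\frac{\1\1^\transpose}{n}$ with $p=0$ --- is accurate and is stated more carefully than the paper's own justification of that final inequality.
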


\begin{proof}
    The upper-bound is a direct application of Assumption \ref{ass:rho} with $M=I$, the identity matrix of size $n$: %It does not require $p=\lambda_2(W^\transpose W)$ to be satisfied.
    \begin{align*}
        \left\|W^\transpose-\frac{\1\1^\transpose}{n}\right\|_F^2 = \left\|IW^\transpose-I\frac{\1\1^\transpose}{n}\right\|_F^2 \overset{(A.\ref{ass:rho})}{\leq} (1-p)\left\|I-\frac{\1\1^\transpose}{n}\right\|_F^2 = (1-p)(n-1)~.
    \end{align*}

    To show the lower-bound, denote by $s_1(M),\ldots,s_n(M)$ the (decreasing) singular values of any square matrix $M\in \RR^{n\times n}$. Denote similarly $\lambda_1(M),\ldots,\lambda_n(M)$ the eigenvalues of any \emph{symmetric} square matrix $M\in \RR^{n\times n}$.

    \begin{align*}
        \left\|W-\frac{\1\1^\transpose}{n}\right\|_F^2 = \sum_{i=1}^ns_i^2\left(W-\frac{\1\1^\transpose}{n}\right) & \geq s_1^2\left(W-\frac{\1\1^\transpose}{n}\right) \\
        & = \lambda_1\left((W-\frac{\1\1^\transpose}{n})^\transpose(W-\frac{\1\1^\transpose}{n})\right) \\
        & = \lambda_1\left(W^\transpose W-\frac{\1\1^\transpose}{n}\right) \\
        & = \lambda_2(W^\transpose W) \geq 1-p~.
    \end{align*}

        The last \emph{equality} is obtained by noticing that $W^\transpose W$ is a symmetric doubly stochastic matrix. %Thanks to the spectral properties of doubly stochastic matrices,
        It therefore admits an eigenvalue decomposition where the largest eigenvalue $1$ is associated with the eigenvector $\frac{1}{\sqrt{n}}\1$. This makes $W^\transpose W-\frac{\1\1^\transpose}{n}$ having the eigenvalue $0$ associated to the vector $\frac{1}{\sqrt{n}}\1$ and the largest eigenvalue of $W^\transpose W-\frac{\1\1^\transpose}{n}$ becomes the second-largest eigenvalue of $W^\transpose W$. 
        The final \emph{inequality} comes from the fact that Assumption \ref{ass:rho} is always true with $p = 1-\lambda_2(W^\transpose W)$ which implies that the best $p$ satisfying Assumption \ref{ass:rho} in necessarily greater or equal to $1-\lambda_2(W^\transpose W)$.
\end{proof}

%\section{Frank-Wolfe algorithm}

\subsection{Extension to Random Mixing Matrices}
\label{sec:random-mixing}

As mentioned in Section~\ref{sec:background}, all our theoretical results can easily be extended to random mixing matrices. In that framework, at each iteration $t$ of the D-SGD algorithm, the matrix $W^{(t)}$ is sampled from a doubly stochastic matrix distribution denoted $\mathcal{W}^{(t)}$, independent of the iterates of parameters $\theta^{(t)}$, %i.e. the filtration $\mathcal{F}_{t}$, 
and possibly time-varying.

To obtain the convergence result, we slightly modify Assumption~\ref{ass:rho} and Assumption~\ref{ass:new} by adding an expectation with respect to $W$ in front of the equations. For instance, Assumption~\ref{ass:rho} becomes $\EE_{W \sim \mathcal{W}}\|MW^\transpose-\overbar{M}\|_F^2\leq (1-p)\|M-\overbar{M}\|_F^2$. Then, the statement of Theorem \ref{thme:main} is also slightly modified by assuming that it is the distributions $\mathcal{W}^{(0)}, \ldots, \mathcal{W}^{(T-1)}$ that must now respect Assumptions \ref{ass:rho} and \ref{ass:new}. 

By appropriately conditioning with respect to the random mixing matrices or with respect to the iterates, the proof of the theorem remains the same.

%In order to get the appropriate results, it suffices to add an expectation with respect to $W$ in front of all equations. The only sufficient requirement is that $W$ is drawn independently of the iterates of D-SGD.

\subsection{Closed-Form for the Line-Search}
\label{sec:line-search}

In this section, we give the closed-form solution of the line-search problem found in the Frank-Wolfe algorithm \ref{alg:FW-W}. Recall that we seek to solve:

\begin{equation*}
    \gamma^\star = \underset{\gamma \in [0,1]}{\arg \min} \hspace{0.5em} \left\{ \tilde{g}(\gamma) \triangleq g\left((1-\gamma) W + \gamma P\right)\right\}~,
\end{equation*}
with

\begin{equation*}
    g(W)  =  \frac{1}{n}\Big\|W \Pi -  \frac{\1\1^\transpose}{n}\Pi\Big\|_F^2 + \frac{\lambda}{n} \Big\|W -  \frac{\1\1^\transpose}{n}\Big\|_F^2~.
\end{equation*}

The function $g$ being quadratic, the objective $\tilde{g}(\gamma)$ is also quadratic with respect to $\gamma$. Hence, it suffices to put the derivative $\tilde{g}^\prime$ of $\tilde{g}$ equal to $0$, and we get the closed-form solution:

\begin{equation*}
    %\label{eq:closed-form-line}
    \gamma^\star = \frac{\sum_{k=1}^K(\overline{\Pi_{:,k}}\1  -  W\Pi_{:,k})^\transpose (P-W)\cdot \Pi_{:,k}-\lambda \cdot \text{tr}\left( \left(W - \frac{\1\1^\transpose}{n}\right)^\transpose (P-W)\right)}{\left\|(P-W)\Pi\right\|_F^2 + \lambda \|P-W\|_F^2}~.
\end{equation*}

\section{Additional Experiments}
\label{sec:additional-experiments}

In this section, we provide additional details on our experimental setup, as well as additional results to complement the main results in the paper.
% We finally motivate future work in combining STL-FW with algorithm changes, which was outside the scope of the main paper, by comparing to the benefits provided by \textit{clique averaging} on D-Cliques: we hypothesise that similar and potentially better convergence speed could be obtained in combination with future variations of STL-FW.

\subsection{Detailed Experimental Setup}
\label{appendix:experimental-setup}

Our main goal is to provide a fair comparison of the convergence speed across
different topologies in order to show the benefits of the principled approach to
topology learning provided by STL-FW. We essentially follow the experimental
setup in \cite{bellet2021d}, which we recall below.

In our study, we focus our investigation
on the convergence speed, rather than
the final accuracy after a fixed number of iterations.
Indeed, depending on when training is stopped, the relative 
difference in final accuracy across different algorithms may vary
significantly and lead to different conclusions. Instead of relying on
somewhat arbitrary stopping points, we show the convergence curves of
generalization performance (i.e., the accuracy on the test set throughout
training), up to a point where it is clear that the different approaches have
converged, will not make significantly more progress, or behave essentially
the same. 

\paragraph{Datasets.} We experiment with two datasets: MNIST~
\citep{deng2012mnist} and
CIFAR10~\citep{krizhevsky2009learning}, which both have $K=10$ classes.
For MNIST,  we use 50k and 10k examples from the original
set for training and testing respectively. %We use all 10k examples of 
% the test set to measure prediction accuracy.  The validation set preserves the
% original unbalanced ratio of the classes in the test set, and the remaining
% examples become the training set.
For CIFAR10, we used 50k images 
of the original training set for training and 10k examples 
 of the test set for measuring prediction accuracy. %After tuning hyper-parameters
% on initial experiments, we then used all 50k images of the original training set
% for training for all experiments, as the 45k did not split evenly in 1000 nodes
% with the partitioning scheme explained in the next paragraph. We tuned the hyper-parameters
% and performed an initial analysis with validation datasets, then tested our conclusions
% against the test datasets to ensure our conclusions generalized.

For both MNIST and CIFAR10, we use the heterogeneous data partitioning scheme
proposed by \cite{mcmahan2017communication} 
in their seminal FL work: 
we sort all training examples by class, then split the list into shards of
equal size, and randomly assign two shards to each node. When the number of
examples of one class does not divide evenly in shards, as is the case for MNIST, some shards may have examples of more than one class and therefore nodes may have examples
of up to 4 classes. However, most nodes will have examples of 2 classes.  

\paragraph{Models.}
We
use a logistic regression classifier for MNIST, which
provides up to 92.5\% accuracy in the centralized setting.
For CIFAR10, we use a Group-Normalized variant of LeNet~\citep{quagmire}, a
deep convolutional network which achieves an accuracy of $74.15\%$ in the
centralized setting.
These models are thus reasonably accurate (which is sufficient to
study the effect of the topology) while being sufficiently fast to train in a
fully decentralized setting and simple enough to configure and analyze.
Regarding hyper-parameters, we use the learning rate and
mini-batch size found in \cite{bellet2021d} after cross-validation for $n = 100$ nodes, respectively $0.1$ and $128$ for
MNIST and $0.002$ and $20$ for CIFAR10.

\paragraph{Metrics.}
We evaluate a network of $n = 100$ nodes,
creating multiple models
in memory and simulating the exchange of messages between nodes.
To ignore the impact of distributed execution strategies and system
optimization techniques, we report the test accuracy of all nodes (min, max,
average) as a function of the number of times each example of the dataset has
been sampled by a node, i.e. an \textit{epoch}. This is equivalent to the classic 
case of a single node sampling the full distribution. All our results were obtained
on a custom version of the \textit{non-iid topology simulator} made available
online by the authors of \cite{bellet2021d},\footnote{\url{https://gitlab.epfl.ch/sacs/distributed-ml/non-iid-topology-simulator}} which provides deterministic and
fully replicable experiments on top of Pytorch and ensures all topologies were
used in the same algorithm implementation and used exactly the same
inputs.

\paragraph{Baselines} We compare our results against an ideal
baseline:
a fully-connected network topology with the same number of nodes. All other
things being equal, any other topology using less edges will converge
at the same speed or slower: \textit{this is therefore the most difficult and general baseline to compare against}.
This baseline is also essentially equivalent to a centralized (single) IID node using a batch size
$n$ times bigger, where $n$ is the number of nodes.  Both a fully-connected network and a single IID node
 effectively optimize a single model and sample
uniformly from the global distribution: both thus remove entirely the
effect of label distribution skew and of the network topology on the
optimization. In practice, we prefer a
fully-connected network because it
 converges slightly faster and obtains slightly 
better final accuracy than a single node sampling randomly from the global
distribution.

We also provide comparisons against popular sparse topologies, such as random
graphs and exponential graphs \citep{ying2021exponential}.
For the random graph, we use a similar 
number of edges ($d_{\max}$) per node to determine
whether a simple sparse topology could work equally well. 
For the exponential graph, we follow 
the deterministic construction of~\cite{ying2021exponential} and consider
edges to be undirected, resulting in $d_{max} = 14$ for $n=100$.

We finally compare against D-Cliques~\citep{bellet2021d}, the only competitor
which takes into account the data heterogeneity in the choice of topology.
D-Cliques constructs a topology around sparsely inter-connected cliques
 such that the union of local datasets within a clique is representative of
 the global distribution, i.e. it minimizes the first term in our
 objective function (Eq.~\ref{eq:obj_FW}) within each clique.
 % Compared to
 % STL-FW, and as previously shown in Figure~\ref{fig:real_experiments}, the use of cliques actually impairs the mixing properties of the graph, which slows down the convergence of the distributed averaging step of D-SGD. However, D-Cliques compensates with a modification of D-SGD, using \textit{clique-averaging}, %that lowers neighborhood heterogeneity (Eq.~\ref{eq:new}) enough to
 % which provide significant convergence benefits. Analysing a modified version of D-SGD, as done by D-Cliques, is outside the scope of this paper. Nonetheless, in this supplementary material, we still compare STL-FW to D-Cliques using \textit{clique-averaging} to show our results in the context of the best results obtained by D-Cliques and motivate future analyses and improvements that would also consider changes to D-SGD.
 
 % We now present additional experiment results.

\subsection{Statistics of the Used Topologies}
\label{sec:statistics_topologies}

In this section, we provide tables containing important statistics about
the topologies used in the experiments. In each table, a row
corresponds to a specific topology having at most $d_{\max}$ in and
out-neighbors per node.
The columns are as follows:

\begin{itemize}
    \item \emph{In-degree} (respectively \emph{Out-degree}): average and
    standard deviation of the number of incoming (respectively outgoing) edges
    per node.
    \item \emph{Classes in neighborhood}: average and standard
    deviation of the number of different classes in the direct
    neighborhood of a node. Recall that each node individually observes
    examples only from a subset of the $10$ different classes (1 for the
    synthetic dataset, 2 for MNIST, 2-4 for CIFAR10).
    % The column `Classes
    % node'
    % counts (on-average) the number of different classes a node and its neighbors are receiving data from.
    \item \emph{Bias}: average and standard deviation of
    % measures the
    % neighborhood heterogeneity at the classes proportion levels. In other word, it computes for each node $i$ the quantity
    $(\sum_{j=1}^n   W_{ij}\pi_{jk}  - \frac{1}{n}\sum_{j=1}^n\pi_{jk})^2$ across
    each node $i$.
    In other words, it measures the neighborhood heterogeneity in terms of
    class proportions, which (up to a constant factor) corresponds to the bias
    term in \eqref{eq:prop2}.
    According to our
    theory, the smaller the bias term, the better the topology.
    \item $1-p$: the mixing parameter of the topology
    (see Assumption \ref{ass:rho}). Recall that for a topology $W$,
    $1-p = \lambda_2(W^\transpose W)$. According to our
    theory \citep[and prior work, see e.g.,][]{koloskova20}, the smaller
    $1-p$, the better the topology.
\end{itemize}  

Interestingly, all tables show that our algorithm STL-FW outputs topologies
that are $d_{\max}$-regular. Therefore, the communication burden is the
same for all nodes. This is a desirable property for scalability, that the
star topology induced by server-based federated learning does not satisfy.

\begin{table}[t]
    \centering
    \begin{tabular}{c|c|c|c|c|c|c}
        \hline
        & Topology & In-degree & Out-degree & Classes in neighborhood & Bias & $1-p$ \\
        \hline
        \hline
        \multirow{2}{*}{$d_{\max}=3$} & STL-FW (ours) & $3.0 \pm 0.0$ & $3.0
        \pm 0.0$ & $4.0 \pm 0.0$ &  $0.15 \pm 0.0$  & $0.85$ \\
        %\hline
        & Random $d$-regular & $3.0 \pm 0.0$ & $3.0 \pm 0.0$ & $3.88 \pm 0.38$
        & $0.28 \pm 0.1$  & $0.89$ \\
        \hline
        \multirow{2}{*}{$d_{\max}=9$} & STL-FW (ours) & $9.0 \pm 0.0$ & $9.0 \pm 0.0$ & $10.0 \pm 0.0$ & $0.0 \pm 0.0$  & $0.41$ \\
        %\hline
        & Random $d$-regular & $9.0 \pm 0.0$ & $9.0 \pm 0.0$ & $9.65 \pm 0.65$
        & $0.09 \pm 0.04$  & $0.39$ \\
        \hline
    \end{tabular}
    \caption{Statistics of the topologies used in the synthetic data experiments of Section \ref{sec:synthetic}.}   
    \label{tab:synth}     
\end{table}

Table \ref{tab:synth} provides the statistics of the topologies used in the
synthetic data experiment. We observe that the mixing parameter $p$ are
similar for both our topologies (STL-FW) and the random $d$-regular graph.
However, STL-FW achieves much smaller bias, resulting in more classes being
represented in the neighborhood of each node. This explains the faster
convergence of D-SGD with our topology, in line with our theoretical results.

\begin{table}[t]
    \centering
    \begin{tabular}{c|c|c|c|c|c|c}
        \hline
        & Topology & In-degree & Out-degree & Classes in neighborhood & Bias & $1-p$ \\
        \hline
        \hline
        \multirow{2}{*}{$d_{\max}=2$} & STL-FW (ours) & $2.0 \pm 0.0$ & $2.0 \pm 0.0$  & $ 6.03 \pm 0.62$  & $0.08 \pm 0.02$  & $0.88$ \\
        %\hline
        & Random $d$-regular & $2.0 \pm 0.0$  & $2.0 \pm 0.0$  &$ 4.94 \pm 0.89$ & $0.14 \pm 0.06$ & $0.99$ \\
        \hline
        \multirow{3}{*}{$d_{\max}=5$} & STL-FW (ours)  & $5.0 \pm 0.0$   &  $5.0 \pm 0.0$   &  $9.99 \pm 0.1$ & $0.007 \pm 0.004$ & $0.55$ \\
        %\hline
        & Random $d$-regular &  $5.0 \pm 0.0$ & $5.0 \pm 0.0$ & $7.53 \pm 1.07$ & $0.07 \pm 0.03$ & $0.68 $\\
        %\hline
        & D-cliques &  $5.82 \pm 0.38$ & $5.82 \pm 0.38$  & $9.81 \pm 0.39$ & $0.02 \pm 0.01$ & $0.99$ \\
        \hline
        \multirow{4}{*}{$d_{\max} = 10$} & STL-FW (ours)   & $10.0 \pm 0.0$  &  $10.0 \pm 0.0$   &   $10.0 \pm 0.0$ & $0.001 \pm 0.001$ & $0.35$ \\
        %\hline
        & Random $d$-regular   &  $10.0 \pm 0.0$  &  $10.0 \pm 0.0$   &  $9.31 \pm 0.76$ & $0.03 \pm 0.02$ & $0.39$ \\
        %\hline
        & D-cliques & $9.9 \pm 0.3$  & $9.9 \pm 0.3$  &  $10.0 \pm 0.0$ & $0.005 \pm 0.002$ & $0.84$ \\
        %\hline
        & Exponential & $14.0 \pm 0.0$  &  $14.0 \pm 0.0$  &  $9.72 \pm 0.51$ & $ 0.02 \pm 0.01$ & $0.54$ \\
        \hline
    \end{tabular}
    \caption{Statistics of the topologies used on the MNIST experiments of
    Section \ref{sec:real-datasets}.} \label{tab:mnist}
\end{table}

\begin{table}[t]
    \centering
    \begin{tabular}{c|c|c|c|c|c|c}
        \hline
        & Topology & In-degree & Out-degree & Classes in neighborhood & Bias & $1-p$ \\
        \hline
        \hline
        \multirow{2}{*}{$d_{\max}=2$} & STL-FW (ours) & $2.0 \pm 0.0$ & $2.0 \pm 0.0$  & $ 5.79 \pm 0.45$  & $0.08 \pm 0.02$  & $0.99 $\\
        %\hline
        & Random $d$-regular & $2.0 \pm 0.0$  & $2.0 \pm 0.0$  & $4.86 \pm 0.81$  & $0.14 \pm 0.06$ & $0.99$ \\
        \hline
        \multirow{3}{*}{$d_{\max}=5$} & STL-FW (ours)  & $5.0 \pm 0.0$   &  $5.0 \pm 0.0$   &  $9.98 \pm 0.14$ & $0.008 \pm 0.005$ & $0.64$ \\
        %\hline
        & Random $d$-regular &  $5.0 \pm 0.0$ & $5.0 \pm 0.0$ & $7.4 \pm 0.97$ & $0.07 \pm 0.03$ & $0.68$ \\
        %\hline
        & D-cliques &  $5.82 \pm 0.38$ & $5.82 \pm 0.38$  &  $9.71 \pm 0.55$ & $0.022 \pm 0.012$ & $0.99$ \\
        \hline
        \multirow{4}{*}{$d_{\max} = 10$} & STL-FW (ours)   & $10.0 \pm 0.0$  &  $10.0 \pm 0.0 $  &   $10.0 \pm 0.0$ & $0.001 \pm 0.001$ & $0.45$ \\
        %\hline
        & Random $d$-regular   &  $10.0 \pm 0.0$  &  $10.0 \pm 0.0$   &  $9.26 \pm 0.82$ & $0.033 \pm 0.016$ & $0.39 $\\
        %\hline
        & D-cliques   & $9.9 \pm 0.3$  & $9.9 \pm 0.3$  &  $10.0 \pm 0.0$ & $0.004 \pm 0.002$ & $0.84$ \\
        %\hline
        & Exponential & $14.0 \pm 0.0$  &  $14.0 \pm 0.0$  &  $9.68 \pm 0.58$ &  $0.024 \pm 0.012$ & $0.54$ \\
        \hline
    \end{tabular}
    \caption{Statistics of the topologies used in the CIFAR10 experiments of Section \ref{sec:real-datasets}.}
    \label{tab:cifar}      
\end{table}

Table \ref{tab:mnist} (MNIST) and Table \ref{tab:cifar} (CIFAR10) provide the
statistics of the topologies used in the real data experiments. The same
conclusions made regarding the synthetic experiments hold here regarding the
comparison of STL-FW with the random $d$-regular graphs. D-Cliques 
\citep{bellet2021d}, the
only topology that is also constructed in a data-dependent fashion,
achieves rather low bias (albeit slightly larger than our STL-FW topology)
but has rather bad mixing properties (large $1-p$). This confirms our
claim that D-Cliques reduces the bias without ensuring good
mixing (due to the constrained arrangements of nodes in sparsely
interconnected cliques). This explains the superior performance of our
topology. Last but not least, looking at $d_{\max}=5$, we notice that
D-Cliques is unable to satisfy the constraints that the maximum degree should
not exceed $5$. This also illustrates the greater flexibility of STL-FW
when it comes to controlling the per-iteration communication
complexity.

\subsection{Impact of $\lambda$ in STL-FW}

%In this section, we show the impact of $\lambda$ and $d_{max}$ on the convergence speed of STL-FW.

Figure~\ref{fig:lambda-effect} shows the impact of $\lambda$, which rules the
bias-variance trade-off in our objective function for learning the topology.
We present results for two extreme values, respectively $0.0001$ and $1000$ as
well as middle ground of $0.1$. For both datasets, $\lambda$ has little effect
on convergence speed. From a practical perspective, this is an advantage
as it removes the need for tuning $\lambda$ (one can simply set it to a
default positive value).
This behavior may be explained by the fact that reducing the bias term alone
also leads to
a reduction of variance. Hence, the variance term becomes useful
only when the bias term has been ``erased'' (or made very small), which can
happen only after a certain number of STL-FW iterations, i.e., for a potentially large $d_{\max}$. For all other experiments, we used $\lambda=0.1$.

\begin{figure}[h]
\centering

\begin{minipage}[t]{0.5\linewidth}
%  python $TOOLS/analyze/diff.py --rundirs all/2022-05-03-fw-lambda-0.0001 all/2022-05-03-fw-lambda-0.1 all/2022-05-03-fw-lambda-1000  --pass-through | python $TOOLS/plot/convergence.py --yaxis validation-accuracy --save-figure ../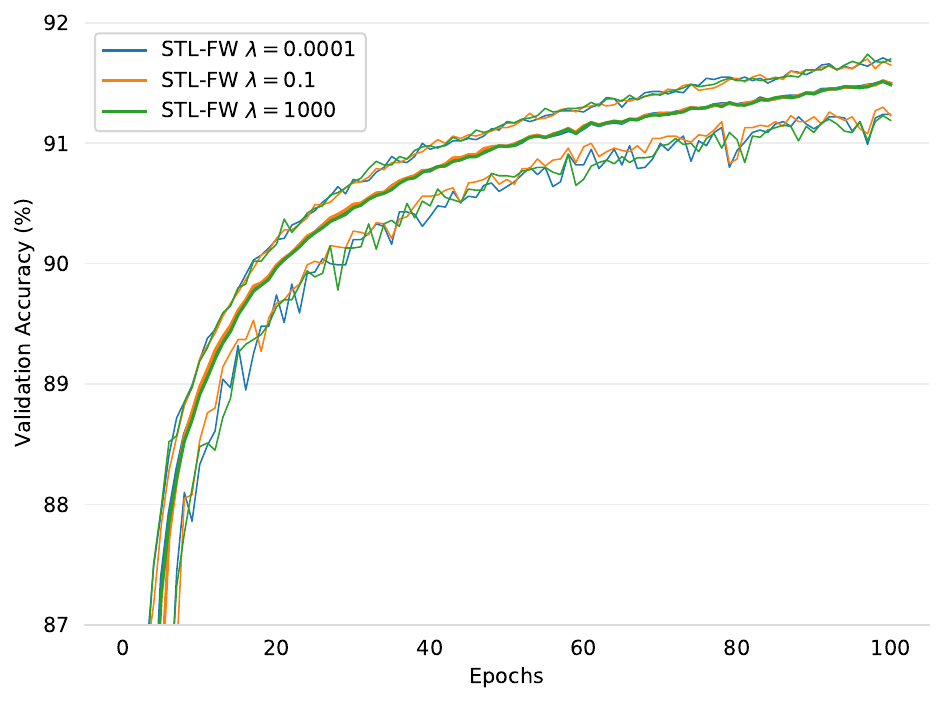 --add-min-max --font-size 10 --labels 'STL-FW $\lambda=0.0001$' 'STL-FW $\lambda=0.1$' 'STL-FW $\lambda=1000$' --ymin 87 --ymax 92
        \centering
         \includegraphics[height=\ratiob\linewidth]{figures/fw-effect-lambda-mnist.pdf}
        {\small MNIST\;}
    \end{minipage}\hfill
    \begin{minipage}[t]{0.5\linewidth}
        %  python $TOOLS/analyze/diff.py --rundirs all/2022-05-16-fw-LAMBDA=0.0001,MOMENTUM=0.0-cifar10 all/2022-05-16-fw-LAMBDA=0.1,MOMENTUM=0.0-cifar10 all/2022-05-16-fw-LAMBDA=1000,MOMENTUM=0.0-cifar10  --pass-through | python $TOOLS/plot/convergence.py --yaxis validation-accuracy --save-figure ../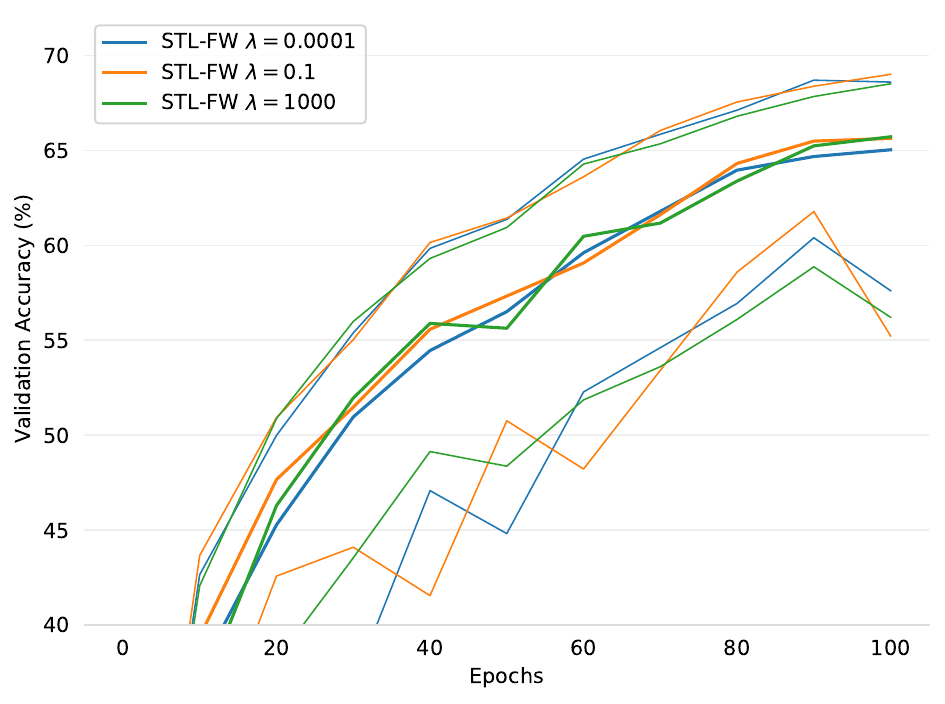 --add-min-max --font-size 10 --labels 'STL-FW $\lambda=0.0001$' 'STL-FW $\lambda=0.1$' 'STL-FW $\lambda=1000$'  --ymin 40
        \centering
         \includegraphics[height=\ratiob\linewidth]{figures/fw-effect-lambda-no-momentum-cifar10.pdf}
        {\small CIFAR10\; }
    \end{minipage}

\caption{Effect of the hyperparameter $\lambda$ of STL-FW on the convergence
speed of D-SGD with 100 nodes, $d_{max}=10$.}
\label{fig:lambda-effect}
\end{figure}

\subsection{Impact of $d_{max}$ on STL-FW}

Figure~\ref{fig:effect-of-dmax-on-stl-fw} shows on a single plot the impact of
the communication budget $d_{max}$ of STL-FW on the convergence speed of
D-SGD. The communication budget has a strong impact in both cases, with STL-FW
providing the same convergence speed as fully-connected when $d_{max}=99$, but
with some residual variance because some nodes end up wth less than 99 edges.
Most of the benefits of STL-FW are obtained with the first 10 edges, with
additional edges providing only marginal benefits compared to fully-connected.
We thus chose to show all experiments of the main text with three budgets, a
small $d_{max}=2$, a medium $d_{max}=5$, and a large budget $d_{max}=10$.

\begin{figure}[h]
\centering
\begin{subfigure}{0.5\textwidth}
%  python $TOOLS/analyze/diff.py --rundirs all/2022-05-04-fully-connected-mnist  all/2022-05-05-fw-beta-100 all/2022-05-05-fw-beta-10 all/2022-05-05-fw-beta-4 all/2022-05-05-fw-beta-2 --pass-through | python $TOOLS/plot/convergence-batiste.py --yaxis validation-accuracy --save-figure ../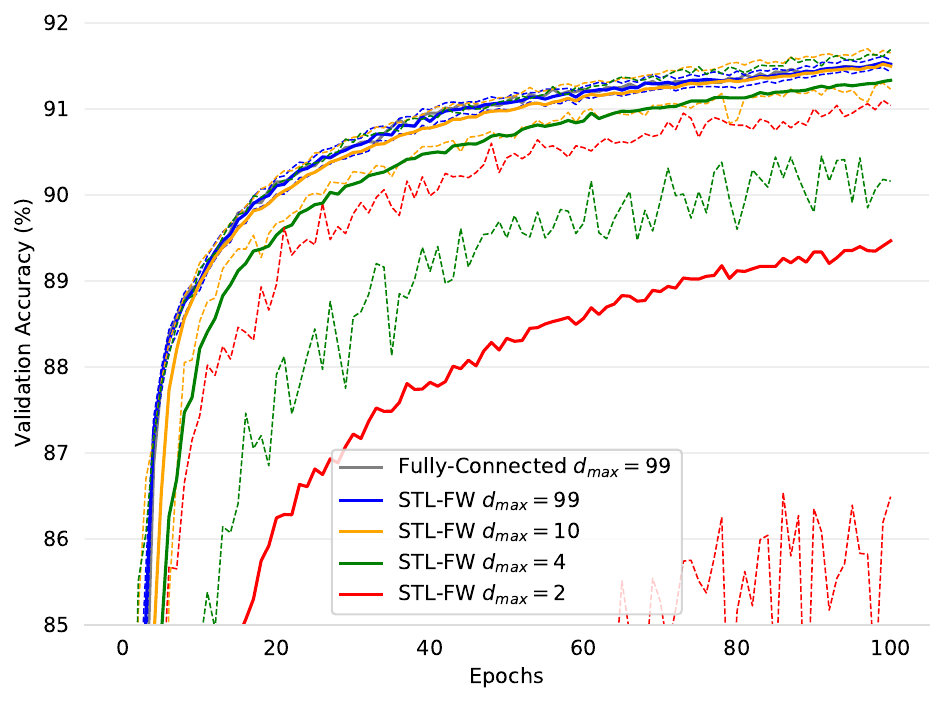 --add-min-max --font-size 10 --labels 'Fully-Connected $d_{max}=99$' 'STL-FW $d_{max}=99$' 'STL-FW $d_{max}=10$' 'STL-FW $d_{max}=4$' 'STL-FW $d_{max}=2$' --colors grey blue orange green red --ymin 85 --ymax 92
\centering
\includegraphics[height=\ratiob\linewidth]{figures/fw-effect-beta-mnist.pdf}
\caption{MNIST}
\end{subfigure}\hfill
\begin{subfigure}{0.5\textwidth}
%  python $TOOLS/analyze/diff.py --rundirs all/2022-05-16-fully-connected-MOMENTUM=0.0-cifar10 all/2022-05-16-fw-BETA=100,MOMENTUM=0.0-cifar10 all/2022-05-16-fw-BETA=10,MOMENTUM=0.0-cifar10 all/2022-05-16-fw-BETA=4,MOMENTUM=0.0-cifar10 all/2022-05-16-fw-BETA=2,MOMENTUM=0.0-cifar10 --pass-through | python $TOOLS/plot/convergence-batiste.py --yaxis validation-accuracy --save-figure ../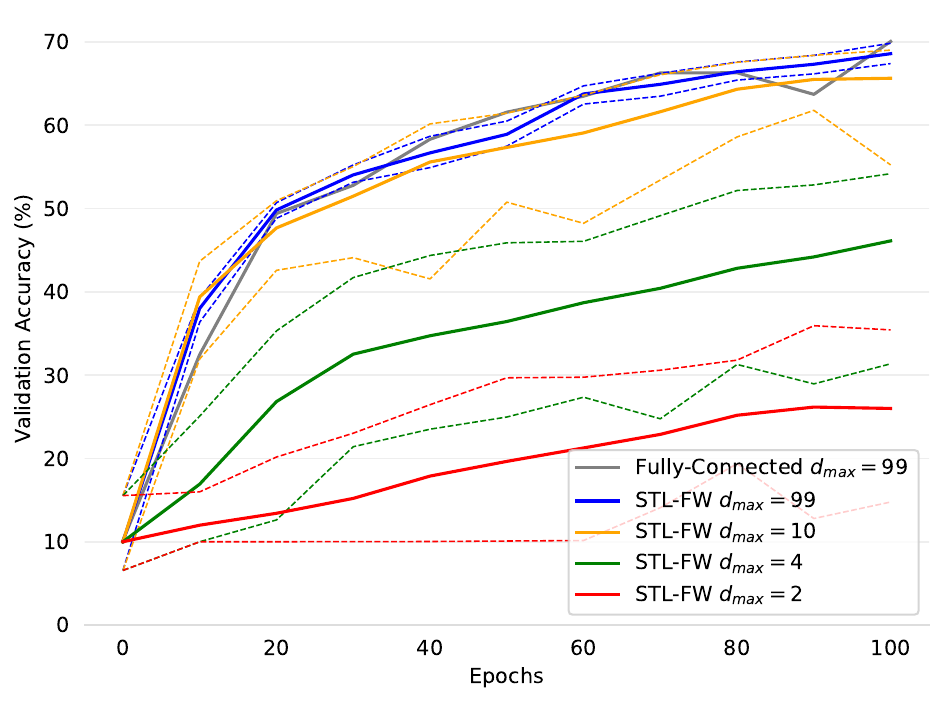 --add-min-max --font-size 10 --labels 'Fully-Connected $d_{max}=99$' 'STL-FW $d_{max}=99$' 'STL-FW $d_{max}=10$' 'STL-FW $d_{max}=4$' 'STL-FW $d_{max}=2$'   --colors grey blue orange green red
\centering
\includegraphics[height=\ratiob\linewidth]{figures/fw-effect-beta-no-mementum-cifar10.pdf}
\caption{CIFAR10}
\end{subfigure}
\caption{Effect of communication budget ($d_{max}$) of STL-FW on
the convergence speed of D-SGD with 100 nodes, $\lambda=0.1$.}
\label{fig:effect-of-dmax-on-stl-fw}
\end{figure}

Finally, for a small budget $d_{max}=2$, we had seen in
Figure~\ref{fig:real_experiments} in the main text that STL-FW did not provide
significant benefits compared to a random graph on CIFAR10.
Figure~\ref{fig:dmax-3} shows that as soon as $d_{max}=3$, STL-FW starts
providing benefits compared to a random topology on CIFAR10.

\begin{figure}[h]
\centering
\begin{subfigure}{0.5\textwidth}
%  python $TOOLS/analyze/diff.py --rundirs all/2022-05-04-fully-connected-mnist all/2022-05-17-fw-BETA=3,MOMENTUM=0.0-mnist all/2022-05-17-random-BETA=3,MOMENTUM=0.0-mnist  --pass-through | python $TOOLS/plot/convergence-batiste.py --yaxis validation-accuracy --save-figure ../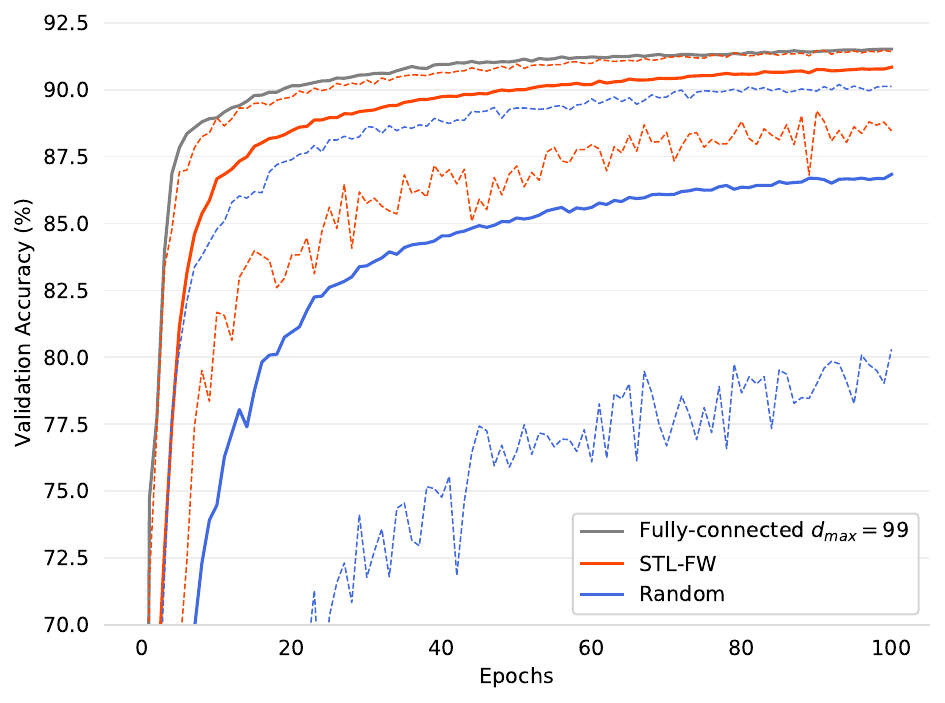 --add-min-max --font-size 10 --labels 'Fully-connected $d_{max}=99$' 'STL-FW' 'Random' --colors 'gray' 'orangered' 'royalblue'  --ymin 70 --ymax 92.5
\centering
\includegraphics[height=\ratiob\linewidth]{figures/fw-vs-random-beta-3-mnist.pdf}
\caption{MNIST}
\end{subfigure}\hfill
\begin{subfigure}{0.5\textwidth}
%  python $TOOLS/analyze/diff.py --rundirs all/2022-05-16-fully-connected-MOMENTUM=0.0-cifar10 all/2022-05-17-fw-BETA=3,MOMENTUM=0.0-cifar10 all/2022-05-17-random-BETA=3-cifar10  --pass-through | python $TOOLS/plot/convergence-batiste.py --yaxis validation-accuracy --save-figure ../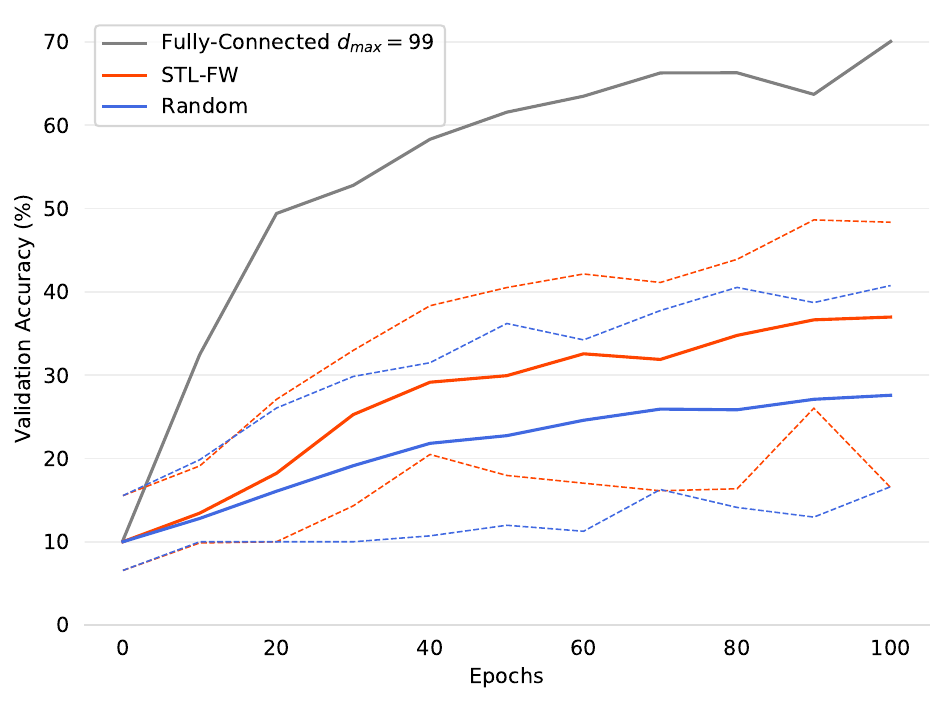 --add-min-max --font-size 10 --labels 'Fully-Connected $d_{max}=99$' 'STL-FW' 'Random' --colors 'grey' 'orangered' 'royalblue'
\centering
\includegraphics[height=\ratiob\linewidth]{figures/fw-vs-random-beta-3-cifar10.pdf}
\caption{CIFAR10  }
\end{subfigure}
\caption{Convergence speed of D-SGD with our STL-FW topology and a
random topology under small communication budget $d_{max}=3$.}
\label{fig:dmax-3}
\end{figure}

\end{document}